\newcommand{\defeq}{\coloneqq}
\let\R\relax
\let\E\relax
\newcommand{\hvx}{\hat{\vx}}
\newcommand{\hvy}{\hat{\vy}}
\newcommand{\contr}{\rho}
\newcommand{\supp}{\mathrm{supp}}
\newcommand{\declarecolor}[2]{\definecolor{#1}{RGB}{#2}\expandafter\newcommand\csname #1\endcsname[1]{\textcolor{#1}{##1}}}
\definecolor{mydarkblue}{rgb}{0,0.08,0.45}
\def\va{{\bm{a}}}
\def\vb{{\bm{b}}}
\def\vc{{\bm{c}}}
\def\ve{{\bm{e}}}
\def\vp{{\bm{p}}}
\def\vx{{\bm{x}}}
\def\vy{{\bm{y}}}
\def\vz{{\bm{z}}}
\DeclareMathAlphabet{\mathsfit}{\encodingdefault}{\sfdefault}{m}{sl}
\SetMathAlphabet{\mathsfit}{bold}{\encodingdefault}{\sfdefault}{bx}{n}
\newcommand{\calD}{\ensuremath{\mathcal{D}}}
\newcommand{\calX}{\ensuremath{\mathcal{X}}}
\newcommand{\calZ}{\ensuremath{\mathcal{Z}}}
\newcommand{\vxstar}{\vx^*}
\newcommand{\E}{\mathbb{E}}
\newcommand{\R}{\mathbb{R}}
\renewcommand{\bar}[1]{\overline{#1}}
\newcommand{\norm}[1]{\left\| #1 \right\|}
\DeclareMathOperator*{\argmax}{argmax}
\DeclareMathOperator*{\argmin}{argmin}
\newcommand{\TwoDSperner}{{\textsc{2D-Sperner}} }
\newcommand{\epsilonThickBrou}{{\textsc{$\epsilon$-ThickBrouwer}} }
\newcommand{\localmaxcut}{{\textsc{LocalMaxCut}} }
\newcommand{\maxcut}{{\textsc{MaxCut}} }
\definecolor{uiGreenLight}{HTML}{E8F5E9}  
\definecolor{uiGreenMedium}{HTML}{C8E6C9} 
\definecolor{uiGreenDark}{HTML}{81C784}   
\definecolor{uiRedLight}{HTML}{FFEBEE}    
\definecolor{uiRedDark}{HTML}{E57373}     
\definecolor{uiText}{HTML}{263238}        
\newcommand{\bbm}[1]{\mathbbm{#1}}
\renewcommand{\vec}[1]{\bm{#1}}
\newcommand{\mat}[1]{\mathbf{#1}}
\theoremstyle{plain}
\newtheorem{theorem}{Theorem}[section]
\newtheorem{lemma}[theorem]{Lemma}
\newtheorem{corollary}[theorem]{Corollary}
\newtheorem{proposition}[theorem]{Proposition}
\newtheorem{claim}[theorem]{Claim}
\newtheorem{assumption}[theorem]{Assumption}
\theoremstyle{definition}
\newtheorem{definition}[theorem]{Definition}
\theoremstyle{remark}
\newtheorem{remark}[theorem]{Remark}
\newtheorem{example}[theorem]{Example}
\title{On the Computational Complexity of Performative Prediction}
\author[1]{Ioannis Anagnostides\thanks{These authors contributed equally.}}
\author[2]{Rohan Chauhan}
\author[2]{Ioannis Panageas}
\author[1,3]{Tuomas Sandholm}
\author[2]{Jingming Yan\protect\footnotemark[1]}
\affil[1]{Carnegie Mellon University}
\affil[2]{University of California, Irvine}
\affil[3]{\small Additional affiliations: Strategy Robot, Inc., Strategic Machine, Inc., Optimized Markets, Inc.}
\affil[ ]{\texttt{\{ianagnos,sandholm\}}\texttt{@cs.cmu.edu}, \texttt{\{ipanagea,rmchauha,jingmy1\}}\texttt{@uci.edu}}
\begin{document}

\maketitle

\begin{abstract}
    Performative prediction captures the phenomenon where deploying a predictive model shifts the underlying data distribution. While simple retraining dynamics are known to converge linearly when the performative effects are weak ($\rho < 1$), the complexity in the regime $\rho > 1$ was hitherto open. In this paper, we establish a sharp phase transition: computing an $\epsilon$-performatively stable point is \PPAD-complete---and thus polynomial-time equivalent to Nash equilibria in general-sum games---even when $\rho = 1 + O(\epsilon)$. This intractability persists even in the ostensibly simple setting with a quadratic loss function and linear distribution shifts. One of our key technical contributions is to extend this \PPAD-hardness result to general convex domains, which is of broader interest in the complexity of variational inequalities. Finally, we address the special case of strategic classification, showing that computing a strategic local optimum is \PLS-hard.
\end{abstract}

\section{Introduction}

Machine learning models are typically developed under the assumption that the deployment environment is static: the underlying data distribution remains fixed regardless of the model's predictions. However, in many high-stakes social and economic domains, this premise is fundamentally flawed. As sociologists and economists have long observed, models are not merely cameras that passively record the markets, but engines that actively shape the reality they aim to model~\citep{Mackenzie08:Engine}. Similarly, in modern predictive tasks---ranging from credit scoring to spam filtering---the deployment of a predictive model triggers a shift in the underlying data distribution, as agents react strategically to the deployed classifier.

This ubiquitous phenomenon was formalized by~\citet{perdomo2021performativeprediction} as \emph{performative prediction}. In this setting, the model, parameterized by $\vx \in \calX$, induces a distribution $\calD(\vx)$ over the data. The decision-maker is thus facing a moving target: updating the model triggers a shift in the very objective they seek to minimize. This feedback loop yields two natural solution concepts. First, a \emph{performatively optimal} point minimizes the expected loss over the distribution it induces, $\E_{\vz \sim \calD(\vx)} [ \ell(\vx; \vz)]$ (\Cref{def:perf-opt}). In contrast, a \emph{performatively stable} point is minimizing expected loss for the \emph{fixed} distribution it induces (\Cref{def:perf-stab}).

Stability is a key desideratum, ensuring that the model remains invariant under retraining. Perhaps the most natural algorithmic approach to finding such points is by \emph{repeatedly} solving the risk minimization problem---a process coined \emph{repeated risk minimization (RRM)} by~\citet{perdomo2021performativeprediction}---until a fixed point is reached.

\begin{definition}[\citealp{perdomo2021performativeprediction}]
    \label{def:RRM}
   \emph{Repeated risk minimization (RRM)} refers to the procedure whereby, starting from an initial model parameterized by $\vx_0$, the following sequence of updates is performed.
   \begin{equation*}
       \vx_{t+1} = G(\vx_t) = \argmin_{\vx \in \calX} \E_{\vz \sim \calD(\vx_t)} [\ell(\vx; \vz)].
   \end{equation*}
\end{definition}

\citet{perdomo2021performativeprediction} showed that RRM is bound to converge linearly to a performatively stable point when $\contr \defeq L \beta/\alpha < 1$. Intuitively, this condition requires that the sensitivity of the distribution shift $L$ is small relative to the geometry of the loss landscape, governed by its smoothness $\beta$ and strong convexity $\alpha$ (\Cref{sec:prels} contains the precise definitions). On the other hand, \citet{perdomo2021performativeprediction} observed that RRM can fail to converge even when $ \contr = 1$, which means that performative effects are marginally stronger.

One might hope to circumvent this by simply increasing the regularization strength $\alpha$ to force the condition $\contr < 1$. However, such additional regularization can be destructive, potentially eliminating the meaningful equilibria.

Despite the significant progress in establishing improved convergence guarantees (\emph{e.g.},~\citealp{Khorsandi24:Tight,Mofakhami23:Performative}), the complexity of computing performatively stable points remains poorly understood when $\contr \geq 1$. In particular, a fundamental question arises:
\begin{quote}
    \centering
    \emph{Is the failure of RRM simply a limitation of specific retraining dynamics, or is identifying a performatively stable point intrinsically intractable in the presence of stronger performative effects?}
\end{quote}
The failure of RRM---and other algorithms such as repeated gradient descent and performative gradient descent~\citep{Izzo21:Learn}---when $\contr \geq 1$ does not by itself imply intractability. Indeed, as we show, there are efficient algorithms even when $\contr$ is slightly above 1.
    
\subsection{Our results}

We characterize the computational complexity of performative stability across the spectrum of $\contr$. To begin with, we establish the following result.

\begin{theorem}
    \label{theorem:main1}
    For any small enough $\epsilon > 0$, computing an $\epsilon$-performatively stable point for some $\contr = L \beta / \alpha \leq 1 + O(\epsilon)$ is \PPAD-hard.
\end{theorem}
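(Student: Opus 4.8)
The plan is to exploit the fact that an $\epsilon$-performatively stable point is essentially an approximate fixed point of the retraining operator $G$ of \Cref{def:RRM}, and that the hypothesis $\contr = L\beta/\alpha \le 1 + O(\epsilon)$ already implies that $G$ is $(1 + O(\epsilon))$-Lipschitz (this is exactly the quantity controlling the contraction of $G$ in the analysis that gives linear convergence when $\contr<1$). By $\alpha$-strong convexity of the induced risk $\vx' \mapsto \E_{\vz \sim \calD(\vx)}[\ell(\vx';\vz)]$, whose unique minimizer over $\calX$ is $G(\vx)$, any $\epsilon$-performatively stable $\vx$ satisfies $\|\vx - G(\vx)\| \le \sqrt{2\epsilon/\alpha}$; so up to rescaling $\epsilon$ it suffices to show that computing an $\epsilon$-fixed point of $G$ is \PPAD-hard, where $G$ ranges over retraining operators of performative instances with a quadratic loss, a linear distribution shift, and $\contr \le 1+O(\epsilon)$. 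I would reduce from the \PPAD-complete problem of finding a \emph{constant-precision} fixed point of an $O(1)$-Lipschitz self-map of a convex body --- a ``coarse Brouwer'' problem whose \PPAD-completeness follows from high-dimensional Sperner and the inapproximability of Brouwer, and which is polynomial-time equivalent to $\epsilon$-Nash in general-sum games.

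The first technical step is a \emph{critical-damping} reduction at the level of abstract Lipschitz maps. Given a hard instance $f \colon \calK \to \calK$ that is $\Lambda$-Lipschitz with $\Lambda = O(1)$ and for which locating a $\delta_0$-fixed point is \PPAD-hard for an absolute constant $\delta_0$, set $G_s \defeq (1-s)\,\mathrm{id} + s f$ with $s \defeq \epsilon/\delta_0$ (valid for every $\epsilon < \delta_0$). Then $G_s$ and $f$ have exactly the same fixed points, $\|G_s(\vx) - \vx\| = s\,\|f(\vx) - \vx\|$ so every $\epsilon$-fixed point of $G_s$ is a $\delta_0$-fixed point of $f$ and hence \PPAD-hard to find, while $\mathrm{Lip}(G_s) \le (1-s) + s\Lambda = 1 + s(\Lambda - 1) = 1 + O(\epsilon)$. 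Thus $G_s$ is a minuscule, fully controlled perturbation of the identity whose fixed points remain intractable.

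The second step is to realize $G_s$ as the retraining operator of an explicit performative instance over $\calX = \calK$. I would use a ridge-regularized linear-regression loss $\ell(\vx;(\va,y)) = \tfrac12(\langle \va, \vx\rangle - y)^2 + \tfrac{\alpha}{2}\|\vx\|^2$, which is jointly quadratic, $\alpha$-strongly convex in $\vx$, and $(\alpha + \max_{\va}\|\va\|^2)$-smooth, together with a \emph{linear} distribution shift (the feature mean, or a vector of mixture weights over finitely many fixed base populations, moving affinely in $\vx$). For such an instance the retraining step has the closed form $G(\vx) = \Pi_{\calX}\!\big( (\E_{\calD(\vx)}[\va\va^\top] + \alpha I)^{-1}\, \E_{\calD(\vx)}[\va\,y]\big)$, in which the matrix inverse turns $G$ into a vector of rational functions of $\vx$ whose numerator/denominator polynomials are dictated by the linear shift; composed with the clipping supplied by $\Pi_{\calX}$, this is expressive enough to implement the piecewise-linear $G_s$ using the standard toolkit for encoding arithmetic circuits as continuous Brouwer functions, and the fact that $G_s$ is only an $O(\epsilon)$-perturbation of the identity dovetails with the near-affine shape a large ridge parameter forces on $G$. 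I would then choose the shift gain, $\max_{\va}\|\va\|^2$, and $\alpha$ so that $\contr = L\beta/\alpha$ is pinned to the $1 + O(\epsilon)$ bound coming from the damping step --- e.g.\ $\alpha \gg \max_{\va}\|\va\|^2$ drives $\beta/\alpha$ to $1$ and keeps $\contr$ itself (not merely $\mathrm{Lip}(G)$, which $\contr$ only upper-bounds) within $1 + O(\epsilon)$ --- and verify the whole construction is polynomial-time computable.

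The main obstacle is this second step: reconciling \emph{expressiveness} --- the operator $G$ must encode a \PPAD-hard fixed-point instance --- with the \emph{near-critical contraction constraint} --- which forces $G$ to be $(1+O(\epsilon))$-Lipschitz, hence a tiny perturbation of a nonexpansive affine map --- all while obeying the structural restrictions ``quadratic loss'' and ``linear shift.'' The damping reduction resolves this tension abstractly, but pushing it through the performative gadget (realizing the small rational perturbation exactly via a legitimate loss and a genuinely linear shift, and controlling the product $L\beta/\alpha$ rather than just $\mathrm{Lip}(G)$) is where the real work lies; by comparison, accounting for the $\sqrt{2\epsilon/\alpha}$ slack between the ``$\epsilon$-minimizer of induced risk'' definition of stability and an ``$\epsilon$-fixed point of $G$'' is routine.
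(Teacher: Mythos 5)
There is a genuine gap, and it is quantitative before it is structural. Your two steps are mutually inconsistent in their parameters. Passing from $\epsilon$-performative stability to a fixed-point gap of $G$ via $\alpha$-strong convexity only gives $\|\vx - G(\vx)\|_2 \le \sqrt{2\epsilon/\alpha}$ (cf.\ \Cref{lemma:direc1}), i.e.\ a $\Theta(\sqrt{\epsilon})$-fixed point, not an $\epsilon$-fixed point. With your damping $s = \epsilon/\delta_0$ the recovered guarantee is $\|f(\vx)-\vx\| \le \sqrt{2\epsilon/\alpha}/s = \delta_0\sqrt{2/\epsilon}/\sqrt{\alpha}$, which is vacuous as $\epsilon \to 0$; to actually recover a $\delta_0$-fixed point of $f$ you must take $s = \Theta(\sqrt{\epsilon})$, but then $\mathrm{Lip}(G_s)$, and hence the sensitivity $L$ and $\contr$, is $1+\Theta(\sqrt{\epsilon})$, so you only prove hardness for $\contr \le 1 + O(\sqrt{\epsilon})$ --- strictly weaker than the stated $1+O(\epsilon)$. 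The slack you dismiss as ``routine'' is exactly what must be avoided. The paper never routes through the fixed-point gap: with the loss $\ell(\vx;\vz)=\tfrac12\|\vx\|_2^2 - \vx^\top\vz$ and $\calD(\vx)$ a point mass at $g(\vx)$, the first-order notion in \Cref{def:approx-perfstab} is \emph{literally} $\langle \vx-\vxstar, \vxstar - g(\vxstar)\rangle \ge -\epsilon$, so choosing $g = \mathrm{id} - \frac{\epsilon}{\epsilon'}F_0$ rescales the VI error \emph{linearly} ($\epsilon \mapsto \epsilon'$) while $\alpha=\beta=1$ and $\contr = \mathrm{Lip}(g) \le 1+\frac{\epsilon}{\epsilon'}L$ (\Cref{theorem:VIred,theorem:PPAD_complete}); the $\epsilon$-fixed-point phrasing is reserved for a separate statement (\Cref{theorem:FPred}) precisely to avoid this square-root mismatch.

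The second gap is your realization step, which is asserted rather than proven and is in tension with itself. If the moments of $\calD(\vx)$ are affine in $\vx$ and you take the ridge parameter $\alpha \gg \max_{\va}\|\va\|_2^2$ to pin $\beta/\alpha$ near $1$, then $G(\vx)=\Pi_{\calX}\bigl((\E[\va\va^\top]+\alpha \mat{I})^{-1}\E[\va y]\bigr)$ is a single near-affine rational map; circuit-to-Brouwer encodings work by \emph{composing} gates, and there is no mechanism for such a one-shot map to simulate a polynomial-size circuit computing a hard (non-affine) Brouwer/fixed-point instance, so the ``expressiveness'' claim does not go through. The paper needs no such expressiveness: for \Cref{theorem:main1} itself the shift may simply be a point mass at an arbitrary Lipschitz, circuit-defined $g$ (the linear-shift restriction is something you imposed on yourself), and the refinement with an affine shift (\Cref{theorem:PPAD_complete}) is obtained not by encoding a general Brouwer map into an affine operator but by reducing from the constant-precision \emph{affine} VI problem of \Cref{lemma:inapproximation}, whose hard instances are already affine with $\|\mat{A}\|_1,\|\mat{A}\|_\infty \le 1$. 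Your proposal is missing that ingredient, and without it the conflict you yourself flag --- between encoding a \PPAD-hard map and keeping $G$ a $(1+O(\epsilon))$-perturbation of an affine map under a quadratic loss and linear shift --- remains unresolved.
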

This means that computing performatively stable points is as hard as finding Nash equilibria in general-sum games~\citep{Daskalakis09:The,Chen09:Settling}, which is unlikely to admit efficient algorithms. There is a basic trade off in~\Cref{theorem:main1} worth highlighting: \PPAD-hardness persists even if one is content with a crude approximation $\epsilon = \Theta(1)$, but that only precludes instances in which $L \beta / \alpha$ is some additive constant larger than $1$. At the other end of the spectrum, \PPAD-hardness kicks in even when $L \beta / \alpha - 1$ is exponentially small, as long as the desired precision is also small enough. 

We also show that an $\epsilon$-performatively stable point can be computed in $\poly(d, \log(1/\epsilon))$ time when $\contr = 1 + O_\epsilon(\epsilon^4)$ (\Cref{prop:exp-perf}). This improves upon repeated risk minimization (RRM) and other natural algorithms. While RRM converges for $\contr < 1$, the number of iterations scales with $\log^{-1} (\nicefrac{1}{\contr}) \approx \nicefrac{1}{1-\contr}$ when $\contr \approx 1$, which blows up when $\contr$ approaches 1. Moreover, a recent result by~\citet{Diakonikolas25:Pushing} implies a $\poly(1/\epsilon)$ algorithm when $\contr \leq 1 + O_\epsilon(\epsilon)$, matching~\Cref{theorem:main1} in the order of $\epsilon$. As a result, we find the transition from \PPAD-hardness to tractability to be particularly acute.

Furthermore, we establish unconditional, information-theoretic lower bounds, showing that any algorithm requires exponentially many ERM evaluations to find a performatively stable point (\Cref{cor:query}). This holds whether one uses repeated risk minimization or any other more sophisticated algorithm.

From a technical standpoint, our hardness results are established through simple, canonical reductions that encode any variational inequality or fixed point problem as an instance of performative stability (\Cref{theorem:VIred,theorem:FPred}). In particular, we show that intractability persists even in the ostensibly simple setting comprising a quadratic loss and an affine distribution shift (\Cref{theorem:PPAD_complete}). 


\paragraph{The nonexpansive regime} We go on to generalize the setup of performative prediction to general norms, extending the $\ell_2$ contraction argument of~\citet{perdomo2021performativeprediction} (\Cref{sec:generalnorms}). Interestingly, we observe that in this generalized setting, intractability barriers emerge \emph{even in the contractive regime}. Specifically, we show that finding performatively stable points would imply a complexity theory breakthrough (\Cref{prop:SSG}).

\paragraph{The role of the constraint set} An important component of our reduction is the geometry of the domain. Existing hardness results for variational inequalities and fixed points typically rely on the hypercube $\calX = [0, 1]^d$. However, this does not always mesh well with machine learning applications; for example, in the context of performative prediction, training a classifier constrained on the $\ell_2$ ball instead is perhaps more natural~\citep{hinton2012improving,Goodfellow-et-al-2016}. Surprisingly, the complexity of VIs and fixed points over general constraint sets has received limited attention, with some exceptions (\Cref{sec:related}). We fill this gap by showing that \PPAD-hardness persists under any reasonable convex constraint set (\Cref{theorem:PPAD_hardness_convex_set}).

\paragraph{Strategic classification} Finally, we turn to \emph{strategic classification}~\citep{Hardt16:Strategic}, which falls within the scope of performative prediction. We show that finding a \emph{local} optimum of the performative risk---under single-label updates---is \PLS-hard (\Cref{theorem:PLS_hardness}); $\PLS$ captures the complexity of (presumably) hard local optimization problems such as local max-cut. This complements the \NP-hardness of~\citet{Hardt16:Strategic} concerning \emph{global} performative optimality, and further highlights the intractability of attaining performative optimality. It shows that local search heuristics---often employed to sidestep \NP-hardness---can fail to efficiently identify stable points.
\subsection{Related work}
\label{sec:related}

Following the foundational work of~\citet{perdomo2021performativeprediction}, significant attention has been devoted to the convergence of retraining dynamics. \citet{Mendler-Dunner20:Stochastic} analyzed stochastic variants of RRM, distinguishing between ``greedy'' and ``lazy'' deployment. In their terminology, greedy deployment releases the new model at every step, whereas lazy deployment accumulates multiple gradient updates before releasing a new model. \citet{Zrnic21:Who} further refined those dynamics by studying two-timescale algorithms, showing that separating the timescales of model updates and the resulting distribution shifts can stabilize learning. \citet{Miller21:Echo} and~\citet{Izzo21:Learn} developed derivative-free methods to optimize the performative risk, targeting optimality rather than just stability. For more recent pointers, we refer to \citet{Khorsandi24:Tight,Mofakhami23:Performative}, and references therein. There has been some work addressing misspecification in the underlying distribution map~\citep{Xue24:Distributionally}, but our paper focuses on the standard model.

\paragraph{Strategic classification} Performative prediction encompasses the framework of \emph{strategic classification}~\citep{Hardt16:Strategic,Chen18:Strategyproof,Chen20:Learning,Dong18:Strategic}, where the distribution shift arises from individual agents rationally best-responding to the classifier. The performative prediction framework abstracts the game-theoretic interaction into the distribution map.

For a comprehensive overview of the emerging field of performative prediction, we refer to~\citet{Hardt25:Performative}. Additional related work appears in~\Cref{sec:furtherrelated}.


\section{Preliminaries}
\label{sec:prels}

\paragraph{Notation} For $\vx, \vx' \in \R^d$, we use $\langle \vx, \vx' \rangle$ for their inner product. $\|\vx \|_2 = \sqrt{ \langle \vx, \vx \rangle } $ denotes the Euclidean norm. $\|\cdot\|$ denotes an arbitrary norm. $\calX$ is a convex and compact subset of $\R^d$ that represents the parameter space of the decision-maker. For $\vx \in \calX$, $\calD(\vx)$ denotes the distribution induced by $\vx$. We will use the notation $\calZ \defeq \bigcup_{\vx \in \calX} \supp(\calD(\vx))$. For the sake of exposition, we sometimes write $O_\epsilon(\cdot)$ to denote the dependence only on the parameter $\epsilon$.

Performative prediction centers on the problem
\begin{align} \label{eq:optimization_objective}
    \min_{\vx \in \calX} \mathbb{E}_{\vz \sim \calD(\vx)}\left[ \ell(\vx; \vz)\right].
\end{align}
We now formally define performative optimality and performative stability.

\begin{definition}[Performative optimality; \citealp{perdomo2021performativeprediction}]
    \label{def:perf-opt}
    A point $\vxstar \in \calX$ is \emph{performatively optimal} if 
    \begin{align*}
        \vxstar \in \argmin_{\vx \in \calX} \mathbb{E}_{\vz \sim \calD(\vx)}\left[\ell(\vx; \vz)\right].
    \end{align*}
\end{definition}

Performatively optimal points correspond to \emph{Stackelberg equilibria}~\citep{Conitzer06:Computing,VonStackelberg34:Marktform}, as the decision-maker commits to a model, anticipating how the distribution $\calD$ will shift in response. In~\Cref{sec:strat-class}, we introduce a local version of~\Cref{def:perf-opt} in the context of strategic classification.

\begin{definition}[Performative stability; \citealp{perdomo2021performativeprediction}]
    \label{def:perf-stab}
    A point $\vxstar \in \calX$ is \emph{performatively stable} if
    \begin{align*}
        \vxstar \in \argmin_{\vx \in \calX} \mathbb{E}_{\vz \sim \calD(\vxstar)}\left[\ell(\vx; \vz)\right].
    \end{align*}
\end{definition}

Performatively stable points exist under mild assumptions~\citep{perdomo2021performativeprediction}. They are in correspondence to \emph{Nash equilibria}, as the decision-maker selects a model that is optimal for the \emph{current} distribution. Our complexity results leverage this connection to shed light on the complexity of performative prediction.

The following assumptions are made concerning the loss function and the magnitude of the distribution shift.

\begin{assumption}
    \label{ass:l2}
    Let $\ell(\vx; \vz)$ be the loss function and $\calD(\vx)$ the distribution on $\calZ$ induced by $\vx \in \calX$.
    \begin{itemize}[noitemsep]
        \item (strong convexity) $\ell(\vx; \vz)$ is $\alpha$-strongly convex with respect to $\|\cdot\|_2$:
        \begin{equation*}
            \ell(\vx; \vz) \ge \ell(\vx' ; \vz) + \langle \nabla_{\vx} \ell(\vx'; \vz), \vx - \vx' \rangle + \frac{\alpha}{2} \|\vx - \vx' \|_2^2
        \end{equation*}
        for any $\vx, \vx' \in \calX$ and $\vz \in \calZ$.
        \item (smoothness) $\ell(\vx; \vz)$ is $\beta$(-jointly) smooth if
        \begin{equation*}
            \|\nabla_{\vx} \ell(\vx; \vz) - \nabla_\vx \ell(\vx'; \vz) \|_2 \leq \beta \|\vx - \vx' \|_2
        \end{equation*}
        and
        \begin{equation*}
            \|\nabla_{\vx} \ell(\vx; \vz) - \nabla_\vx \ell(\vx; \vz') \|_2 \leq \beta \|\vz - \vz' \|_2
        \end{equation*}
        for any $\vx, \vx' \in \calX$ and $\vz, \vz' \in \calZ$.
        \item (sensitivity) $\calD$ is $L$-sensitive if
        \begin{equation*}
            W_1(\calD(\vx), \calD(\vx')) \leq L \|\vx - \vx' \|_2
        \end{equation*}
        for any $\vx, \vx' \in \calX$, where $W_1$ denotes the Wasserstein-1 distance, or earth mover's distance.
    \end{itemize}
\end{assumption}
We define $\contr \defeq L \beta/\alpha$. In~\Cref{sec:generalnorms}, we also generalize the setup of~\Cref{ass:l2} to general norms.

We rely on the following notion of approximation.

\begin{definition}
    \label{def:approx-perfstab}
    A point $\vxstar \in \calX$ is (first-order) \emph{$\epsilon$-performatively stable} if
\begin{equation*}   
    \left\langle\vx - \vxstar, \mathbb{E}_{\vz \sim \calD(\vxstar)}\left[\nabla_{\vx} \ell(\vxstar; \vz)\right]\right\rangle \geq - \epsilon \quad \forall \vx \in \calX.
\end{equation*}
\end{definition}

When the loss function is convex, the definition above coincides with~\Cref{def:perf-stab} (\Cref{claim:first-order-stable-equivalence}). In applications where the loss function is nonconvex~\citep{Li24:Stochastic}, \Cref{def:approx-perfstab} is the natural local relaxation of~\Cref{def:perf-stab}. Since computing (first-order) performatively stable points lies in $\PPAD$ (\Cref{cor:PPAD-memb}), our hardness result establishes \PPAD-completeness for that notion.

Another natural way to measure the approximation error is through the fixed point gap $\|\vxstar - G(\vxstar) \|_2$, where $G$ is the RRM map (\Cref{def:RRM}); as we formalize in~\Cref{lemma:direc1,lemma:direc2}, those notions are polynomially related.
 
\section{Complexity of performatively stable points}
\label{sec:complexity}

In this section, we characterize the complexity of performatively stable points. 

\paragraph{A hard class of problems} We consider the following class of performative prediction instances.
\begin{align} 
    & \min_{\vx \in \calX} \left\{ \ell(\vx; \vz) \defeq \frac{1}{2} \| \vx \|_2^2 - \vx^\top \vz \right\}, \label{eq:strongly-convex-objective}\\
    & \text{where } \vz = g(\vx).\label{eq:g-def}
\end{align}
We assume that $g$ is $L$-Lipschitz continuous, so that $\|g(\vx) - g(\vx') \|_2 \leq L \|\vx - \vx' \|_2$ for any $\vx, \vx' \in \calX$. The function $\ell$ defined in~\eqref{eq:strongly-convex-objective} is $1$-strongly convex in $\vx$ and $1$-jointly smooth (per~\Cref{ass:l2}), while the sensitivity of $\calD(\vx)$ is $L$. So, $\contr = L$ in this class.

The underlying distribution above is a singleton supported on $g(\vx)$. (The Wasserstein-1 distance between two point mass distributions is simply the distance between the two points.) Our reductions work more broadly for any distribution $\calD(\vx)$ such that $\E_{ \vz \sim \calD(\vx)} \vz = g(\vx)$; this could be, for example, a more well-behaved Gaussian distribution. This holds because $\E_{\vz \sim \calD(\vx)}[ \ell(\vx; \vz) ] = \E_{\vz \sim \calD(\vx)} [ \frac{1}{2} \|\vx \|^2 - \vx^\top \vz ] = \frac{1}{2} \|\vx \|^2 - \vx^\top \E_{\vz \sim \calD(\vx)} [\vz]$, by definition of the loss $\ell$ in~\eqref{eq:optimization_objective}. In other words, the choice of distribution does not alleviate the hardness of the problem.

\subsection{Encoding VIs and fixed points}

We now show how a suitable choice of $g$ allows us to encode hard optimization problems. First, we consider a \emph{variational inequality (VI)} problem given by a mapping $F : \calX \to \R^d$. An $\epsilon$-approximate VI solution is a point $\vxstar \in \calX$ such that $\langle \vx - \vxstar, F(\vxstar) \rangle \geq - \epsilon$ for all $\vx \in \calX$. We observe that by selecting $g: \vx \mapsto \vx - F(\vx)$, an $\epsilon$-performatively stable point of~\eqref{eq:strongly-convex-objective}-\eqref{eq:g-def} yields an $\epsilon$-approximate solution to the VI problem. Furthermore, considering $g : \vx \mapsto \vx - \bar{F}(\vx)$ for a damped (rescaled) mapping $\bar{F}$ makes the Lipschitz constant of $g$---the sensitivity of the distribution---approach 1 while rescaling the approximation factors between the two problems. We summarize this guarantee below.

\begin{proposition}[From VIs to performative stability]
    \label{theorem:VIred}
    For any $\epsilon > 0$ and $\epsilon' > 0$,
    computing an $\epsilon'$-approximate VI solution of an $L$-Lipschitz mapping $F$ reduces to computing $\epsilon$-performatively stable points with $\contr \leq 1 + \frac{\epsilon}{\epsilon'} L$.
\end{proposition}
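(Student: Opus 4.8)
The plan is to make the reduction sketched just before the statement quantitatively precise, tracking exactly how a rescaling of the VI mapping $F$ converts approximation quality on one side into approximation quality (and a contraction bound $\contr$) on the other. First I would fix the rescaling: given the $L$-Lipschitz mapping $F : \calX \to \R^d$ and the two target accuracies $\epsilon, \epsilon' > 0$, set the damping factor $\lambda \defeq \epsilon / \epsilon'$ and define $\bar{F} \defeq \lambda F$, so that $\bar{F}$ is $\lambda L$-Lipschitz. Then take the performative-prediction instance \eqref{eq:strongly-convex-objective}–\eqref{eq:g-def} with $g : \vx \mapsto \vx - \bar{F}(\vx)$. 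The Lipschitz constant of $g$ is at most $1 + \lambda L = 1 + \frac{\epsilon}{\epsilon'} L$ (via the triangle inequality applied to $\vx \mapsto \vx$ and $\vx \mapsto -\bar F(\vx)$), and since for this loss class $\contr$ equals the sensitivity $L$ of the distribution map, which here is the Lipschitz constant of $g$, we get $\contr \le 1 + \frac{\epsilon}{\epsilon'} L$, as required. (If one wants $g$ to map $\calX$ into a fixed bounded ambient set that is allowed for the loss class, one can also note that $\calX$ is compact and $\bar F$ is continuous, so $g(\calX)$ is bounded; this is immaterial to the inequality but worth a remark.)

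Next I would verify the solution translation. By construction, the gradient of the loss in \eqref{eq:strongly-convex-objective} is $\nabla_\vx \ell(\vx; \vz) = \vx - \vz$, so at a point $\vxstar$ the induced distribution has mean $g(\vxstar) = \vxstar - \bar F(\vxstar)$ and hence
\begin{equation*}
    \E_{\vz \sim \calD(\vxstar)}\!\left[\nabla_\vx \ell(\vxstar; \vz)\right] = \vxstar - g(\vxstar) = \bar F(\vxstar) = \lambda F(\vxstar).
\end{equation*}
Therefore the first-order $\epsilon$-performative-stability condition of \Cref{def:approx-perfstab}, namely $\langle \vx - \vxstar, \lambda F(\vxstar)\rangle \ge -\epsilon$ for all $\vx \in \calX$, is equivalent to $\langle \vx - \vxstar, F(\vxstar)\rangle \ge -\epsilon/\lambda = -\epsilon'$ for all $\vx \in \calX$, which is exactly the statement that $\vxstar$ is an $\epsilon'$-approximate VI solution for $F$. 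So any $\epsilon$-performatively stable point of the constructed instance is an $\epsilon'$-approximate VI solution, and the reduction is clearly polynomial-time (it is just a rescaling plus the substitution $g = \mathrm{id} - \lambda F$, computable from an oracle/description of $F$).

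There is no deep obstacle here — the argument is essentially a one-line identity for the gradient combined with a scaling bookkeeping — so the ``main obstacle'' is really a matter of care rather than difficulty: making sure the ambient domain/range conditions of the loss class in \eqref{eq:strongly-convex-objective}–\eqref{eq:g-def} are met by $g = \mathrm{id} - \lambda F$ (boundedness, and the fact that the distribution map need only satisfy $\E_{\vz\sim\calD(\vx)}\vz = g(\vx)$, which the paragraph before the statement already grants), and stating the Lipschitz bound of $g$ with the correct constant $1 + \frac{\epsilon}{\epsilon'}L$ rather than, say, $\max(1, \lambda L)$ or $1 + \lambda L$ under a different normalization. I would close by noting the two natural regimes this bound captures: taking $\epsilon' = \Theta(\epsilon)$ keeps $\contr = 1 + \Theta(L) = \Theta(1)$ while preserving the approximation up to a constant, whereas taking $\epsilon'$ much larger than $\epsilon$ (i.e. heavy damping) pushes $\contr$ arbitrarily close to $1$ at the cost of only asking for a cruder VI solution — which is precisely the trade-off exploited in \Cref{theorem:main1}.
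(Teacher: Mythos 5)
Your proposal is correct and matches the paper's own reduction: the paper likewise takes $g = \mathrm{id} - \lambda F$ with $\lambda = \epsilon/\epsilon'$, uses that for the loss in \eqref{eq:strongly-convex-objective} the expected gradient at $\vxstar$ is $\vxstar - g(\vxstar) = \lambda F(\vxstar)$ so that $\epsilon$-performative stability rescales to an $\epsilon'$-approximate VI solution, and bounds the sensitivity of $g$ by $1 + \frac{\epsilon}{\epsilon'}L$ (this is exactly the calculation instantiated in the proof of \Cref{theorem:PPAD_complete}). No gaps; your bookkeeping of the Lipschitz constant and of $\contr = L$ for this loss class is the same as the paper's.
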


We instantiate and sharpen this reduction in~\Cref{theorem:PPAD_complete} for the class of affine VI problems. First, we provide a similar reduction for fixed point problems. Here, we are given a continuous function $T : \calX \to \calX$ and the problem is to find an $\epsilon$-fixed point thereof. We observe that by selecting $g: \vx \mapsto (1 - \lambda) \vx + \lambda T(\vx)$ for $\lambda = \epsilon/\epsilon'$, we arrive at the following theorem.

\begin{proposition}[From fixed points to performative stability]
    \label{theorem:FPred}
    For any $\epsilon > 0$, computing an $\epsilon'$-fixed point of an $L$-Lipschitz continuous mapping reduces to computing an $\epsilon$-fixed point of the RRM map $G$ (\Cref{def:RRM}) with $\contr \leq 1 + \frac{\epsilon}{\epsilon'} L$.
\end{proposition}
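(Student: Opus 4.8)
The plan is to realize $T$ inside the hard class \eqref{eq:strongly-convex-objective}--\eqref{eq:g-def} by choosing the distribution map so that the induced RRM map becomes a damped copy of $T$; verifying the stated bound on $\contr$ is then a one-line Lipschitz estimate. Throughout I assume $\epsilon \le \epsilon'$, which is the only meaningful regime (an $\epsilon$-fixed point of $G$ cannot manufacture a more accurate $\epsilon'$-fixed point of $T$ when $\epsilon' < \epsilon$).

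First I would compute the RRM map for the class \eqref{eq:strongly-convex-objective}--\eqref{eq:g-def}. For the loss $\ell(\vx;\vz) = \tfrac12\|\vx\|_2^2 - \vx^\top\vz$ and a distribution $\calD(\vx_t)$ with mean $g(\vx_t)$, one step of RRM solves $\min_{\vx\in\calX}\{\tfrac12\|\vx\|_2^2 - \langle \vx, g(\vx_t)\rangle\} = \min_{\vx\in\calX}\tfrac12\|\vx - g(\vx_t)\|_2^2 + \mathrm{const}$, so $G(\vx_t) = \Pi_{\calX}(g(\vx_t))$, the Euclidean projection of $g(\vx_t)$ onto $\calX$. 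In particular, whenever $g$ maps $\calX$ into $\calX$, we simply have $G = g$ on $\calX$.

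Next I would take $\lambda \defeq \epsilon/\epsilon' \in (0,1]$ and $g(\vx) \defeq (1-\lambda)\vx + \lambda T(\vx)$, exactly the choice announced before the statement. Since $T(\vx)\in\calX$ and $\calX$ is convex, $g(\vx)$ is a convex combination of points of $\calX$, hence $g(\vx)\in\calX$, and so $G = g$ on $\calX$ by the previous step. The sensitivity of $\calD$ equals the Lipschitz constant of $g$ (the distribution is a point mass and the $W_1$ distance between point masses is the distance between their atoms), and $\|g(\vx) - g(\vx')\|_2 \le (1-\lambda)\|\vx-\vx'\|_2 + \lambda L\|\vx-\vx'\|_2 = \bigl(1+\lambda(L-1)\bigr)\|\vx-\vx'\|_2$; as the loss is $1$-strongly convex and $1$-jointly smooth, this gives $\contr = 1+\lambda(L-1) \le 1 + \lambda L = 1 + \tfrac{\epsilon}{\epsilon'}L$ (the last inequality also covers $L<1$). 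Finally, for any $\vxstar$ that is an $\epsilon$-fixed point of $G$ we have $\|\vxstar - T(\vxstar)\|_2 = \tfrac1\lambda\|\vxstar - g(\vxstar)\|_2 = \tfrac1\lambda\|\vxstar - G(\vxstar)\|_2 \le \epsilon/\lambda = \epsilon'$, so $\vxstar$ is an $\epsilon'$-fixed point of $T$; the instance $(\ell,\calD)$ is constructed in polynomial time from a description of $T$, and the solution is returned unchanged, so this is a valid polynomial-time reduction. (Brouwer's theorem on the compact convex set $\calX$ guarantees both problems are total.)

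The argument is essentially bookkeeping; the one place that needs a moment's care is ensuring $g(\calX)\subseteq\calX$, so that the projection in $G$ vanishes and $G$ collapses to the transparent damped map $g$ — this is precisely where convexity of $\calX$ and $T:\calX\to\calX$ enter — together with remembering that the damping parameter must lie in $(0,1]$, which forces $\lambda = \epsilon/\epsilon'$ and hence the displayed bound on $\contr$. (Contrast \Cref{theorem:VIred}: there $g:\vx\mapsto\vx-\bar F(\vx)$ need not map into $\calX$, which is why that reduction is phrased via the first-order VI notion of \Cref{def:approx-perfstab} rather than via the RRM-map fixed-point gap used here.)
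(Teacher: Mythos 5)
Your proposal is correct and follows essentially the same route as the paper: the paper's argument is exactly the observation that choosing $g(\vx) = (1-\lambda)\vx + \lambda T(\vx)$ with $\lambda = \epsilon/\epsilon'$ makes the RRM map coincide with $g$ on this class (so an $\epsilon$-fixed point of $G$ rescales to an $\epsilon'$-fixed point of $T$), with the Lipschitz bound $1+\lambda(L-1)\le 1+\frac{\epsilon}{\epsilon'}L$ giving the claimed bound on $\contr$. Your added bookkeeping (the projection view of $G$, convexity ensuring $g(\calX)\subseteq\calX$, and the restriction $\lambda\in(0,1]$) just makes explicit details the paper leaves implicit.
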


It is well-known that fixed points can be reduced to VIs and vice versa, but the reduction above is particularly direct. We will use it to prove query lower bounds through the result of~\citet{Hirsch89:Exponential}.

We now leverage~\Cref{theorem:VIred} to establish \PPAD-hardness for a particularly simple class of problems: the distribution shift is given by an affine function and the constraint set $\calX$ is the hypercube. We begin by extracting a useful result from~\citet[Theorem 4.4]{bernasconi2024role} concerning the complexity of affine VIs on the hypercube. A closely related result was shown by~\citet{Rubinstein15:Inapproximability} in the context of computing approximate Nash equilibria in polymatrix, binary-action games. For our purposes, it is convenient to use the lemma as given by~\citet{bernasconi2024role} because of the assumed matrix bounds. For a matrix $\mat{A} \in \mathbb{R}^{d \times d}$, we denote by $\|\mat{A}\|_1$ its maximum absolute column sum and by $\|\mat{A}\|_\infty$ its maximum absolute row sum. Our proof uses the fact that the spectral norm $\|\mat{A} \|_2$ satisfies the inequality $\|\mat{A} \|_2 \leq \sqrt{\|\mat{A}\|_1 \|\mat{A}\|_\infty }$.

\begin{lemma}[\citealp{bernasconi2024role}] \label{lemma:inapproximation}
    It is \PPAD-complete to find a point $\vxstar \in [0, 1]^d$ such that for all $\vx \in [0, 1]^d$,
    \begin{align*}
        \left\langle\vx - \vx^*, \mat{A}\vx^* + \vb\right\rangle \geq - \epsilon'.
    \end{align*}
    This holds even when $\epsilon' > 0$ is an absolute constant, $\norm{\mat{A}}_1 \leq 1$, and $\norm{\mat{A}}_\infty \leq 1$.
\end{lemma}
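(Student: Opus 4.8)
The plan is to establish the two halves of the statement — membership in $\PPAD$ and $\PPAD$-hardness — separately. For membership, I would first note that with $F(\vx) \defeq \mat{A}\vx + \vb$ affine, hence Lipschitz with a constant bounded polynomially in the bit-size of $(\mat{A},\vb)$, a point $\vx^*$ solves the exact variational inequality if and only if it is a fixed point of the continuous self-map $\Phi(\vx) \defeq \Pi_{[0,1]^d}(\vx - F(\vx))$ of the hypercube; here $\Phi$ is Lipschitz because Euclidean projection onto a box is coordinate-wise clipping. Using the variational characterization of the projection to expand $\|\vx^* - \Phi(\vx^*)\|_2$, one checks that a $\delta$-approximate fixed point of $\Phi$ is an $\epsilon'$-approximate VI solution whenever $\delta$ is polynomially small in $\epsilon'$ and in the instance size. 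Since computing approximate Brouwer fixed points of polynomially-Lipschitz self-maps of $[0,1]^d$ is in $\PPAD$, this settles membership.

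For the hardness, I would reduce from the $\PPAD$-hardness of computing an $\epsilon_0$-approximate Nash equilibrium — a profile in which every player's regret, the most she could gain by unilaterally deviating, is at most $\epsilon_0$ — in two-action polymatrix games of bounded degree and with bounded edge payoffs, where $\epsilon_0$ is an absolute constant; this is the strengthened inapproximability theorem of \citet{Rubinstein15:Inapproximability}. Identify coordinate $i \in [d]$ with a player whose pure actions are $\{0,1\}$ and read $\vx_i \in [0,1]$ as the probability that player $i$ plays $1$. Because polymatrix edges join distinct players, player $i$'s expected utility is affine in $\vx_{-i}$ and independent of $\vx_i$, so the utility gap $F_i(\vx) \defeq u_i(0,\vx_{-i}) - u_i(1,\vx_{-i})$ is affine in $\vx$; stacking the coordinates gives $F(\vx) = \mat{A}\vx + \vb$, where $\mat{A}$ is supported on the edge set and $\mat{A}_{ij}$ is an $O(1)$-magnitude combination of the payoffs on edge $\{i,j\}$. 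The key point is that the VI inequality ``$\langle \vx - \vx^*, \mat{A}\vx^* + \vb\rangle \ge -\epsilon'$ for all $\vx \in [0,1]^d$'' is \emph{stronger} than its restriction to perturbations of $\vx^*$ in a single coordinate, and this single-coordinate restriction says precisely that player $i$'s regret is at most $\epsilon'$; hence every $\epsilon'$-approximate VI solution is an $\epsilon'$-approximate equilibrium of the game.

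It remains to normalize the matrix and track the accuracy. Bounded degree $\Delta$ and bounded edge payoffs make every row and every column of $\mat{A}$ have at most $O(\Delta)$ nonzero entries of bounded magnitude, so $\|\mat{A}\|_1, \|\mat{A}\|_\infty \le \kappa$ for some absolute constant $\kappa$; replacing $(\mat{A},\vb)$ by $(\mat{A},\vb)/\kappa$ forces both norms to be at most $1$ while dividing all regrets by $\kappa$. Consequently, an $\epsilon'$-approximate VI solution of the normalized instance with $\epsilon' \defeq \epsilon_0/\kappa$ has per-player regret at most $\epsilon_0$ in the original game, i.e., is an $\epsilon_0$-approximate equilibrium, and $\epsilon'$ is an absolute constant. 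Since an affine VI over a compact convex set always admits an exact solution, the constructed instance is a valid total search problem, and an efficient algorithm for it would efficiently compute $\epsilon_0$-approximate equilibria, contradicting $\PPAD$-hardness.

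The step I expect to be the main obstacle is importing the correct source hardness. Reducing from the more elementary inverse-polynomial inapproximability of polymatrix Nash \citep{Chen09:Settling,Daskalakis09:The} would, after the degree normalization, yield only an inverse-polynomial $\epsilon'$; obtaining an \emph{absolute constant} $\epsilon'$ genuinely requires Rubinstein's theorem that $\epsilon_0$-approximate Nash remains $\PPAD$-hard in bounded-degree, two-action polymatrix games. Moreover both features of that source are essential here — bounded degree so that the matrix-norm bounds hold with a constant, and two actions per player so that the strategy space is exactly the hypercube $[0,1]^d$ — which makes this invocation the load-bearing part of the argument; the projection-map membership argument and the affine encoding of utility gaps are routine by comparison.
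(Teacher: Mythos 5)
The paper never proves this lemma: it is imported verbatim from \citet[Theorem~4.4]{bernasconi2024role}, with the remark that a closely related statement is due to \citet{Rubinstein15:Inapproximability}. Your proposal therefore cannot match a proof in the paper, but it is a sound reconstruction of the argument that underlies the citation, and I see no gap in it. The membership half (exact VI solutions are fixed points of $\vx \mapsto \Pi_{[0,1]^d}(\vx - \mat{A}\vx - \vb)$, and a polynomially small fixed-point gap transfers to an $\epsilon'$-VI gap via the variational characterization of the projection) is routine and correct. The hardness half is the load-bearing part, and you identify the right source: the constant-$\epsilon_0$ inapproximability of Nash in binary-action, bounded-degree polymatrix games. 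Your key observations check out --- the utility gaps $F_i(\vx) = u_i(0,\vx_{-i}) - u_i(1,\vx_{-i})$ are affine with $\mat{A}$ supported on the edge set, single-coordinate restrictions of the VI condition bound each player's regret by $\epsilon'$ (the case analysis on the sign of $F_i(\vx^*)$ gives exactly the regret expressions $x_i^* F_i$ and $(1-x_i^*)(-F_i)$), and bounded degree plus bounded edge payoffs give $\|\mat{A}\|_1, \|\mat{A}\|_\infty \le \kappa$ for an absolute constant $\kappa$, so rescaling by $1/\kappa$ yields the normalized instance with $\epsilon' = \epsilon_0/\kappa$ still an absolute constant. Two minor remarks: first, the specific constant $\epsilon' = 0.088/6$ that the paper uses downstream in \Cref{theorem:PPAD_complete} is obtained by combining this lemma with the inapproximability bound of \citet{Deligkas24:Pure}, so your $\epsilon_0/\kappa$ would in general be a different (but equally valid) absolute constant for the lemma as stated; second, when invoking Rubinstein's theorem you should fix the exact equilibrium notion ($\epsilon$-approximate Nash, i.e., per-player regret, which is what your argument uses and what the hardness result supplies), since the well-supported variant would require a slightly different bookkeeping. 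Neither point affects correctness.
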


This allows us to strengthen~\Cref{theorem:VIred} by establishing \PPAD-hardness with sharp constants and for a seemingly simple class of performative prediction instances.

\begin{restatable}{theorem}{PPADaffine} \label{theorem:PPAD_complete}
    Finding an $\epsilon$-performatively stable point per~\cref{def:approx-perfstab} is \PPAD-hard even when $L \beta / \alpha \leq 1 + \frac{\epsilon}{\epsilon'}$ for $\epsilon' = 0.088/6 \approx 0.0147$. This is so even when $\ell$ is a quadratic objective, $\ell(\vx; \vz) = \frac{1}{2} \| \vx - \vz \|_2^2$, and $\calD(\vx)$ is given by an affine map.
\end{restatable}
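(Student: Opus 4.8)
The plan is to instantiate \Cref{theorem:VIred} with the hard affine VI class from \Cref{lemma:inapproximation} and then verify that the resulting performative prediction instance satisfies the promised parameter bounds, in particular that the loss takes the quadratic form $\ell(\vx;\vz) = \tfrac12 \|\vx - \vz\|_2^2$. First I would observe that $\tfrac12\|\vx - \vz\|_2^2 = \tfrac12\|\vx\|_2^2 - \vx^\top \vz + \tfrac12\|\vz\|_2^2$, and since the last term does not depend on $\vx$, minimizing $\E_{\vz\sim\calD(\vx^*)}[\tfrac12\|\vx - \vz\|_2^2]$ over $\vx$ is equivalent to minimizing $\E_{\vz\sim\calD(\vx^*)}[\tfrac12\|\vx\|_2^2 - \vx^\top\vz]$, i.e. exactly the objective \eqref{eq:strongly-convex-objective}. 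Hence $\epsilon$-performative stability per \Cref{def:approx-perfstab} is unchanged by this cosmetic rewriting of the loss, and $\ell$ is $1$-strongly convex and $1$-jointly smooth just as in the hard class.

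Next I would set up the affine VI. Given the \PPAD-complete instance of \Cref{lemma:inapproximation} with mapping $F(\vx) = \mat{A}\vx + \vb$ satisfying $\|\mat{A}\|_1 \le 1$, $\|\mat{A}\|_\infty \le 1$ on $\calX = [0,1]^d$, I would apply the VI-to-stability reduction with a damped mapping. Concretely, following the remark preceding \Cref{theorem:VIred}, take $g : \vx \mapsto \vx - \bar F(\vx)$ where $\bar F = c F$ for a rescaling constant $c = \epsilon/\epsilon'$; then $\calD(\vx)$ is the point mass at $g(\vx)$, which is an affine map of $\vx$. The Lipschitz constant of $g$ is at most $1 + c\,\|\mat{A}\|_2$, and by the submultiplicativity-type bound $\|\mat{A}\|_2 \le \sqrt{\|\mat{A}\|_1\|\mat{A}\|_\infty} \le 1$, we get $L = \mathrm{Lip}(g) \le 1 + c = 1 + \epsilon/\epsilon'$. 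Since $\beta = \alpha = 1$ here, this gives $\contr = L\beta/\alpha \le 1 + \epsilon/\epsilon'$. Actually I would be slightly more careful: the damping should be applied so that $g(\vx) = \vx$ corresponds to $\bar F(\vx) = 0$, and the correspondence of approximation factors needs $g(\vx) - \vx = -\bar F(\vx) = -cF(\vx)$, so an $\epsilon$-performatively stable point $\vx^*$ satisfies $\langle \vx - \vx^*, \E_{\vz\sim\calD(\vx^*)}\nabla_\vx\ell(\vx^*;\vz)\rangle = \langle \vx - \vx^*, \vx^* - g(\vx^*)\rangle = \langle \vx - \vx^*, cF(\vx^*)\rangle \ge -\epsilon$ for all $\vx$, hence $\langle \vx - \vx^*, F(\vx^*)\rangle \ge -\epsilon/c = -\epsilon'$, i.e. $\vx^*$ is an $\epsilon'$-approximate VI solution. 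That is exactly the \PPAD-complete condition of \Cref{lemma:inapproximation}.

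Finally I would pin down the constant: \Cref{lemma:inapproximation} asserts \PPAD-completeness for some absolute constant $\epsilon' > 0$, and tracking the Rubinstein/\citet{bernasconi2024role} construction gives the explicit value $\epsilon' = 0.088/6$ stated in the theorem (the $0.088$ being the Rubinstein inapproximability constant and the $6$ an overhead factor from the reduction to affine VIs on the cube with the normalized matrix bounds). Then the overall reduction is polynomial-time, maps \PPAD-hard instances to performative prediction instances with $\ell$ quadratic, $\calD$ affine, $\calX = [0,1]^d$, and $\contr \le 1 + \epsilon/\epsilon'$, establishing \PPAD-hardness of $\epsilon$-performative stability. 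The main obstacle I anticipate is purely bookkeeping: making sure the rescaling constant $c$ enters the Lipschitz bound and the approximation-factor translation consistently (a factor-of-$c$ slip in either direction would either break the reduction or change the claimed $\contr$ bound), and confirming the precise numerical constant $\epsilon'$ by invoking the cited results rather than re-deriving them — no deep new idea is needed beyond \Cref{theorem:VIred} and \Cref{lemma:inapproximation}.
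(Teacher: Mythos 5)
Your proposal is correct and follows essentially the same route as the paper: it instantiates \Cref{lemma:inapproximation} by taking $g(\vx) = \vx - \frac{\epsilon}{\epsilon'}(\mat{A}\vx + \vb)$, translates the $\epsilon$-stability condition $\langle \vx - \vxstar, \vxstar - g(\vxstar)\rangle \ge -\epsilon$ into an $\epsilon'$-approximate affine VI solution, and bounds the sensitivity via $\|\mat{A}\|_2 \le \sqrt{\|\mat{A}\|_1\|\mat{A}\|_\infty} \le 1$, exactly as in the paper's proof (your extra remark that $\tfrac12\|\vx-\vz\|_2^2$ and $\tfrac12\|\vx\|_2^2 - \vx^\top\vz$ differ only by an $\vx$-independent term is a harmless, even welcome, clarification). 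The only nitpick is attribution of the constant: the paper obtains $\epsilon' = 0.088/6$ by combining \Cref{lemma:inapproximation} with the inapproximability of \citet{Deligkas24:Pure} rather than tracing it to Rubinstein, but this does not affect the argument.
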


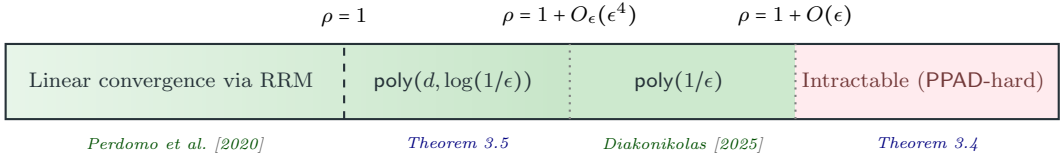
\begin{figure*}[t]
    \centering
    \begin{tikzpicture}[font=\sffamily, >=stealth]

    \def\barHeight{1.0}
    \def\barWidth{14}
    
    \def\xOne{4.5}    
    \def\xTwo{7.5}    
    \def\xThree{10.5} 
    \def\xEnd{14}


    \shadedraw[left color=uiGreenLight, right color=uiGreenMedium, draw=none] 
        (0,0) rectangle (\xTwo, \barHeight);
    
    \node[anchor=center, align=center, text=uiText, font=\scriptsize] at (\xOne/2, \barHeight/2) {
        Linear convergence via RRM
    };
    \node[anchor=north, text=black!60, font=\tiny\itshape, yshift=-3pt] at (\xOne/2, 0) {\citet{perdomo2021performativeprediction}};

    \node[anchor=center, align=center, text=uiText, font=\scriptsize] at ({(\xOne+\xTwo)/2}, \barHeight/2) {
        $\poly(d, \log(1/\epsilon))$
    };
    \node[anchor=north, text=black!60, font=\tiny\itshape, yshift=-3pt] at ({(\xOne+\xTwo)/2}, 0) {\Cref{prop:exp-perf}};

    \fill[uiGreenDark!40] (\xTwo,0) rectangle (\xThree, \barHeight); 
    \node[anchor=center, align=center, text=uiText, font=\scriptsize] at ({(\xTwo+\xThree)/2}, \barHeight/2) {
        $\poly(1/\epsilon)$
    };
    \node[anchor=north, text=black!60, font=\tiny\itshape, yshift=-3pt] at ({(\xTwo+\xThree)/2}, 0) {\citet{Diakonikolas25:Pushing}};

    \fill[uiRedLight] (\xThree,0) rectangle (\xEnd, \barHeight);
    \node[anchor=center, align=center, text=uiRedDark!50!black, font=\scriptsize] at ({(\xThree+\xEnd)/2}, \barHeight/2) {
        Intractable (\PPAD-hard)
    };
    \node[anchor=north, text=black!60, font=\tiny\itshape, yshift=-3pt] at ({(\xThree+\xEnd)/2}, 0) {\Cref{theorem:PPAD_complete}};

    
    \draw[thick, uiText] (0,0) rectangle (\xEnd, \barHeight);
    
    \draw[thick, dashed, uiText] (\xOne, 0) -- (\xOne, \barHeight);
    \node[above=3pt, font=\scriptsize] at (\xOne, \barHeight) {$\rho = 1$};
    
    \draw[dotted, thick, black!50] (\xTwo, 0) -- (\xTwo, \barHeight);
    \node[above=3pt, font=\scriptsize] at (\xTwo, \barHeight) {$\rho = 1 + O_\epsilon(\epsilon^4)$};
    
    \draw[dotted, thick, black!50] (\xThree, 0) -- (\xThree, \barHeight);
    \node[above=3pt, font=\scriptsize] at (\xThree, \barHeight) {$\rho = 1 + O(\epsilon)$};
    
    \node[right=5pt, font=\small] at (\xEnd, \barHeight/2) {};

\end{tikzpicture}
    \caption{The complexity landscape for computing $\epsilon$-performatively stable points.}
    \label{fig:phasetransition}
\end{figure*}

The simple proof is deferred to~\Cref{sec:proofs}. The constant $\epsilon'$ appearing above is obtained by combining~\Cref{lemma:inapproximation} with the inapproximability of~\citet{Deligkas24:Pure}. It is worth noting that for general VIs on the hypercube (without the restriction to affine mappings), an approximation of even $\approx \frac{1}{2}$ is \PPAD-hard~\citep{Deligkas23:Tight}.

To put~\Cref{theorem:PPAD_complete} into context, \citet{Diakonikolas25:Pushing} recently analyzed a relaxation of nonexpansiveness, showing that an $\epsilon$-fixed point---which yields an $O_\epsilon(\epsilon)$-performatively stable point on account of~\Cref{lemma:direc2}---can be computed in $\poly(1/\epsilon)$ time even when the Lipschitz constant $L$ of the map satisfies $L \leq 1 + \epsilon/ D$. This means that, subject to a complexity collapse, the bound on $\contr$ in~\Cref{theorem:PPAD_complete} cannot be improved up to the factor multiplying $\epsilon$. Taken together, we identify an acute phase transition in the complexity of the problem.

Furthermore, we also establish the following result.

\begin{restatable}{theorem}{expans}
    \label{prop:exp-perf}
    If $\contr \leq 1 + \epsilon$ (per~\Cref{ass:l2}), there is a $\poly(d, \log(1/\epsilon))$-time algorithm for computing an $O_{\epsilon}(\epsilon^{1/4})$-performatively stable point.
\end{restatable}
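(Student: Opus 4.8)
The plan is to recast approximate performative stability as a variational inequality, notice that the associated operator is not monotone but only mildly \emph{hypo}monotone, with modulus $O(\epsilon)$, add a Tikhonov term of the same order to make it genuinely strongly monotone, and then solve the resulting monotone VI with a cutting-plane method---whose iteration count, unlike that of repeated risk minimization or of any first-order scheme, is essentially insensitive to the (vanishingly small) strong-monotonicity modulus.

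Concretely, set $\Phi(\vx) \defeq \E_{\vz \sim \calD(\vx)}[\nabla_{\vx}\ell(\vx;\vz)]$ for $\vx \in \calX$. By \Cref{def:approx-perfstab}, a point is $\epsilon'$-performatively stable precisely when it is an $\epsilon'$-approximate (Stampacchia) solution of the VI given by $\Phi$ over $\calX$. Writing
\[
    \Phi(\vx)-\Phi(\vy) = \E_{\vz\sim\calD(\vx)}\!\big[\nabla_\vx\ell(\vx;\vz) - \nabla_\vx\ell(\vy;\vz)\big] + \Big(\E_{\vz\sim\calD(\vx)}\!\big[\nabla_\vx\ell(\vy;\vz)\big] - \E_{\vz\sim\calD(\vy)}\!\big[\nabla_\vx\ell(\vy;\vz)\big]\Big),
\]
bounding the first term by $\beta$-smoothness (and, paired with $\vx - \vy$, from below by $\alpha$-strong convexity) and the second by $\beta$-smoothness together with $L$-sensitivity via Kantorovich--Rubinstein duality for $W_1$, one gets both that $\Phi$ is $\beta(1+L)$-Lipschitz and the key estimate $\langle \Phi(\vx)-\Phi(\vy), \vx-\vy\rangle \ge (\alpha - L\beta)\|\vx-\vy\|_2^2 = -\alpha(\contr-1)\|\vx-\vy\|_2^2$; that is, $\Phi$ is $\alpha(\contr-1)$-hypomonotone---the infinitesimal counterpart of the Perdomo et al.\ contraction bound.

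Now take $\mu \defeq 2\alpha\epsilon$ and pass to the regularized loss $\ell(\vx;\vz) + \tfrac{\mu}{2}\|\vx\|_2^2$; it has the same $L$-sensitivity and $\vz$-smoothness but strong-convexity parameter $\alpha+\mu$ (so $\contr < 1$ for it), and its associated operator is $\Phi_\mu(\vx) = \Phi(\vx) + \mu\vx$, which by the estimate above is $\alpha\epsilon$-strongly monotone (using $\contr-1 \le \epsilon$) and $O(1)$-Lipschitz, with a unique VI solution $\vx^\ast_\mu$. I would solve this monotone VI to gap accuracy $\eta \defeq \Theta(\alpha\epsilon D R)$, where $D = \mathrm{diam}(\calX)$ and $R = \max_{\vx\in\calX}\|\vx\|_2$, by the ellipsoid method: at a queried $\hvx$, one linear optimization over $\calX$ and one evaluation of $\Phi_\mu$ either certify $\max_{\vx\in\calX}\langle \hvx - \vx, \Phi_\mu(\hvx)\rangle \le \eta$, or---via Minty's lemma, valid because $\Phi_\mu$ is monotone---yield the central cut $\{\vy : \langle \vy - \hvx, \Phi_\mu(\hvx)\rangle \le 0\}$ containing $\vx^\ast_\mu$. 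Two facts make this halt quickly: by Lipschitzness of $\Phi_\mu$ and boundedness of $\|\Phi_\mu(\vx^\ast_\mu)\|_2$ over $\calX$, every point within distance $\Theta(\eta)$ of $\vx^\ast_\mu$ already has gap $\le \eta$, so the target set contains a ball whose radius is polynomially related to $\epsilon$ and the instance parameters; and whenever the gap still exceeds $\eta$, strong monotonicity keeps that cut hyperplane at distance $\gtrsim \alpha\epsilon\,\eta^2$ (up to instance-dependent constants) from $\vx^\ast_\mu$, so the ball is never sliced. Hence the ellipsoid halts after $O\!\big(d^2 \log(1/r_0)\big) = \poly(d, \log(1/\epsilon))$ iterations, each of cost $\poly(d)$. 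Translating back, for all $\vx \in \calX$, $\langle \vx - \hvx, \Phi(\hvx)\rangle = \langle \vx - \hvx, \Phi_\mu(\hvx)\rangle - \mu\langle \vx - \hvx, \hvx\rangle \ge -\eta - \mu D R = -O_\epsilon(\epsilon)$, so $\hvx$ is $O_\epsilon(\epsilon)$-performatively stable---comfortably within the stated $O_\epsilon(\epsilon^{1/4})$ bound.

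The accuracy bookkeeping is routine; the main difficulty is the runtime. Iterating the regularized RRM map, or running extragradient (or any first-order method) on $\Phi_\mu$, converges at a rate controlled by $1/(1-\contr) \asymp 1/\epsilon$---this is the content of the Perdomo et al.\ guarantee and the reason the recent $\poly(1/\epsilon)$-time algorithm of \citet{Diakonikolas25:Pushing} is not polylogarithmic---so those approaches would only give $\poly(1/\epsilon)$ time. The crucial point is that a cutting-plane method for a monotone VI pays only \emph{logarithmically} in the strong-monotonicity modulus $\alpha\epsilon$, because the modulus enters the volume ratio between the ambient set and the approximate-solution set rather than the iteration count linearly. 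The delicate steps are precisely establishing this modulus-insensitive termination bound---keeping the $\Theta(\eta)$-ball around $\vx^\ast_\mu$ on the retained side of every Minty cut even though the modulus is only $O(\epsilon)$---and verifying that the Wasserstein-based hypomonotonicity estimate transfers to the regularized instance with the right constants.
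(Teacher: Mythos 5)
Your proposal is correct in its essentials, but it takes a genuinely different route from the paper. The paper never touches the gradient operator: it works with the RRM map $G$, shows that $F(\vx)=\vx-G(\vx)$ is $O(\epsilon)$-hypomonotone, invokes the expected-VI algorithm of \citet{Zhang25:Expected} as a black box so that the mean of the returned distribution is an $O_\epsilon(\epsilon)$-Minty solution (\Cref{prop:hypomonotone}), and then pays two square roots---Minty to Stampacchia via \Cref{lemma:MVI-SVI}, and Stampacchia gap of $F$ to fixed-point gap of $G$ (\Cref{prop:gen-expans})---which is exactly where the $\epsilon^{1/4}$ comes from, before translating to \Cref{def:approx-perfstab} via \Cref{lemma:direc2}. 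You instead work directly with $\Phi(\vx)=\E_{\vz\sim\calD(\vx)}[\nabla_\vx\ell(\vx;\vz)]$, whose $\alpha(\contr-1)$-hypomonotonicity you derive by the same strong-convexity-plus-Kantorovich--Rubinstein decomposition that underlies \Cref{prop:contract-gen} (your estimate is correct), then restore $\alpha\epsilon$-strong monotonicity by explicit Tikhonov regularization and solve the resulting strongly monotone, Lipschitz VI by a cutting-plane method whose iteration count depends on the modulus only through a logarithm. What your route buys, if fleshed out, is a \emph{stronger} guarantee---an $O_\epsilon(\epsilon)$-performatively stable point per \Cref{def:approx-perfstab}, with no square-root losses and no EVI machinery---which is consistent with the hardness results (the constants in your $O_\epsilon(\epsilon)$ involve $\alpha, D, R$, so there is no collision with \Cref{theorem:PPAD_complete}); what the paper's route buys is that the heavy lifting (the polylog-time solver) is cited rather than re-proved, and that the approximation is also controlled in the RRM fixed-point gap.

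Two points need care when you write this up, though neither is fatal. First, in your termination argument the ball that is ``never sliced'' is not the $\Theta(\eta)$-ball of small-gap points but the smaller ball of radius $\propto \alpha\epsilon\eta^2$ around $\vx^*_\mu$ guaranteed by the deep-cut bound $\langle \vx^*_\mu-\hvx,\Phi_\mu(\hvx)\rangle\le -\alpha\epsilon\|\hvx-\vx^*_\mu\|^2$; the volume argument must be run with that radius (still polynomial in $\epsilon$, so the polylog bound survives), and you must also handle separation cuts when $\vx^*_\mu$ sits on the boundary of $\calX$ via the standard inscribed-ball perturbation for well-bounded domains. Second, the aside that the regularized loss has $\contr<1$ is not needed and, if $\contr$ is computed with the regularized joint smoothness $\beta+\mu$, may fail for large $L$; what your argument actually uses---that $\Phi_\mu=\Phi+\mu\,\mathrm{id}$ is $\alpha\epsilon$-strongly monotone because only the $\vz$-smoothness and $L$ enter the shift term---is correct and should be stated in that form.
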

The approximation we obtain holds even in terms of the RRM fixed-point gap. For the natural VI counterpart, the approximation would be $O_\epsilon(\sqrt{\epsilon})$, and even $O_\epsilon(\epsilon)$ in terms of the \emph{Minty} VI solution (\Cref{prop:hypomonotone}).

\Cref{prop:exp-perf} complements the result of~\citet{Diakonikolas25:Pushing}: even though we need $\contr \leq 1 + O_{\epsilon}(\epsilon^4)$ to get an $\epsilon$-performatively stable point, \Cref{prop:exp-perf} scales logarithmically with $1/\epsilon$, which is an exponential improvement. On the other hand, the result of~\citet{Diakonikolas25:Pushing} is not confined to finite-dimensional Euclidean spaces. For an illustration of the different regions in terms of $\rho$ and their complexity, we refer to~\Cref{fig:phasetransition}.

\Cref{prop:exp-perf} is established by showing how to apply the ellipsoid algorithm on a VI problem in which the mapping $F$ is only approximately monotone---specifically, \emph{hypomonotone} (\Cref{prop:hypomonotone}). Our observation is that, in such problems, \emph{expected} variational inequalities in the sense of~\citet{Zhang25:Expected} induce approximate VI solutions, as we formalize in~\Cref{sec:ellipsoid}.

An interesting open question is whether~\Cref{prop:exp-perf} can be improved to match the approximation-expansiveness tradeoff established by~\citet{Diakonikolas25:Pushing}.

\paragraph{Unconditional lower bounds} We next establish unconditional query complexity hardness results. In the performative prediction setting, the natural query model we consider allows an algorithm to specify a point $\vx \in \calX$ and receive the induced RRM iterate $G(\vx)$. We will use the following seminal lower bound due to~\citet{Hirsch89:Exponential}.

\begin{theorem}[\citealp{Hirsch89:Exponential}]
    For any $d \geq 3$, any algorithm that finds an $\epsilon$-fixed point of a Lipschitz map $T : \calX \to \calX$ requires at least $c (( \frac{1}{\epsilon} - 10) L )^{d-2}$ steps, where $c$ is an absolute constant and $L$ is the Lipschitz constant of $T(\vx) - \vx$.
\end{theorem}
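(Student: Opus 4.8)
The statement is the exponential query lower bound of Hirsch, Papadimitriou and Vavasis~\citep{Hirsch89:Exponential}, so strictly it is quoted rather than proved here; for completeness I describe the argument I would give if one wanted it self-contained. The plan is to build, over an exponentially large family of ``secrets'', Lipschitz self-maps of $\calX = [0,1]^d$ whose unique $\epsilon$-fixed point is hidden at the far end of a long combinatorial path, and then run an adversary argument: each oracle call to $T$ reveals only local information about the map, a path can always be re-routed around the finitely many queried regions, so no algorithm can locate the fixed point before making roughly as many queries as there are path configurations.

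\textbf{Step 1 (discretization).} Tile $[0,1]^d$ by axis-aligned subcubes of side $\delta = \Theta(\epsilon/L)$; this is the scaling that forces the displacement field $T - \mathrm{id}$ to be $L$-Lipschitz while moving points a distance of order $\epsilon$ per cell. A \emph{path} is a sequence of face-adjacent subcubes emanating from a fixed entrance cell; two of the $d$ coordinates are reserved so the path always has room to turn around any single point, and routing it through the remaining $d-2$ coordinates yields $\Theta\big(((\tfrac{1}{\epsilon}-10)L)^{d-2}\big)$ inequivalent terminal cells. \textbf{Step 2 (the map).} On cells lying on the path, let $T(\vx)$ displace $\vx$ by $\approx \epsilon$ in the forward direction along the path; on cells off the path, let $T(\vx)$ move $\vx$ transversally back toward the nearest path cell; glue these via a smooth partition of unity over the cells so that $T$ is continuous, maps $[0,1]^d$ into itself, and $T - \mathrm{id}$ is $L$-Lipschitz. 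One then checks $\|T(\vx)-\vx\| > \epsilon$ everywhere except in the terminal cell — the ``back toward the path'' component is oriented so as not to create spurious near-fixed points — so that the $\epsilon$-fixed points are exactly the points near the path's end.

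\textbf{Step 3 (adversary argument).} Simulate the algorithm while maintaining the set $S$ of queried points together with a partial path consistent with every answer returned so far. A query at $\vx$ constrains $T$ only on a bounded neighborhood of the cell containing $\vx$ (using $L$-Lipschitzness), and the two reserved coordinates let any uncommitted segment of the path be deformed around a bounded number of cells; hence as long as $|S| < c\,((\tfrac{1}{\epsilon}-10)L)^{d-2}$ there remain two consistent completions whose terminal cells are disjoint, so the algorithm cannot have pinned down an $\epsilon$-fixed point and must make at least that many queries. The main obstacle is Step 2: engineering the vector field so that (i) its only $\epsilon$-fixed point is the path's end, (ii) its Lipschitz constant is tightly controlled by $L$, and (iii) local oracle answers genuinely reveal nothing about the global routing — squeezing out the exact exponent $d-2$ and the ``$-10$'' slack is the technical heart of~\citep{Hirsch89:Exponential}; a further routine bi-Lipschitz change of coordinates transfers the bound to a general convex $\calX$ at the cost of rescaling $\epsilon$ and $L$ by the distortion.
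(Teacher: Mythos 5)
The paper itself offers no proof of this statement: it is imported verbatim as a known result of \citet{Hirsch89:Exponential} and used only as a black box to derive the query lower bound of \Cref{cor:query}, so there is nothing in the paper to compare your argument against. Your reconstruction does follow the strategy of the original proof --- discretize the cube at scale $\Theta(\epsilon/L)$, hide the near-fixed-point region at the end of a long path whose routing uses $d-2$ of the coordinates, and run an adversary that re-routes the uncommitted portion of the path around the queried cells --- and the bookkeeping you give (why the cell size is $\epsilon/L$, why two coordinates are reserved, why the count is $\bigl((\tfrac{1}{\epsilon}-10)L\bigr)^{d-2}$, and why two consistent completions with disjoint terminal cells defeat any output) is the right skeleton.

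That said, as a proof the proposal has a genuine gap, and you in effect concede it: the two claims on which everything rests are asserted rather than established. In Step 2 you must actually exhibit the glued vector field and verify (i) that $T$ maps $\calX$ into itself, (ii) that the displacement $T(\vx)-\vx$ has Lipschitz constant $L$ (not merely $O(L)$, since $L$ enters the bound multiplicatively), and (iii) that its norm exceeds $\epsilon$ everywhere off the terminal cell --- the ``back toward the path'' component is exactly where spurious $\epsilon$-fixed points tend to appear, and ruling them out is the delicate part. In Step 3 the statement that a query ``constrains $T$ only on a bounded neighborhood'' needs to be turned into a concrete consistency invariant: the adversary must maintain a single Lipschitz map agreeing with all answers given so far while keeping at least two completions with disjoint $\epsilon$-fixed-point regions, and showing that each query eliminates only $O(1)$ candidate routings is what produces the stated count. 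Since you defer both points to the citation, what you have is a correct outline of the Hirsch--Papadimitriou--Vavasis argument, not a self-contained proof; treating the theorem as a citation, as the paper does, is the honest framing.
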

For the class of problems given in~\eqref{eq:strongly-convex-objective}-\eqref{eq:g-def}, it follows that $G(\vx) = g(\vx)$. As a result, under the reduction of~\Cref{theorem:FPred}, every ERM query outputs $(1 - \lambda) \vx + \lambda T(\vx)$, which reveals as much information as $T(\vx)$ itself.

\begin{corollary}
    \label{cor:query}
    Computing an $\epsilon$-fixed point of the RRM map $G$ (\Cref{def:RRM}) even when $\contr = L \beta / \alpha \leq 1 + O_{\epsilon}(\epsilon)$ requires $2^{\Omega(d)}$ ERM queries. This holds even when $\epsilon$ is a constant.
\end{corollary}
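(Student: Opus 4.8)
The plan is to compose the fixed-point-to-performative-stability reduction of \Cref{theorem:FPred} with the query lower bound of~\citet{Hirsch89:Exponential}, exploiting the fact---already noted above---that under this reduction a single ERM query on the instance~\eqref{eq:strongly-convex-objective}-\eqref{eq:g-def} reveals exactly one evaluation of the underlying map. So the proof is essentially a repackaging of those two ingredients; the only thing to be careful about is the coupled choice of the two approximation parameters.

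First I would fix an absolute constant $\epsilon' \in (0,1)$ small enough and an absolute constant $L \ge 1$ so that, by~\citet{Hirsch89:Exponential}, finding an $\epsilon'$-fixed point of an $L$-Lipschitz map $T : \calX \to \calX$ (with $d \ge 3$) requires $2^{\Omega(d)}$ queries to $T$; this is possible precisely because the quantity $\bigl( (1/\epsilon') - 10 \bigr) L$ appearing in that bound can be made an absolute constant strictly larger than $1$, so that $c\bigl((\tfrac1{\epsilon'}-10)L\bigr)^{d-2} = 2^{\Omega(d)}$. Given the target accuracy $\epsilon$ (which we take $\le \epsilon'$, so in particular $\epsilon$ may be a fixed constant), apply \Cref{theorem:FPred} with $\lambda = \epsilon/\epsilon'$: this yields a performative prediction instance of the class~\eqref{eq:strongly-convex-objective}-\eqref{eq:g-def} with distribution map $g : \vx \mapsto (1-\lambda)\vx + \lambda T(\vx)$, and hence with $\contr = L\beta/\alpha \le 1 + \tfrac{\epsilon}{\epsilon'}L = 1 + O_\epsilon(\epsilon)$, the last equality because $\epsilon'$ and $L$ are absolute constants. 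For this class $G(\vx) = g(\vx)$, since the unconstrained RRM minimizer $g(\vx)$ lies in $\calX$ whenever $\calX$ is convex and $T(\calX) \subseteq \calX$.

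It then remains to transfer the query bound, which is immediate from two elementary observations. On the one hand, $\|\vx - G(\vx)\|_2 = \|\vx - g(\vx)\|_2 = \lambda\,\|\vx - T(\vx)\|_2$, so a point is an $\epsilon$-fixed point of $G$ exactly when it is an $(\epsilon/\lambda) = \epsilon'$-fixed point of $T$. On the other hand, a single ERM query returns $G(\vx) = (1-\lambda)\vx + \lambda T(\vx)$, which is computable from one evaluation of $T$ and from which $T(\vx) = \lambda^{-1}\bigl(G(\vx) - (1-\lambda)\vx\bigr)$ is recoverable; hence ERM queries and $T$-queries are interchangeable, query-for-query. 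Therefore any algorithm that computes an $\epsilon$-fixed point of $G$ using $q$ ERM queries induces an algorithm that computes an $\epsilon'$-fixed point of $T$ using $q$ queries to $T$, and the theorem of~\citet{Hirsch89:Exponential} forces $q = 2^{\Omega(d)}$.

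The argument presents no substantive obstacle: the one point that has to be gotten right is the coupled tuning of the two accuracy parameters---freezing the $T$-side accuracy $\epsilon'$ at an absolute constant (so that the Hirsch et al. bound remains exponential) while letting $\lambda = \epsilon/\epsilon'$ funnel all of the $\epsilon$-dependence into $\contr - 1$---together with the routine bookkeeping that the reduction preserves the dimension $d$ and that the induced instance indeed satisfies $G(\vx) = g(\vx)$ with the claimed parameters. Everything else is a direct invocation of \Cref{theorem:FPred} and the observation, already made in the surrounding text, that an ERM query discloses precisely $T(\vx)$.
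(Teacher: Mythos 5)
Your proposal is correct and follows essentially the same route as the paper: it composes the reduction of \Cref{theorem:FPred} with the lower bound of \citet{Hirsch89:Exponential}, using exactly the paper's observations that $G(\vx) = g(\vx) = (1-\lambda)\vx + \lambda T(\vx)$ for this instance class and that each ERM query therefore reveals precisely one evaluation of $T$. Your explicit bookkeeping---freezing $\epsilon'$ and $L$ as absolute constants so the Hirsch et al.\ bound is $2^{\Omega(d)}$ while $\lambda = \epsilon/\epsilon'$ pushes the $\epsilon$-dependence into $\contr - 1$---is just the detail the paper leaves implicit.
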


\subsection{General norms}
\label{sec:generalnorms}

Having characterized the complexity spectrum in the Euclidean setting (\Cref{ass:l2}), we turn to the more general setting. We extend~\Cref{ass:l2} to general norms (\Cref{ass:norms}), and show that the contraction analysis of~\citet{perdomo2021performativeprediction} carries over in this setting.\footnote{With an abuse of notation we use the same symbols to denote the analogous parameters even though convexity, smoothness, and sensitivity are now measured differently. The underlying choice of norm will be clear from the context.}

\begin{restatable}{proposition}{contractgen}
    \label{prop:contract-gen}
    If $L \beta / \alpha < 1$ per~\Cref{ass:norms}, the RRM map $G$ (\Cref{def:RRM}) is a contraction with respect to the norm $\|\cdot\|$. In particular, if $\vxstar$ is the unique fixed point,
    \begin{equation*}
        \|\vx_t - \vxstar \| \leq \frac{L \beta}{\alpha} \|\vx_{t-1} - \vxstar \| \leq \left( \frac{L \beta}{\alpha} \right)^t \|\vx_0 - \vxstar \|.
    \end{equation*}
\end{restatable}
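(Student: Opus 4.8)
The plan is to replicate the $\ell_2$ contraction argument of~\citet{perdomo2021performativeprediction}, but with every inner-product/Cauchy--Schwarz step replaced by its general-norm analogue (dual norm and H\"older), and with the Wasserstein estimate carried out for the \emph{vector-valued} Lipschitz map $\vz \mapsto \nabla_\vw \ell(\vw;\vz)$. Fix $\vx, \vx' \in \calX$ and introduce the population objective $f_\vx(\vw) \defeq \E_{\vz \sim \calD(\vx)}[\ell(\vw; \vz)]$. Since $\alpha$-strong convexity with respect to $\|\cdot\|$ is preserved under taking expectations, $f_\vx$ is $\alpha$-strongly convex; in particular its minimizer over the convex compact set $\calX$ is unique, which is exactly what makes $G$ well defined. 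Write $\vy = G(\vx)$ and $\vy' = G(\vx')$.

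From the first-order optimality conditions, $\langle \nabla f_\vx(\vy),\, \vy' - \vy\rangle \ge 0$ and $\langle \nabla f_{\vx'}(\vy'),\, \vy - \vy'\rangle \ge 0$; adding these and inserting $\pm\,\nabla f_\vx(\vy')$ gives
\[
\langle \nabla f_\vx(\vy) - \nabla f_\vx(\vy'),\, \vy' - \vy\rangle \;+\; \langle \nabla f_\vx(\vy') - \nabla f_{\vx'}(\vy'),\, \vy' - \vy\rangle \;\ge\; 0 .
\]
Monotonicity coming from $\alpha$-strong convexity bounds the first term above by $-\alpha\|\vy - \vy'\|^2$, while H\"older's inequality bounds the second by $\|\nabla f_\vx(\vy') - \nabla f_{\vx'}(\vy')\|_*\,\|\vy - \vy'\|$, where $\|\cdot\|_*$ is the dual norm. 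Dividing by $\|\vy - \vy'\|$ (the case $\vy = \vy'$ being trivial) yields $\alpha\,\|\vy - \vy'\| \le \|\nabla f_\vx(\vy') - \nabla f_{\vx'}(\vy')\|_*$.

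It remains to bound the right-hand side. Writing $\nabla f_\vx(\vw) - \nabla f_{\vx'}(\vw) = \E_{\vz \sim \calD(\vx)}[\nabla_\vw \ell(\vw; \vz)] - \E_{\vz \sim \calD(\vx')}[\nabla_\vw \ell(\vw; \vz)]$, for any test direction $\vphi$ with $\|\vphi\| \le 1$ the scalar map $\vz \mapsto \langle \vphi, \nabla_\vw \ell(\vw;\vz)\rangle$ is $\beta$-Lipschitz with respect to $\|\cdot\|$, by the $\vz$-smoothness clause of~\Cref{ass:norms} combined with H\"older. Kantorovich--Rubinstein duality then shows that the expectation of this scalar map changes by at most $\beta\,W_1(\calD(\vx),\calD(\vx'))$ when the distribution moves from $\calD(\vx)$ to $\calD(\vx')$; taking the supremum over $\vphi$ in the dual unit ball and invoking $L$-sensitivity gives $\|\nabla f_\vx(\vw) - \nabla f_{\vx'}(\vw)\|_* \le \beta L\,\|\vx - \vx'\|$. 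Chaining with the previous inequality yields $\|G(\vx) - G(\vx')\| \le (L\beta/\alpha)\,\|\vx - \vx'\|$, so $G$ is a $\contr$-contraction; as $\calX$ is a closed subset of $\R^d$ and hence complete, the Banach fixed-point theorem supplies the unique fixed point $\vxstar$, and applying the contraction bound to the pair $(\vx_{t-1}, \vxstar)$ and iterating gives the stated geometric rate.

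I expect the main technical point to be the vector-valued Wasserstein estimate: one must check that the Lipschitz constant of each scalar test function is measured in the same norm used to define $W_1$ in~\Cref{ass:norms}, and, exactly as in the Euclidean case, that differentiation under the integral sign and the identity $\nabla f_\vx(\vw) = \E_{\vz\sim\calD(\vx)}[\nabla_\vw \ell(\vw;\vz)]$ are justified --- which follows from joint smoothness together with compactness of $\calX$ and $\calZ$. Everything else is a routine transcription of the $\ell_2$ proof, with the dual pairing bound $\langle \va, \vb\rangle \le \|\va\|_*\,\|\vb\|$ in place of Cauchy--Schwarz.
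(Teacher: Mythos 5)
Your proof is correct and follows essentially the same route as the paper's: strong convexity of the population objective, the first-order optimality conditions at $G(\vx)$ and $G(\vx')$, H\"older's inequality with the dual norm, and Kantorovich--Rubinstein duality combined with $L$-sensitivity, with only cosmetic differences (the paper applies KR duality to the single test direction $G(\vx')-G(\vx)$ rather than uniformly over the unit ball, and obtains the $\alpha$-term from the strong-convexity function-value inequalities rather than gradient monotonicity). One trivial slip: the supremum recovering $\|\cdot\|_*$ is over $\vphi $ in the \emph{primal} unit ball $\{\|\vphi \|\le 1\}$ (as you in fact wrote when introducing $\vphi $), not the dual unit ball.
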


The question now is to characterize the complexity of performatively stable points in this more general setting. The obvious algorithm that arises from~\Cref{prop:contract-gen} computes an $\epsilon$-performatively stable point in a number of iterations that grows as $\log(1/\epsilon) ( 1 - L \beta/\alpha)^{-1}$. When $1 - L \beta / \alpha \approx 0$, this can be prohibitive.

We first note that, in the regime where $\epsilon$ is not too small, this can be improved using \emph{Halpern iteration}~\citep{Halpern67:Fixed,Lieder21:Convergence,Wittmann92:Approximation,Diakonikolas20:Halpern}.

\begin{corollary}
    If $L \beta / \alpha \leq 1$ per~\Cref{ass:norms}, there is an algorithm that finds an $\epsilon$-performatively stable point and has complexity linear in $1/\epsilon$.
\end{corollary}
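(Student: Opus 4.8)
The plan is to treat the corollary as the boundary case $\contr = 1$ of~\Cref{prop:contract-gen} — for $\contr < 1$ that proposition already yields geometric convergence and hence a $\mathrm{poly}(d,\log(1/\epsilon))$ algorithm, so only $\contr = 1$ needs work. The first observation I would record is that the proof of~\Cref{prop:contract-gen} actually establishes $\|G(\vx) - G(\vx')\| \le \contr \, \|\vx - \vx'\|$ for all $\vx, \vx' \in \calX$, so under the hypothesis $\contr \le 1$ the RRM map $G$ is \emph{nonexpansive} with respect to $\|\cdot\|$. Since $\calX$ is convex and compact and $G$ is continuous, $G$ has a fixed point $\vxstar$ by Brouwer's theorem, and by~\Cref{def:perf-stab} such a fixed point is precisely a performatively stable point. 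So it suffices to (i) compute an approximate fixed point of a nonexpansive self-map using few evaluations of $G$, and (ii) translate the fixed-point gap into the approximation notion of~\Cref{def:approx-perfstab}.

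For (i) I would run the anchored \emph{Halpern iteration}
\[
    \vx_{t+1} = \lambda_t \, \vx_0 + (1 - \lambda_t) \, G(\vx_t), \qquad \lambda_t = \tfrac{1}{t+2},
\]
from an arbitrary $\vx_0 \in \calX$; every step costs a single risk-minimization (ERM) solve, and the iterates remain in $\calX$ by convexity. Appealing to the tight convergence rate for Halpern iteration — \citet{Lieder21:Convergence} and~\citet{Diakonikolas20:Halpern}, refining~\citet{Halpern67:Fixed,Wittmann92:Approximation} — the fixed-point residual decays as $\|\vx_t - G(\vx_t)\| = O(\mathrm{diam}(\calX)/t)$, so $t = O(\mathrm{diam}(\calX)/\delta)$ iterations produce a $\delta$-fixed point of $G$. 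For (ii), I would invoke the $\|\cdot\|$-analog of~\Cref{lemma:direc2}: a $\delta$-fixed point of $G$ is an $O_\delta(\delta)$-performatively stable point (the passage between $\|\cdot\|$ and the Euclidean norm used in~\Cref{def:approx-perfstab} costs only a dimension-dependent factor, absorbed into the $O(\cdot)$). Taking $\delta = \Theta_\epsilon(\epsilon)$ small enough, the algorithm makes $O_\epsilon(1/\epsilon)$ ERM calls — linear in $1/\epsilon$ — and for the quadratic loss each call is just a projection onto $\calX$.

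The step I expect to be the main obstacle is the Halpern rate under a possibly non-Euclidean norm: the sharp $O(1/t)$ bound on the residual is classically proved in Hilbert space through inner-product identities, whereas here we only control $\|\cdot\|$. A safe fallback — which I would include as a backup — is to iterate the damped map $G_\lambda(\vx) \defeq \lambda \vx_0 + (1 - \lambda) G(\vx)$ instead; this is a $(1-\lambda)$-contraction in $\|\cdot\|$, so a geometric number $O(\lambda^{-1} \log(1/\delta))$ of iterations reaches its unique fixed point $\vx_\lambda$, and with $\lambda = \delta / \mathrm{diam}(\calX)$ the point $\vx_\lambda$ is automatically a $\delta$-fixed point of $G$ because $\|\vx_\lambda - G(\vx_\lambda)\| = \lambda \, \|\vx_0 - G(\vx_\lambda)\| \le \lambda \, \mathrm{diam}(\calX)$. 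This already gives a $\widetilde{O}(1/\epsilon)$ algorithm; the role of the Halpern step-size schedule is exactly to shave the spurious logarithmic factor and recover a bound genuinely linear in $1/\epsilon$ (alternatively, one could cite Halpern-type convergence directly in uniformly smooth Banach spaces).
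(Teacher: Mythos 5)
Your proposal takes essentially the same route as the paper: the corollary is stated there as a direct consequence of the nonexpansiveness of $G$ established in the proof of \Cref{prop:contract-gen} together with the known $O(1/\epsilon)$ residual bound for Halpern iteration (the paper simply cites \citealp{Halpern67:Fixed,Wittmann92:Approximation,Lieder21:Convergence,Diakonikolas20:Halpern} and gives no separate argument). Your additional steps---converting the fixed-point gap to $\epsilon$-performative stability via a \Cref{lemma:direc2}-type bound, and flagging that the sharp Halpern rate is classically a Hilbert-space result (with the damped-map fallback losing only a log factor)---are careful elaborations of points the paper leaves implicit, not a different approach.
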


This is another setting in which RRM is inferior to alternative algorithms. The question that remains concerns the complexity when $\epsilon$ is exponentially small. We observe that this is as hard as a major open problem in complexity theory~\citep{Condon92:Complexity,etessami20:Tarski}.

\begin{proposition}
    \label{prop:SSG}
    Computing an $\epsilon$-performatively stable point even when the RRM map $G$ (\Cref{def:RRM}) satisfies $\| G (\vx) - G(\vx') \| < \|\vx - \vx' \|$ is as hard as solving a simple stochastic game (SSG).
\end{proposition}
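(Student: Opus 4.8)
The plan is to reduce the problem of solving a simple stochastic game (SSG) to that of computing an approximate performatively stable point, in the regime where the RRM map is a strict contraction. Recall that an SSG on $n$ vertices consists of \textsc{max}, \textsc{min}, and \textsc{average} (random) vertices together with a $0$-sink and a $1$-sink; by a standard reduction we may assume the game is \emph{stopping} (every play reaches a sink with probability one), in which case its value vector $\vv^\star \in [0,1]^n$ is the unique fixed point of the Bellman operator $T \colon [0,1]^n \to [0,1]^n$, whose $u$-th coordinate is the $\max$, $\min$, or average of the two successor coordinates and is $0$ (resp.\ $1$) at the sinks. The operator $T$ is nonexpansive for $\|\cdot\|_\infty$ but not strictly so. We fix this by inserting a halting probability $\gamma>0$: let $T_\gamma$ agree with $T$ at the sinks and equal $(1-\gamma)$ times the Bellman update at every other vertex. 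Then $T_\gamma$ maps $[0,1]^n$ into itself, is piecewise-linear hence efficiently computable, and is a \emph{uniform} $(1-\gamma)$-contraction for $\|\cdot\|_\infty$; let $\vv_\gamma$ denote its unique fixed point. A standard perturbation argument for stopping SSGs --- the expected number of non-sink steps is at most $2^{\mathrm{poly}}$ in the instance size, and each traversal of a halting edge changes the payoff by at most one --- gives $\|\vv_\gamma - \vv^\star\|_\infty \le \gamma\cdot 2^{\mathrm{poly}}$.

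Next I realize $T_\gamma$ as an RRM map. Take $\calX = [0,1]^n$, the loss $\ell(\vx;\vz) = \frac{1}{2}\|\vx-\vz\|_2^2$ (which satisfies \Cref{ass:norms}: e.g.\ it is $1$-strongly convex with respect to $\|\cdot\|_\infty$, and smooth and $L$-sensitive with $L = 1-\gamma$ in this norm), and the distribution map $\calD(\vx) = \delta_{T_\gamma(\vx)}$ --- a point mass; by the remark following \eqref{eq:g-def}, any $\calD(\vx)$ with mean $T_\gamma(\vx)$ would do. Since $T_\gamma$ maps $\calX$ into $\calX$, the RRM update is $G(\vx) = \argmin_{\vx' \in [0,1]^n} \frac{1}{2}\|\vx'-T_\gamma(\vx)\|_2^2 = T_\gamma(\vx)$, so $G$ is a uniform $(1-\gamma)$-contraction for $\|\cdot\|_\infty$, and in particular satisfies the promise $\|G(\vx)-G(\vx')\| < \|\vx-\vx'\|$. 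Note that this instance has $L\beta/\alpha > 1$ (the parameter bound underlying \Cref{prop:contract-gen} is loose here), so it lies strictly outside the scope of that proposition --- precisely the regime of interest.

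Finally I extract $\vv^\star$ from an approximate stable point. Let $\vxstar$ be an $\epsilon$-performatively stable point (\Cref{def:approx-perfstab}). Since $\E_{\vz\sim\calD(\vxstar)}\nabla_\vx\ell(\vxstar;\vz) = \vxstar - T_\gamma(\vxstar) = \vxstar - G(\vxstar)$, plugging $\vx = G(\vxstar)\in\calX$ into \Cref{def:approx-perfstab} yields $\|\vxstar - G(\vxstar)\|_2^2 \le \epsilon$ (the easy direction of \Cref{lemma:direc1}). As $G = T_\gamma$ is a $(1-\gamma)$-contraction with fixed point $\vv_\gamma$, the triangle inequality gives $\gamma\|\vxstar - \vv_\gamma\|_\infty \le \|\vxstar - G(\vxstar)\|_\infty \le \|\vxstar - G(\vxstar)\|_2 \le \sqrt{\epsilon}$, hence $\|\vxstar - \vv^\star\|_\infty \le \sqrt{\epsilon}/\gamma + \gamma\cdot 2^{\mathrm{poly}}$. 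Choosing $\gamma = 2^{-\mathrm{poly}}$ and then $\epsilon = \gamma^2\cdot 2^{-\mathrm{poly}}$ --- both of polynomial bit-length, with $\epsilon$ exponentially small --- makes $\|\vxstar - \vv^\star\|_\infty$ smaller than half the minimum gap between distinct rationals of denominator at most $2^{\mathrm{poly}}$. Since SSG values are such rationals (a classical fact), rounding $\vxstar$ coordinatewise recovers $\vv^\star$ exactly, which solves the game. The whole construction is polynomial-time computable from the SSG instance, giving the claimed hardness.

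The technical crux is the quantitative perturbation estimate $\|\vv_\gamma - \vv^\star\|_\infty \le \gamma\cdot 2^{\mathrm{poly}}$, which rests on passing to a stopping SSG and bounding the expected absorption time (equivalently, the sensitivity of SSG values to exponentially small changes in the transition probabilities); this is standard but needs care. A minor alternative to the halting-probability trick is to equip $\R^n$ with a weighted sup-norm $\|\vx\|_w = \max_u |x_u|/w_u$, with $w_u$ proportional to an upper bound on the expected absorption time from $u$, under which the \emph{unperturbed} Bellman operator of a stopping SSG is already a genuine contraction; this avoids perturbing the game at the cost of tracking the weights and the induced rescaling of $\epsilon$.
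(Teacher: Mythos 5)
Your proposal is correct and follows essentially the same route as the paper: reduce SSG to computing a fixed point of an $\ell_\infty$-contraction (the paper delegates this step to \citet{Condon92:Complexity} and \citet{etessami20:Tarski}, whereas you re-derive it via the discounting/halting trick and a rounding argument), and then realize that contraction as the RRM map via the quadratic loss with a point-mass distribution supported on $T_\gamma(\vx)$, exactly as in the paper's canonical construction. Your added details (the $(1-\gamma)$-scaling, the $\gamma\cdot 2^{\mathrm{poly}}$ perturbation bound, and the exponentially small $\epsilon$ needed for exact recovery) are consistent with the regime the paper targets and with its remark that the precondition concerns the Lipschitz constant of $G$ rather than $L\beta/\alpha$.
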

For this hardness result, it suffices to consider the $\ell_\infty$ norm, and follows from the fact that the SSG problem reduces to finding fixed points of contractions in the $\ell_\infty$ norm. Unlike our previous results, the precondition of~\Cref{prop:SSG} is in terms of the Lipschitz constant of $G$, and not the upper bound $L \beta/\alpha$ (per~\Cref{ass:norms}).

\subsection{Complexity for general convex domains}

\begin{figure*}
    \centering
    \scalebox{0.5}{\input{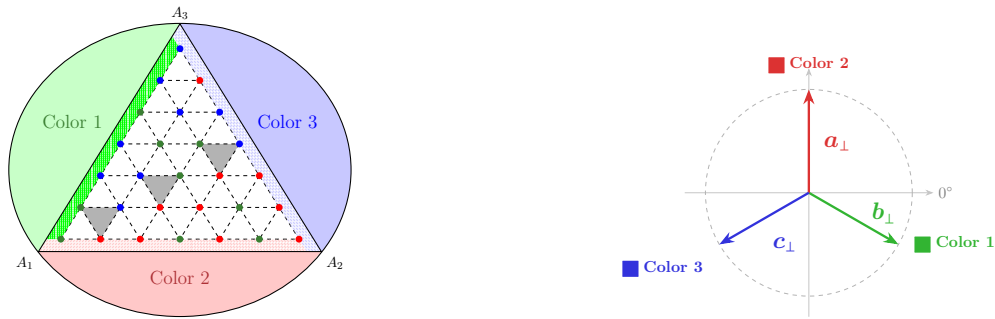}}
    \hspace{3cm}
    \scalebox{0.55}{\definecolor{colorOne}{RGB}{220, 50, 50}   
\definecolor{colorTwo}{RGB}{50, 180, 50}   
\definecolor{colorThree}{RGB}{50, 50, 220} 

\begin{tikzpicture}[scale=2.5, >=Stealth, thick]
    \coordinate (O) at (0,0);
    \coordinate (XAxis) at (1.5,0); 

    \draw[gray!45, thin, ->] (-1.2,0) -- (1.2,0) node[right, text=gray] {$0^\circ$};
    \draw[gray!45, thin, ->] (0,-1.2) -- (0,1.2);
    \draw[gray!60, thin, dashed] (O) circle (1cm);

    \coordinate (V1) at (-30:1);
    \draw[->, colorTwo, ultra thick] (O) -- (V1);
    \node[anchor=west, colorTwo] at ($(V1) + (0.1,0)$) 
         {\fcolorbox{colorTwo}{colorTwo}{\rule{0pt}{4pt}\rule{4pt}{0pt}} \textbf{Color 1}};
    \node[colorTwo, scale=1.5] at (-15:0.75) {$\vb_{\perp}$};

    \coordinate (V2) at (90:1);
    \draw[->, colorOne, ultra thick] (O) -- (V2);
    \node[anchor=south, colorOne] at ($(V2) + (0,0.1)$) 
         {\fcolorbox{colorOne}{colorOne}{\rule{0pt}{4pt}\rule{4pt}{0pt}} \textbf{Color 2}};
    \node[colorOne, scale=1.5] at (60:0.55) {$\va_{\perp}$};

    \coordinate (V3) at (210:1);
    \draw[->, colorThree, ultra thick] (O) -- (V3);
    \node[anchor=north east, colorThree] at ($(V3) + (-0.1,-0.1)$) 
         {\fcolorbox{colorThree}{colorThree}{\rule{0pt}{4pt}\rule{4pt}{0pt}} \textbf{Color 3}};
    \node[colorThree, scale=1.5] at (245:0.55) {$\vc_{\perp}$};


\end{tikzpicture}}
        \caption{Left: Illustration of the proof of \cref{theorem:PPAD_hardness_convex_set}. Regions outside $\triangle {A}_1 {A}_2 {A}_3$ are mapped with corresponding colors. The shaded bands represents the $\epsilon$-thickness strips used in the construction. Inside $\triangle {A}_1 {A}_2 {A}_3$ the domain forms a triangular grid with colors assigned to the vertices. Trichromatic triangles correspond to VI solutions, and are highlighted by shading. Right: Directional vector mapping induced by the coloring. The directions are chosen so that any point outside $\triangle {A}_1 {A}_2 {A}_3$ is pushed toward the interior of the triangle. As a result, all VI solutions must lie inside $\triangle {A}_1 {A}_2 {A}_3$.}
        \label{fig:PPAD_color}
\end{figure*}

While~\cref{theorem:PPAD_complete} characterizes the complexity of computing performatively stable points, it relies on prior results established with respect to a hypercube constraint set (that is, $[0, 1]^d$). However, this particular constraint set may not always be aligned with practical applications. For example, in modern machine learning it is often more common to impose an upper bound on the $\ell_2$ norm of the parameter $\vx$ \citep{Goodfellow-et-al-2016, hinton2012improving}, which translates to an $\ell_2$ ball constraint set.

To close this gap in the literature, we extend our hardness result to general convex sets under the mild assumption that the domain is \emph{well bounded}; this is a standard regularity condition in convex optimization~\citep{Grotschel93:Geometric}.

\begin{definition}[Well-bounded domains] \label{def:well-bounded}
    A convex and compact set $\mathcal{X}$ is called \emph{well bounded} if there exist $R_1 > 0$ and $R_2 > 0$ such that $\mathcal{B}_{R_1}(\mathbf{0}) \subseteq \mathcal{X} \subseteq \mathcal{B}_{R_2}(\mathbf{0}),$ where $\mathcal{B}_R(\mathbf{0})$ is the Euclidean ball centered at $\mathbf{0}$ with radius $R$.
\end{definition}

The well-bounded assumption is general and captures many common constraint sets of interest, including the hypercube and the $\ell_2$ ball. Our technical approach only requires that $\mathcal{X} \supseteq \mathcal{B}_{R_1}$ with respect to a two-dimensional ball, so our analysis can encompass sets that may not be fully dimensional, as is common in machine learning. Furthermore, without loss of generality, the center of the balls can be set at the origin by shifting the domain $\cal{X}$. 

We establish a complexity hardness result for solving variational inequalities (VIs) over well-bounded domains.

\begin{restatable}{theorem}{convexsethardness}
    \label{theorem:PPAD_hardness_convex_set}
    Given a convex and compact domain $\mathcal{X} \subset \mathbb{R}^d$ that is well bounded, an $L$-Lipschitz function $F: \mathcal{X} \to \mathbb{R}^d$, and $\epsilon = O(2^{-n})$, it is \PPAD-hard to find a point $\vx^* \in \mathcal{X}$ such that
    \begin{equation} \label{eq:target_VI}
        \left\langle\vx - \vx^*, F(\vx^*)\right\rangle \leq \epsilon \quad \forall \vx \in \mathcal{X}.
    \end{equation}
    This holds even when $d = 2$ and $L = O(1)$.
\end{restatable}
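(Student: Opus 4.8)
The plan is to reduce from \textsc{2D-Sperner}: given a triangulation of a triangle together with a Sperner coloring $\chi$ of its vertices by colors $\{1,2,3\}$ that is presented succinctly by a Boolean circuit on $n$ bits, find a trichromatic cell. This problem is \PPAD-complete in exactly this two‑dimensional, succinct form~\citep{Chen09:Settling}, which is why the target accuracy must be exponentially small. I would fix three unit vectors $u_1,u_2,u_3\in\mathbb{R}^2$ at pairwise angles $120^\circ$, so that $u_1+u_2+u_3=\vzero$, and choose the color–vector correspondence so that $u_i$ points from the centroid of the triangle toward its $i$‑th corner; the key algebraic fact is that the barycentric map $\lambda\mapsto\sum_i\lambda_i u_i$ is a linear isomorphism of the $2$‑simplex plane onto $\mathbb{R}^2$ with $O(1)$ condition number, so it vanishes only at $\lambda=(\tfrac13,\tfrac13,\tfrac13)$. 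Since $\calX$ is well bounded, I would first use the ellipsoid method on a membership/separation oracle for $\calX$~\citep{Grotschel93:Geometric} to compute a point $p$ and radius $r$ with $\mathcal{B}_r(p)\subseteq\calX$, $r=\Omega(R_1/\mathrm{poly}(n))$, and then shrink so $r\le 1$. I would place the given Sperner triangle $\Delta$ inside $\mathcal{B}_{r/3}(p)$ (so that it has inradius $\Theta(r)$ and mesh $\delta\defeq\Theta(r\,2^{-n})$), and enlarge it to $\Delta^+\supsetneq\Delta$ by attaching a collar of $O(1)$ extra triangulation layers whose coloring simply propagates the colors of $\partial\Delta$ straight outward. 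That collar uses only two colors near each edge and one near each corner, hence contains no trichromatic cell, while $\partial\Delta^+$ remains Sperner‑colored; by Sperner's lemma every trichromatic cell of the combined coloring then lies in $\Delta$ itself.

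Next I would define the $L$‑Lipschitz map $F\colon\calX\to\mathbb{R}^2$ in three regions. (A) \emph{Inside $\Delta^+$}: on each cell, $F$ is the affine interpolation of the vertex values $\delta\,u_{\chi(v)}$. By the algebraic fact above, $F$ vanishes on a cell exactly when the cell is trichromatic, its zero is then the barycenter, $\|F\|\ge\delta/2$ on every non‑trichromatic cell, and $\|F(\vx)\|=\Theta(\operatorname{dist}(\vx,\text{barycenter}))$ near a trichromatic barycenter; since all values have magnitude $O(\delta)$ and vary over cells of size $\delta$, $F$ is $O(1)$‑Lipschitz here. (B) \emph{The annulus $\mathcal{B}_r(p)\setminus\Delta^+$}: keep $\|F\|\equiv\delta$, and in polar coordinates $(\theta,\varrho)$ about $p$ let $\arg F$ interpolate linearly in $\varrho$ from the direction inherited on $\partial\Delta^+$ (which, thanks to the collar, lies in the $120^\circ$ arc around the outward edge‑normal $-u_k$, hence is "outward‑ish") to the radially inward direction $\theta+\pi$ on $\partial\mathcal{B}_r(p)$; this is nonvanishing and $O(1)$‑Lipschitz, and it can be arranged so that $F$ is never steeply outward when $\varrho$ is close to $r$. (C) \emph{Outside the ball}, $\calX\setminus\mathcal{B}_r(p)$: set $F(\vx)=\delta\,(p-\vx)/\|p-\vx\|$, which is continuous across $\partial\mathcal{B}_r(p)$, points strictly inward, and is $O(1)$‑Lipschitz because $\|p-\vx\|\ge r$. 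The resulting $F$ is globally $O(1)$‑Lipschitz and poly‑time computable: locate the cell of a query point and evaluate $\chi$ a constant number of times, or evaluate the closed‑form expressions in (B)–(C).

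To finish, I would take $\epsilon\defeq\Theta(r\,\delta\,2^{-n})=\Theta(r^2\,2^{-2n})$, which is $O(2^{-n})$. Existence of a solution to~\eqref{eq:target_VI} is immediate: at a trichromatic barycenter $\vx^*$ we have $F(\vx^*)=\vzero$, so $\langle\vx-\vx^*,F(\vx^*)\rangle=0\le\epsilon$ for all $\vx\in\calX$. For soundness, let $\vx^*$ satisfy~\eqref{eq:target_VI}. If $\vx^*$ lies in region (B) or (C) or in a non‑trichromatic cell, then $\|F(\vx^*)\|\ge\delta/2$ and, by the way the directions were chosen, there is a feasible $\vx\in\calX$ obtained by moving $\Omega(r)$ roughly along $F(\vx^*)$ — toward $p$ if $\vx^*$ is in (C) or near $\partial\mathcal{B}_r(p)$, and in any direction otherwise, since then $\vx^*$ sits deep inside $\mathcal{B}_r(p)$ — giving $\langle\vx-\vx^*,F(\vx^*)\rangle=\Omega(r\delta)>\epsilon$, a contradiction. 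Hence $\vx^*$ must lie in a trichromatic cell; applying the same $\Omega(r)$ displacement in direction $F(\vx^*)$ (feasible because $\vx^*\in\mathcal{B}_{r/3}(p)$) to~\eqref{eq:target_VI} forces $\|F(\vx^*)\|=O(\epsilon/r)\ll\delta$, so $\vx^*$ is within $o(\delta)$ of the barycenter and lies in that cell. Rounding $\vx^*$ to the grid then recovers a trichromatic cell of the \textsc{2D-Sperner} instance, completing a poly‑time reduction with $d=2$, $L=O(1)$, and $\epsilon=O(2^{-n})$. (Composing this with \Cref{theorem:VIred} would further transfer \PPAD‑hardness of computing first‑order performatively stable points to parameter spaces that are arbitrary well‑bounded convex sets, e.g.\ the Euclidean ball.)

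The hard part will be the construction and verification of the glue — region (B) together with the collar. One has to choose the collar coloring and the angular rotation so that, simultaneously: (i) $F$ is \emph{globally} $O(1)$‑Lipschitz across every interface (the magnitude stays $\Theta(\delta)$ throughout $\mathcal{B}_r(p)$ and the direction never jumps), (ii) $F$ is nonvanishing everywhere off the trichromatic barycenters — which relies on the Sperner boundary condition, since only two of the $u_i$ appear near each edge and a convex combination of two unit vectors at $120^\circ$ has norm $\ge\tfrac12$, together with the monotone rotation in the annulus, (iii) $F$ is never steeply outward near $\partial\mathcal{B}_r(p)$, so that the "always move $\Omega(r)$ roughly along $F(\vx^*)$" step goes through and no spurious solution appears in the annulus or on $\partial\calX$, and (iv) the collar introduces no new trichromatic cell, so Sperner's lemma still certifies a genuine solution inside $\Delta$. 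By contrast, the analysis inside $\Delta$ and in the far‑outside region, and the bookkeeping fixing the constants in $\delta$ and $\epsilon$, are routine.
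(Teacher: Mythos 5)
Your overall route is the same as the paper's: reduce from \TwoDSperner, embed a triangle in the inscribed ball guaranteed by well-boundedness, map the three colors to unit vectors at $120^\circ$ that sum to $\vzero$ (so only a trichromatic configuration can make the field small), make the field point inward outside the triangle so no spurious solutions appear near $\partial\calX$, and accept an exponentially small $\epsilon$ because $d=2$. The substantive divergence is in how $F$ is realized, and that is where your argument has a genuine gap. You define $F$ inside the triangle by exact cell location followed by affine interpolation of the vertex colors, and you justify computability by ``locate the cell of a query point and evaluate $\chi$ a constant number of times.'' But cell location is a discontinuous operation, and the coloring is only given by a Boolean circuit; the theorem (see the paper's remark following its proof) is about an operator $F$ presented as a well-behaved \emph{arithmetic} circuit, in which such a step is not available as a primitive. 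This mismatch between the continuous circuit computing $F$ and the discontinuous Boolean coloring is exactly what the paper identifies as the key technical challenge, and it is resolved there by the bit-extraction gadget of \citet{deligkas_et_al:LIPIcs.ICALP.2020.38} together with the $k$-point sampling lemma (at most two ``poorly-positioned'' samples), with $F$ defined as the average of the sampled color vectors and the soundness analysis absorbing the error of the two bad samples. Your construction sidesteps rather than solves this: it establishes hardness only in the weaker promise model where $F$ is an arbitrary polynomial-time Turing machine promised to be Lipschitz, which is not the statement being proved.

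A second, smaller incompleteness is the annulus/collar glue, which you yourself flag as ``the hard part'' (items (i)--(iv)) but do not carry out: the claims that the interpolated direction field is globally $O(1)$-Lipschitz, never vanishes off trichromatic barycenters, and is never steeply outward near $\partial\mathcal{B}_r(p)$ (so that the ``move $\Omega(r)$ along $F(\vx^*)$'' comparator is always feasible) are asserted, not verified. The paper handles the corresponding issue differently and more simply: it colors all of $\calX$ outside the triangle by nearest edge (its \eqref{eq:outside_coloring}), uses the \epsilonThickBrou boundary thickening in place of your collar, and then does an explicit case analysis showing that any point within $\epsilon/2$ of the boundary, or whose well-positioned samples miss a color, admits an explicit comparator (a triangle vertex or a point on a segment, or a short inward move) that violates the VI by more than $\epsilon'$. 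If you keep your annulus design, you must supply the analogous quantitative argument; as written, the soundness step for region (B) does not yet go through.
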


This result is of broader interest in the complexity of variational inequalities. The requirement that $\epsilon$ is exponentially small is necessary to prove hardness in low dimensions. In stark contrast, high-precision solutions can be attained in the contractive regime since RRM exhibits linear convergence.

\Cref{theorem:PPAD_hardness_convex_set} follows from constructing a polynomial-time reduction from the \TwoDSperner problem, which was shown to be \PPAD-complete by \citet{CHEN20094448}. Our proof proceeds by showing that, given a well-bounded domain $\cal{X},$ one can locate an equilateral triangle $\triangle {A}_1 {A}_2 {A}_3 \subseteq \cal{X}$, which will serve as the domain for the \TwoDSperner instance. To construct a continuous mapping $F:\cal{X} \to \cal{X}$ from the coloring of the \TwoDSperner problem, we carefully design an arithmetic circuit that converts the coloring of a given point to vectors in Euclidean space. A key technical challenge is that the arithmetic circuit $F$ is continuous, whereas the coloring in the \TwoDSperner problem is specified by a boolean circuit and is therefore discontinuous. As a result, $F$ cannot exactly match the coloring everywhere in $\cal{X}.$ We address this issue by employing a sampling technique introduced by \citet{deligkas_et_al:LIPIcs.ICALP.2020.38} which allows us to control the error arising from this mismatch. \cref{fig:PPAD_color} illustrates the coloring and the choice of directional vectors respectively. We defer the detailed proof to \cref{sec:append_convex_set_PPAD}.

We next apply our general result to the problem of computing performatively stable points. We show that, even under a general constraint set, \PPAD-hardness persists for some accuracy $\epsilon = O(2^{-n})$ when $\rho > 1.$ We summarize this result below.

\begin{corollary}
    For any convex compact domain $\cal{X}$ that is well-bounded, finding an $\epsilon$-performatively stable point per~\cref{def:perf-stab} is \PPAD-hard even when $L \beta / \alpha \leq 1 + \frac{\epsilon}{\epsilon'}$ for some $\epsilon' = O(2^{-n})$. This is so even when $\ell$ is a quadratic objective, $\ell(\vx; \vz) = \frac{1}{2} \| \vx - \vz \|_2^2 $.
\end{corollary}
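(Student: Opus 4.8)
The plan is to derive this corollary by composing the general‑domain \PPAD-hardness of \Cref{theorem:PPAD_hardness_convex_set} with the VI‑to‑performative‑stability reduction of \Cref{theorem:VIred}, after observing that the latter reduction never uses the hypercube structure and therefore transplants verbatim onto an arbitrary well‑bounded domain. Concretely, I would start from an instance furnished by \Cref{theorem:PPAD_hardness_convex_set}: a well‑bounded convex compact $\calX \subset \R^d$ (with $d = 2$ already sufficing), an $L$-Lipschitz map $F : \calX \to \R^d$ with $L = O(1)$, and an exponentially small accuracy $\epsilon' = O(2^{-n})$ for which finding $\vxstar$ with $\langle \vx - \vxstar, F(\vxstar)\rangle \le \epsilon'$ for all $\vx\in\calX$ is \PPAD-hard. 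Fixing any desired precision $\epsilon$ (which we are free to take exponentially small, indeed $\epsilon \ll \epsilon'$), I would build a performative prediction instance on the \emph{same} set $\calX$, with loss $\ell(\vx;\vz) = \tfrac12\|\vx - \vz\|_2^2$ and $\calD(\vx)$ the point mass at $g(\vx) \defeq \vx - \tfrac{\epsilon}{\epsilon'} F(\vx)$ --- the sign being chosen to reconcile the ``$\le \epsilon'$'' convention of \Cref{theorem:PPAD_hardness_convex_set} with the ``$\ge -\epsilon$'' convention of \Cref{def:approx-perfstab} (if need be, replace $F$ by $-F$).

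Next I would read off the parameters. For $\ell(\vx;\vz) = \tfrac12\|\vx - \vz\|_2^2$ one has $\alpha = \beta = 1$; since $g$ is $\bigl(1 + \tfrac{\epsilon}{\epsilon'}L\bigr)$-Lipschitz, the sensitivity of $\calD$ is $1 + \tfrac{\epsilon}{\epsilon'}L$, whence $\contr = L\beta/\alpha = 1 + \tfrac{\epsilon}{\epsilon'}L \le 1 + \tfrac{\epsilon}{\epsilon''}$ with $\epsilon'' \defeq \epsilon'/L = \Theta(\epsilon') = O(2^{-n})$. Because $\ell$ is $1$-strongly convex, first‑order stability (\Cref{def:approx-perfstab}) coincides with the notion of \Cref{def:perf-stab} via \Cref{claim:first-order-stable-equivalence}, so it suffices to reason about the former. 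For any $\epsilon$-performatively stable $\vxstar$, $\E_{\vz\sim\calD(\vxstar)}[\nabla_\vx\ell(\vxstar;\vz)] = \vxstar - g(\vxstar) = \tfrac{\epsilon}{\epsilon'}F(\vxstar)$, so $\langle \vx - \vxstar,\ \tfrac{\epsilon}{\epsilon'}F(\vxstar)\rangle \ge -\epsilon$ for all $\vx\in\calX$, which rearranges to $\langle \vx - \vxstar, F(\vxstar)\rangle \ge -\epsilon'$, i.e.\ (after the sign bookkeeping) to an $\epsilon'$-solution of the \PPAD-hard VI. Hence computing an $\epsilon$-performatively stable point of our instance is \PPAD-hard with $\contr \le 1 + \epsilon/\epsilon''$ and $\epsilon'' = O(2^{-n})$, as claimed.

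Finally I would record the passage between the two quadratic losses: $\tfrac12\|\vx - \vz\|_2^2 = \tfrac12\|\vx\|_2^2 - \vx^\top\vz + \tfrac12\|\vz\|_2^2$, and the last summand does not depend on $\vx$, so both $\argmin_{\vx\in\calX}\E_{\vz\sim\calD(\vxstar)}[\ell(\vx;\vz)]$ and the gradient $\nabla_\vx\ell$ are unchanged relative to the instance class~\eqref{eq:strongly-convex-objective}; thus the reduction of \Cref{theorem:VIred} goes through verbatim for the ``ML‑flavored'' loss $\tfrac12\|\vx - \vz\|_2^2$ stated in the corollary. Note also that, unlike in \Cref{theorem:PPAD_complete}, $\calD$ need not be affine here --- a generic Lipschitz $g$ is permitted, which is exactly what the general‑domain VI hardness supplies.

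I do not anticipate a genuine obstacle: the entire argument is a parameter‑tracking composition. The one point that merits an explicit sentence is the claim that the reduction of \Cref{theorem:VIred} relies only on convexity and compactness of $\calX$ together with the algebra of the quadratic loss --- never on $\calX = [0,1]^d$ --- so that it applies unchanged to any well‑bounded $\calX$; the substantive work, namely exhibiting a \PPAD-hard Lipschitz VI over an arbitrary well‑bounded domain in constant dimension, has already been discharged by \Cref{theorem:PPAD_hardness_convex_set}. A secondary sanity check I would include is the quantifier ordering: hardness holds for \emph{some} exponentially small $\epsilon,\epsilon'$, so $\contr$ is $1 + \Theta(2^{-n})$ --- strictly above $1$ yet exponentially close to it --- consistent with the phase transition of \Cref{fig:phasetransition} and with the fact (noted after \Cref{theorem:PPAD_hardness_convex_set}) that exponential precision is unavoidable when $d$ is constant.
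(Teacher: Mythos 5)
Your proposal is correct and follows essentially the same route as the paper: the paper's proof is exactly the composition of \Cref{theorem:PPAD_hardness_convex_set} with the construction behind \Cref{theorem:VIred}/\Cref{theorem:PPAD_complete}, taking $g(\vx) = \vx + \frac{\epsilon}{\epsilon'}F(\vx)$ (your choice of $\vx - \frac{\epsilon}{\epsilon'}F(\vx)$ with $F$ replaced by $-F$ is the same thing up to the sign convention you already flag). Your extra bookkeeping --- absorbing the $O(1)$ Lipschitz constant into $\epsilon'$, noting the quadratic losses differ by an $\vx$-independent term, and invoking \Cref{claim:first-order-stable-equivalence} --- is all consistent with what the paper leaves implicit.
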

The proof is similar to~\Cref{theorem:PPAD_complete}, using $g(\vx) = \vx + \frac{\epsilon}{\epsilon'}F(\vx)$, where $F(\vx)$ is the operator in \cref{theorem:PPAD_hardness_convex_set}.

\begin{remark}
    By choosing $\epsilon < \epsilon' = O(2^{-n}),$ the expansion parameter $L \beta / \alpha$ can be made arbitrarily close to $1$, while \PPAD-hardness still persists.
\end{remark}
\section{Strategic classification}
\label{sec:strat-class}

As highlighted in~\Cref{sec:related}, performative prediction encompasses the problem of strategic classification. The complexity of computing performatively optimal points---also known as ``strategic maxima'' in this line of work---was already shown to be \NP-hard in the original paper by~\citet{Hardt16:Strategic}. Two natural question arise: i) what is the complexity of \emph{local} performative optimality? And ii) what is the complexity of performative stability in strategic classification? Concerning the second question, the class of problems we have considered so far is no longer suitable because strategic classification is a more structured problem. Before we proceed, let us first recall the basic definition of strategic classification. 

\begin{definition}[Strategic classification; \citealp{Hardt16:Strategic}]
    \label{def:strat-class}
    Strategic classification is a game played betwen the \emph{Jury} and the \emph{Contestant}. Let $\calD$ be a distribution over a population $X$, $c : X \times X \to \R_{\geq 0}$ a cost function, and $h$ a target classifier.
    \begin{enumerate}
        \item The Jury first publishes a classifier $f : X \to \{0, 1\}$, which may depend on the cost function $c$, the distribution $\calD$, and the target classifier $h$.
        \item The Contestant, who knows $c$, $h$, $\calD$, and $f$, selects a deviation $\Delta: X \to X$.
    \end{enumerate}
    The payoff to the Jury is $\Pr_{\vx \sim \calD} [ h(\vx) = f(\Delta(\vx))]$ and the payoff to the Contestant is $\E_{\vx \sim \calD} [ f(\Delta(\vx)) - c(\vx, \Delta(\vx))]$. 
\end{definition}
Strategic classification is commonly formulated as a Stackelberg game between the Jury and the Contestant, in which the Contestant always best-responds to the classifier published by the Jury, while the Jury seeks to maximize their utility while accounting for the strategic deviation of the Contestant. The strategic maximum is defined as follows.

\begin{definition}[Strategic maximum; \citealp{Hardt16:Strategic}]
    Give a population $X$, a distribution $\cal{D}$ over the population, a cost function $c: X \times X \to \mathbb{R}_{\geq 0}$, and a target classifier $h$, a classifier $f^*$ for the Jury is said to be at strategic maximum if
    \begin{equation*}
        f^* \in \argmax_{f : X \to \{0, 1\}} \Pr_{\vx \sim \calD} [ h(\vx) = f(\Delta(\vx))].
    \end{equation*}
\end{definition}

\paragraph{Strategic local optimality}
Finding a global optimum in strategic classification has a wide range of applications,  but is computationally intractable. In particular, \citet{Hardt16:Strategic} established \NP-completeness. A natural question is to consider local search algorithms for the Jury, who can explore local moves into nearby configurations with the goal to converge to a local optimum. 

We begin by formally defining the notion of local moves for the Jury. Consider a finite population $X$ with $|X| = n,$ and a classifier $f$. We say that the Jury makes a local move to a nearby configuration by updating the label of a single data point. Formally, a classifier $f'$ is a nearby configuration of $f$ if there exists an index $i \in [n]$ such that
\begin{equation*}
    f'(X) = f(X) \oplus \ve_i,
\end{equation*}
where $\oplus$ denotes the XOR operation and $\ve_i$ is the $i$th standard basis vector. We denote by $\mathcal{N}(f)$ the set of all classifiers that can be reached from $f$ with one local move. We now define the notion of strategic local optimum.
\begin{definition}\label{def:strat_local_opt}
    Given a finite population $X$, a distribution $\cal{D}$ over $X$, a cost function $c$, and a target classifier $h$, a classifier $f^*$ is said to be at \emph{strategic local optimum} if
    \begin{align*}
         \Pr_{\vx \sim \calD}  [ h(\vx) = f^* (\Delta(\vx))] 
        = \!\! \max_{f \in \mathcal{N}(f^*)} \Pr_{\vx \sim \calD} [ h(\vx) = f(\Delta(\vx))].
    \end{align*}
\end{definition}

We proceed to state the main result of this section.

\begin{restatable}{theorem}{PLSHardness} \label{theorem:PLS_hardness}
    Given a finite population $X$, a distribution $\cal{D}$ over $X$, a cost function $c$, and a target classifier $h$, it is \PLS-hard to find a strategic local optimum as in \cref{def:strat_local_opt}. This result holds even when $c$ is a metric and the target classifier $h$ is provided explicitly to the algorithm.
\end{restatable}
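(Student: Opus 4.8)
The plan is to reduce from \localmaxcut, a classical \PLS-complete problem: given an edge-weighted graph $G=(V,E,w)$, find a vertex two-coloring whose cut value cannot be increased by recoloring a single vertex. This meshes with \cref{def:strat_local_opt}, since the Jury's strategy space $\{0,1\}^X$ under the single-label-flip neighborhood $\mathcal{N}(\cdot)$ mirrors the single-vertex-flip neighborhood of \maxcut. I would take the population $X$ to consist of one point $q_v$ per vertex $v\in V$ plus a constant number of auxiliary ``gadget'' points per edge; put $\calD$-mass on the gadget points in proportion to edge weights; realize the cost $c$ as the shortest-path metric on an auxiliary graph whose edges join exactly the pairs of points that should be ``close,'' with common edge length in the open interval $(1/2,1)$ so that no pairwise distance equals $1$ (hence the Contestant is never indifferent between staying and relocating); and define the target classifier $h$ point by point. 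The instance is then explicit, $c$ is a metric, and everything has polynomial size.

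The first step is to characterize the Contestant's best response: since $\E_{\vx\sim\calD}[f(\Delta(\vx))-c(\vx,\Delta(\vx))]$ separates over population points, the best response sends $\vx$ to the nearest point labeled $1$ when that point lies within cost $1$, and otherwise leaves $f(\Delta_f(\vx))=f(\vx)$. Consequently the Jury's utility is $U(f)=\sum_{\vx}\calD(\vx)\,\mathbbm{1}[g_\vx(f)=h(\vx)]$, where $g_\vx(f)$ is the disjunction of the labels $f(\vx')$ over the ball $\{\vx'\colon c(\vx,\vx')<1\}\cup\{\vx\}$; in particular each summand is a (possibly complemented) \textsc{Or} of labels over a metric ball. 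The second step is to engineer, via the identity $\mathrm{XOR}(a,b)=2(a\vee b)-a-b$, an edge gadget whose net contribution to $U$ is $w_{uv}\,\mathrm{XOR}(f(q_u),f(q_v))$: an $h{=}1$ gadget point with ball $\{q_u,q_v\}$ supplies $2w_{uv}(f(q_u)\vee f(q_v))$, while $h{=}0$ gadget points with singleton-vertex balls supply the linear terms $-w_{uv}f(q_u)$ and $-w_{uv}f(q_v)$; summing over $E$, the restriction of $U$ to $(f(q_v))_{v\in V}$ should become $\text{const}+\mathrm{cut}_G$. The third step is then to establish a two-way correspondence between strategic local optima of the constructed instance and local optima of \maxcut on $G$, which reduces to pinning down the auxiliary coordinates (via sufficiently heavily weighted penalty points) so that at a strategic local optimum only the vertex coordinates remain free and carry the \maxcut optimality condition.

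I expect the third step --- controlling the auxiliary coordinates --- to be the main obstacle, for a structural reason: because every summand of $U$ is an \textsc{Or} of labels, the ``all labels $1$'' Jury strategy is a local optimum of $U$ for \emph{every} instance (a single flip cannot turn any such \textsc{Or} to $0$), whereas its vertex-coordinates $q_v\equiv 1$ need not be a \maxcut local optimum. Thus a naive gadget admits spurious solutions, and the reduction must be arranged so that every strategic local optimum --- including the ``saturated'' ones --- still decodes in polynomial time to a genuine \maxcut local optimum; getting the penalty/forcing gadgets right so that this holds (e.g.\ through a suitably redundant encoding of the vertex labels) is the delicate part. The remaining verifications --- that $c$ is a metric via shortest paths, that the Contestant faces no ties, that $|X|$ and all weights and masses are polynomial, and that $h$ is provided as an explicit table while $c$ is an explicit metric --- are routine, and \PLS-hardness follows once the decoding map is shown to send local optima to local optima.
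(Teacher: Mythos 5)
Your overall plan coincides with the paper's: a reduction from \localmaxcut with one population point per vertex, per-edge gadget points whose $\calD$-masses are proportional to edge weights, a two-valued metric (the paper uses $0.8$ and $1.2$) so that ``closeness'' determines the Contestant's admissible deviations, and an accounting in which flipping a single vertex label changes the Jury's utility by exactly the change in cut weight, so that strategic local optima are meant to decode to local max cuts. The difference is that the paper commits to a concrete gadget: for each edge $(u,v)$ a point $x_{(u,v)^+}$ with $h=1$ and mass $2w_{(u,v)}$, close to $x_{u^-}$, $x_{v^-}$, and to a heavier blocker $x_{(u,v)^-}$ with $h=0$ and mass $2w_{(u,v)}+1$; the blocker is supposed to force $f^*(x_{(u,v)^+})=f^*(x_{(u,v)^-})=0$ at any strategic local optimum, after which only the vertex labels are free and their single flips reproduce the \maxcut flip neighborhood.

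The genuine gap in your proposal is that your third step---the forcing/penalty mechanism that eliminates (or correctly decodes) the saturated local optima---is never actually constructed; you rightly observe that with OR-shaped per-point utilities the all-ones classifier is a local optimum of every such instance, but you leave exactly this point as ``the delicate part,'' so the reduction is incomplete as written. You should also be aware that this is not a routine detail that the intended construction silently takes care of: the paper's argument that both edge points must be labeled $0$ at a strategic local optimum only treats the configurations where at most one of $x_{(u,v)^+},x_{(u,v)^-}$ is labeled $1$. When both are labeled $1$---for instance under the all-ones classifier---flipping $x_{(u,v)^-}$ to $0$ gains nothing (it then deviates to the still-positively-labeled $x_{(u,v)^+}$ and remains misclassified), flipping $x_{(u,v)^+}$ to $0$ gains nothing (it deviates to $x_{(u,v)^-}$ and is still classified as $1=h$), and vertex flips are likewise non-improving (a vertex point relabeled $0$ deviates to an incident positively-labeled edge point and stays misclassified); so the saturated labeling is a strategic local optimum of that instance yet decodes to the trivial all-on-one-side cut. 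In other words, the obstacle you identified is precisely where additional work is needed---a gadget in which the $(1,1)$ configuration admits a strictly improving single flip, or a decoding/potential argument that handles such plateaus---and neither your sketch nor the two-point blocker gadget as stated supplies it.
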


\cref{theorem:PLS_hardness} shows that, unless there is a collapse in the complexity hierarchy (specifically, $\P = \PLS$), finding even a local optimum in strategic classification cannot be achieved in polynomial time.

The proof of \cref{theorem:PLS_hardness} takes an instance of \localmaxcut problem, which was shown to be \PLS-complete by \citet{schaffer1991simple}, and constructs a polynomial-time reduction to the problem of finding a strategic local optimum. 

In particular, let $G = (V, E, w)$ be a weighted undirected graph with edge weights $w_{(u, v)} \geq 0$ for any edge $(u, v) \in E.$ We construct an instance of strategic classification consisting of a finite population $X$ and a non-uniform distribution $\cal{D}$ over $X$. For each vertex $v \in V,$ we introduce a point $x_{v^-}$ with label $h(x_{v^-}) = 0.$ For each edge $(u, v) \in E$, we introduce two points $x_{(u, v)^+}$ and $x_{(u, v)^-}$ with labels $h(x_{(u, v)^+}) = 1$ and $h(x_{(u, v)^-}) = 0.$ We design the cost function $c$ with specific values such that the admissible deviations of the Contestant satisfy the following: the point $x_{(u, v)^-}$ may deviate only to $x_{(u, v)^+}$, and the vertex points $x_{u^-}$ and $x_{v^-}$ may deviate only to $x_{(u, v)^+}$. \cref{fig:PLS_gadget} illustrates the gadget for a simple graph with two vertices and one edge, where the dashed edges represent admissible deviations for the Contestant.

\begin{figure}
    \centering
    \scalebox{0.7}{\begin{tikzpicture}[
    node distance=2.5cm,
    font=\sffamily,
    vertex_point/.style={
        circle, 
        draw=blue!80!black, 
        fill=blue!10, 
        thick, 
        minimum size=1.2cm, 
        align=center
    },
    edge_pos_point/.style={
        rectangle, 
        draw=orange!80!black, 
        fill=orange!10, 
        thick, 
        minimum size=1.2cm, 
        rounded corners=3pt,
        align=center
    },
    edge_neg_point/.style={
        rectangle, 
        draw=orange!80!black, 
        fill=orange!10, 
        thick, 
        minimum size=1.2cm, 
        rounded corners=3pt,
        align=center
    },
    metric_link/.style={
        draw=gray!80, 
        line width=2pt, 
        dashed,
        shorten >=2pt, shorten <=2pt
    },
    strat_move/.style={
        ->, 
        color=red!80!orange, 
        line width=1.5pt, 
        >=latex, 
        dashed
    }
]

    \node[vertex_point] (u) at (-3, 3) {
        $x_{u^-}$ \\ 
        \footnotesize $h=0$
    };
    
    \node[vertex_point] (v) at (3, 3) {
        $x_{v^-}$ \\ 
        \footnotesize $h=0$
    };

    \node[edge_pos_point] (e_pos) at (0, 0) {
        $x_{(u, v)^+}$ \\ 
        \footnotesize $h=1$
    };

    \node[edge_neg_point] (e_neg) at (0, -3.5) {
        $x_{(u, v)^-}$ \\ 
        \footnotesize $h=0$
    };

    
    \node[above=0.2cm of u, text=black, font=\footnotesize] {Prob. $\propto \sum_{u' \in \mathcal{N}(u)} w_{(u, u')} $};
    \node[above=0.2cm of v, text=black, font=\footnotesize] {Prob. $\propto \sum_{v' \in \mathcal{N}(v) } w_{(v, v')}$};
    
    \node[right=0.2cm of e_pos, text=black, font=\footnotesize, align=left] {Prob. $\propto 2 w_{(u,v)}$};
    \node[left=0.2cm of e_pos, text=black, font=\bfseries\footnotesize] {$f(x_{(u, v)^+}) = 0$};

    \node[right=0.2cm of e_neg, text=black, font=\footnotesize, align=left] {Prob. $\propto 2w_{(u,v)} + 1$};
    \node[left=0.2cm of e_neg, text=black, font=\bfseries\footnotesize] {$f(x_{(u, v)^-}) = 0$};

    \draw[metric_link] (u) -- node[midway, fill=white, inner sep=1pt, text=black] {\small 0.8} (e_pos);
    \draw[metric_link] (v) -- node[midway, fill=white, inner sep=1pt, text=black] {\small 0.8} (e_pos);
    \draw[metric_link] (e_pos) -- node[midway, fill=white, inner sep=1pt, text=black] {\small 0.8} (e_neg);

    
    \draw[strat_move, bend left=15] (e_pos) to node[midway, below left, text=red!90!black, font=\bfseries\footnotesize, align=center] {Strategic move\\if $f(x_{u^-})=1$} (u);
    
    \draw[strat_move, bend right=60] (e_neg) to node[midway, right, xshift=2pt, text=red!90!black, font=\bfseries\footnotesize, align=left] {Strategic move\\if $f(x_{(u, v)^+})=1$} (e_pos);


\end{tikzpicture}}
    \caption{Our basic edge gadget for edge $(u, v)$. The Jury would like to classify $x_{(u, v)^+}$ as 1, but $x_{(u, v)^-}$ would then strategically deviate to $x_{(u, v)^+}$. This forces the Jury to pick a classifier such that $f(x_{(u, v)^+}) = 0 = f(x_{(u, v)^-})$. Furthermore, if the Jury switches the label of $x_{u^-}$ from $0$ to $1$, all edges incident to $u$ in the graph that were previously classified as $0$ can profitably deviate to $x_{u^-}$. The change in the Jury's utility reflects the change in the weight of the cut induced by $f$.}
    \label{fig:PLS_gadget}
\end{figure}
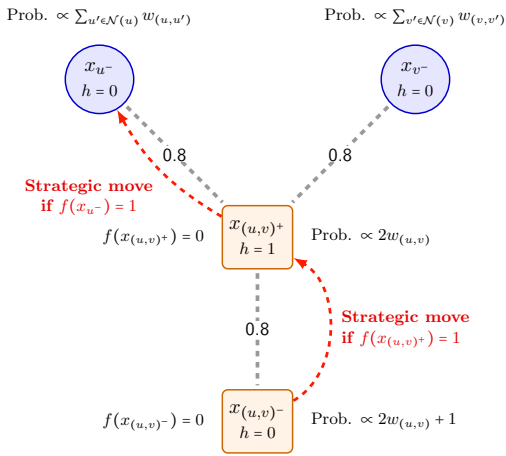

The first observation is that for any classifier $f^*$ that is at a strategic optimum, $f^*(x_{(u, v)^-}) = 0$ and $f^*(x_{(u, v)^+}) = 0.$ This follows from the fact that the distribution $\mathcal{D}$ assigns higher probability to $x_{(u, v)^-}$ than to $x_{(u, v)^+}.$ As a result, any classifier that labels either $x_{(u, v)^+}$ or $x_{(u, v)^-}$ as positive will incur a net loss due to misclassifying $x_{(u, v)^-}$. 

The result is that the Jury will only label vertex points $x_{v^-}$ as positive and improve their utility through the deviation of $x_{(u, v)^+}$ to the corresponding vertex points $x_{u^-}$ or $x_{v^-}.$ Through a careful design of the distribution weights, we show that any strategically local optimal classifier $f^*$ induces a cut of the original graph $G$: vertices $v \in V$ with $f^*(x_{v^-}) = 0$ lie on one side of the cut, while vertices $u \in V$ with $f^*(x_{u^-}) = 1$ lie on the other. Moreover, the strategic local optimality of $f^*$ ensures the local optimality of the induced cut. We defer the full proof to \cref{sec:append_PLS_proof}.

Notably, our construction in~\cref{theorem:PLS_hardness} based on \maxcut also yields an alternative proof that finding a global strategic optimum is \NP-complete~\citep{Hardt16:Strategic}.

\paragraph{Performative stability with endogenous costs} Moreover, we establish hardness for computing a Nash equilibrium---which translates to a performatively stable point---in an extension of~\Cref{def:strat-class} in which the Jury can also affect the cost incurred by deviating. This extension captures the fact that, in reality, decision-makers often act as regulators who go beyond merely classifying~\citep{Alhanouti25:Robust}; they actively dictate the cost structure by imposing sanctions and penalizing more certain deviation types, for example through audits~\citep{Estornell21:Incentivizing}. We refer to this setting as strategic classification with \emph{endogenous costs} (\Cref{def:strat-class-end}). Compared to~\Cref{def:strat-class}, our definition also posits that the Jury is constrained to select a classifier from a specified set of classifiers, and similarly for the Contestant. We find that this class of problems is rich enough to \PPAD-hard problems.

\begin{restatable}{proposition}{endogcosts}
    \label{prop:endogenous}
    Computing a performatively stable point in strategic classification with endogenous costs is \PPAD-hard.
\end{restatable}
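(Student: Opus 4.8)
The plan is to give a polynomial-time reduction from the problem of computing a Nash equilibrium of a two-player general-sum game, which is \PPAD-complete~\citep{Daskalakis09:The,Chen09:Settling}, to the problem of computing a performatively stable point of a strategic classification instance with endogenous costs. The conceptual starting point is the same correspondence exploited throughout \Cref{sec:complexity}: a performatively stable point (\Cref{def:perf-stab}) is precisely a Nash equilibrium of the game in which player~$1$ is the Jury — now choosing a classifier \emph{together with} a cost structure, from its prescribed menu, with mixing allowed — and player~$2$ is the Contestant — choosing a deviation from its prescribed menu (\Cref{def:strat-class-end}) — with payoffs as specified there. Hence it suffices to show that the bilinear payoff structure of an arbitrary bimatrix game $(\mat{A},\mat{B})$, say with entries in $[0,1]$, can be \emph{realized}, up to an affine normalization that leaves the set of (approximate) Nash equilibria unchanged, by such an instance.

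First I would fix the strategy sets: the Jury's menu is a list $(f_1,c_1),\dots,(f_m,c_m)$ of (classifier, cost function) pairs and the Contestant's menu is a list $\Delta_1,\dots,\Delta_n$ of deviations, so that the two players have exactly $m$ and $n$ pure actions, corresponding to the rows and columns of $(\mat A,\mat B)$. Next I would build the population $X$, the target classifier $h$, and the distribution $\calD$ from elementary gadgets, where each gadget is a small group of data points whose contribution to the Jury's accuracy $\Pr_{\vx\sim\calD}[h(\vx)=f_i(\Delta_j(\vx))]$ is a rank-one $0/1$ pattern in $(i,j)$: a data point is classified correctly exactly when $i$ lies in some set $S$ and $j$ in some set $T$, which one arranges by placing its image under $\Delta_j$, for $j\in T$, on a vertex that $f_i$ labels $h(\vx)$ precisely for $i\in S$. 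Choosing the gadgets' probability weights lets their sum equal any nonnegative matrix, in particular $\mat A$ after normalization. The Contestant's payoff $\E_{\vx\sim\calD}[f_i(\Delta_j(\vx))-c_i(\vx,\Delta_j(\vx))]$ contains the term $\E[f_i(\Delta_j(\vx))]$ already pinned down by these gadgets; I would then use the \emph{endogenous} cost $c_i$ — selected by the Jury, not the Contestant — to absorb the residual, setting $\E[c_i(\vx,\Delta_j(\vx))]$ equal to that term minus $\mat B_{ij}$ (after a global shift of $\mat B$ making the required costs nonnegative, and, for the stronger statement, padding with auxiliary points so that each $c_i$ can be taken to be a metric exactly as in the construction of \Cref{theorem:PLS_hardness}). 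Then a mixed strategy of the Jury (resp.\ Contestant) is literally a distribution $\vp\in\Delta_m$ (resp.\ $\vq\in\Delta_n$), the expected payoffs equal $\vp^\top\mat A\vq$ and $\vp^\top\mat B\vq$ up to the fixed affine change, and best responses — hence Nash equilibria, hence performatively stable points — are preserved in both directions, giving the \PPAD-hardness.

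The main obstacle I expect is precisely the payoff-realization step: verifying that the structurally constrained payoffs of strategic classification with endogenous costs — the Jury's payoff is a \emph{probability}, and the Contestant's has the special form ``classification gain minus a nonnegative (metric) cost'' — are expressive enough to encode an arbitrary bimatrix game, which is a careful gadget construction plus bookkeeping of the normalization so that it is simultaneously valid for $\mat A$ and $\mat B$, keeps all probabilities in $[0,1]$, and keeps all costs nonnegative (and metric, if one wants that refinement). A secondary point to check is that the approximation notions align: an $\epsilon$-approximately stable point of the constructed instance maps, under the normalization, to an $\epsilon'$-Nash equilibrium of $(\mat A,\mat B)$ with $\epsilon'$ only polynomially larger, so that the known hardness of (approximate) bimatrix Nash transfers; conversely, existence of performatively stable points is automatic since the instance is, by construction, a finite two-player game. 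Combining these yields the claimed \PPAD-hardness, complementing the \NP-hardness of \citet{Hardt16:Strategic} for global strategic optimality and the \PPAD-hardness of \Cref{theorem:PPAD_complete} for the unstructured setting.
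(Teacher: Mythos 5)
Your proposal is sound and follows the same reduction template as the paper's proof of \Cref{prop:endogenous}: reduce from bimatrix Nash, identify performatively stable points with mixed Nash equilibria of the Jury--Contestant game, encode one payoff matrix through classification accuracy and the other through the endogenous costs. Where you differ is in the realization. The paper works with a win-loss game $(\mat{A},\mat{B})$ and a very lean instance: the population is just the $n$ rows under the uniform distribution, the Contestant's menu consists of constant deviations, the Jury's classifiers are set directly via $f_j(\vx_i)=1-\mat{B}_{ij}$ (so the Jury's payoff \emph{is} $\mat{B}$), and the Contestant's matrix is only encoded approximately, via star-metric costs $2M-M\mat{A}_{ij}$ whose big-$M$ term dominates the $\{0,1\}$ classification gain; the conclusion is that an equilibrium of the constructed instance is a $1/M$-Nash equilibrium of the original game. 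You instead realize both matrices \emph{exactly} up to positive affine transformations: a gadget point per entry (with auxiliary landing/dead points of negligible weight) reproduces the Jury's matrix, and the endogenous costs are chosen to cancel $\E_{\vx\sim\calD}[f_i(\Delta_j(\vx))]$ and leave exactly the shifted second matrix, giving an exact equilibrium correspondence with no big-$M$ or approximation bookkeeping. What your route buys is a cleaner bidirectional map between equilibria and applicability to arbitrary $[0,1]$-bimatrix games; what it costs is a larger, more elaborate instance and precisely the verification you flag yourself: you must check that a single cost function $c_i$ can simultaneously realize all prescribed values $\E_{\vx\sim\calD}[c_i(\vx,\Delta_j(\vx))]$ across $j$ (e.g., by routing each deviation $\Delta_j$ through $j$-specific landing points, or adding a dedicated positive-weight cost-carrier point), that zero entries of the encoded matrix do not starve gadget points of weight, and that the shift keeps all costs nonnegative (metricity is not needed for \Cref{prop:endogenous}, only for \Cref{theorem:PLS_hardness}). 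These are routine but necessary details; with them filled in, your argument is a valid alternative proof.
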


The proof is based on a reduction from a win-loss two-player general-sum game $(\mat{A}, \mat{B})$ ~\citep{Abbott05:Complexity}. We associate each column to a separate classifier and each row to a point drawn uniformly from $X$. Each classifier $f_j$ labels the points in accordance with $\mat{B}_{:, j}$. The target classifier is taken to be $h(\vx) = 0$ for all $\vx \in X$. The Contestant can pick any constant deviation, and the payoff matrix $\mat{A}$ is encoded through the costs, making them the dominant component of the Contestant's utility. The detailed argument is deferred to~\Cref{sec:proofs}.

This reduction heavily relies on the presence of endogenous costs. The complexity of computing performatively stable points based on the more common~\Cref{def:strat-class} remains an open problem. Because of the particular payoff structure, we suspect that the latter problem may be easier.

\section{Conclusion}

We have established a sharp computational phase transition for performative stability, showing that while slight expansiveness is tractable, the problem quickly becomes \PPAD-hard. We also characterized the complexity of computing local strategic maxima in strategic classification. An important question that arises from our results is to characterize the complexity of computing performatively stable points in strategic classification per~\Cref{def:strat-class}. Another interesting avenue for future research is to close the gap between the expansiveness tolerance of our ellipsoid-based approach (\Cref{prop:exp-perf}), the recent result of~\citet{Diakonikolas25:Pushing}, and the \PPAD-hardness established in~\Cref{theorem:PPAD_complete}, thereby refining the complexity landscape that emerged from our paper (\Cref{fig:phasetransition}).

\section*{Acknowledgments}
Ioannis Panageas is supported by NSF grant CCF-
2454115. Tuomas Sandholm is supported by NIH award A240108S001, the Vannevar Bush Faculty Fellowship ONR N00014-23-1-2876, and National Science Foundation grant RI-2312342.

\bibliography{main}

\clearpage
\appendix

\section{Further related work}
\label{sec:furtherrelated}

This section highlights additional related research.

\paragraph{Multiagent performative prediction} While the original framework of~\citet{perdomo2021performativeprediction} considers a single decision-maker, many real-world applications involve multiple agents with different objectives. In such settings, the underlying distribution can depend on the joint deployment of all agents, giving rise to more complex distribution shifts. \citet{Piliouras22:Multiagent} formalized this as \emph{multi-agent performative prediction}, and showed that standard retraining dynamics can lead to complex behaviors, ranging from global stability to chaos. Performative prediction in multi-player settings has since received considerable attention~\citep{Narang23:Multiplayer,Gois25:Performative}. Although we draw upon techniques from algorithmic game theory, our results pertain to the single-agent setting.

\citet{Drusvyatskiy23:Stochastic} analyze performative stability through the lens of stochastic optimization with decision-dependent distributions, applying proximal-point methods and extra-gradient algorithms in performative prediction. From a broader standpoint, similar dynamic feedback issues are commonplace in reinforcement learning and robotics, where the act of learning changes the environment~\citep{Levine20:Offline}.

\paragraph{Complexity theory} $\PPAD$ was introduced by~\citet{Papadimitriou94:Complexity} and was famously shown to characterize the complexity of Nash equilibria in two-player general-sum games~\citep{Daskalakis09:The,Chen09:Settling}. Our \PPAD-hardness proof for general convex sets leverages certain tools developed by~\citet{rankgames} and \citet{ deligkas_et_al:LIPIcs.ICALP.2020.38}. As we highlighted earlier, much of the work in the complexity of fixed points and variational inequalities has focused on the case where the constraint set is the hypercube. A notable recent exception is the paper of~\citet{Attias25:Fixed}, which proves exponential lower bounds when the constraint set is the $\ell_2$ ball.

$\PLS$ was introduced by~\citet{schaffer1991simple} to characterize the complexity of (presumably) hard local optimization problems, such as max-cut under the so-called flip neighborhood. It is also known to characterize the complexity of \emph{pure} Nash equilibria in multi-player potential games~\citep{Fabrikant04:Complexity}. As a result, our results imply a polynomial-time equivalence between the complexity of local performative optimality and pure Nash equilibria in potential games.
\section{First-order performative stability}
\label{sec:firstorder}

In this section, we examine the notion of first-order approximate performative stability, which will be relevant for analyzing the computational complexity of finding performative stable points. We note that computing a performatively stable point can be viewed as a fixed point computation problem (\Cref{lemma:direc2}). Under mild assumptions on the induced distribution $\mathcal{D}(\cdot)$, if the objective function $\ell(\vx; \vz)$ is convex in $\vx$ for all $\vz$ and jointly continuous in $(\vx, \vz),$ a performatively stable point is guaranteed to exist \citep{perdomo2021performativeprediction}. However, when the objective function $\ell(\vx; \vz)$ is nonconvex in $\vx$, the $\argmin$  operator in \cref{def:perf-stab} may not be efficiently computable. To address this issue, we consider an alternative definition, which we restate below.

\begin{definition}[Performative stability] \label{def:local_ps}
A point $\vx^* \in \calX$ is (first-order) performatively stable if for all $\vx \in \calX$, it holds that
    \begin{equation*}   
        \left\langle\vx - \vx^*, \mathbb{E}_{\vz \sim \mathcal{D}(\vx^*)}\left[\nabla_{\vx} \ell(\vx^*; \vz)\right]\right\rangle \geq 0.
    \end{equation*}
\end{definition}

By virtue of existing complexity results pertaining to variational inequalities, computing a first-order performatively stable point lies in $\PPAD$ under mild assumptions on the representation of $\calX$, $\calD$, and $\ell$. This complements~\Cref{theorem:PPAD_complete}. 

\begin{corollary}
    \label{cor:PPAD-memb}
    Computing an $\epsilon$-performatively stable point is in \PPAD.
\end{corollary}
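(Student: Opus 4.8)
The plan is to recognize that the problem is, essentially by definition, an approximate variational inequality over $\calX$, and then to place approximate-VI in $\PPAD$ via the standard reduction to an approximate Brouwer fixed point. Put $F(\vx)\defeq \E_{\vz\sim\calD(\vx)}[\nabla_\vx\ell(\vx;\vz)]$. Unrolling~\Cref{def:approx-perfstab}, a point $\vxstar$ is $\epsilon$-performatively stable if and only if $\langle \vx-\vxstar, F(\vxstar)\rangle \ge -\epsilon$ for all $\vx\in\calX$; that is, $\vxstar$ is an $\epsilon$-approximate solution of the VI defined by $F$. Totality is immediate: $F$ is continuous and $\calX$ is nonempty, convex, and compact, so an exact VI solution exists by Hartman--Stampacchia, which justifies speaking of $\PPAD$ membership at all.

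Next I would verify that $F$ is Lipschitz with a constant polynomial in the instance parameters. Splitting,
\[
\|F(\vx)-F(\vx')\|_2 \;\le\; \bigl\|\E_{\vz\sim\calD(\vx)}[\nabla_\vx\ell(\vx;\vz)-\nabla_\vx\ell(\vx';\vz)]\bigr\|_2 \;+\; \bigl\|\E_{\vz\sim\calD(\vx)}[\nabla_\vx\ell(\vx';\vz)]-\E_{\vz'\sim\calD(\vx')}[\nabla_\vx\ell(\vx';\vz')]\bigr\|_2,
\]
the first term is at most $\beta\|\vx-\vx'\|_2$ by joint smoothness (\Cref{ass:l2}); for the second, $\vz\mapsto\nabla_\vx\ell(\vx';\vz)$ is $\beta$-Lipschitz, so testing against unit vectors and applying Kantorovich--Rubinstein duality bounds it by $\beta\,W_1(\calD(\vx),\calD(\vx'))\le \beta L\,\|\vx-\vx'\|_2$ using $L$-sensitivity. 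Hence $F$ is $\beta(1+L)$-Lipschitz. The ``mild assumptions'' are exactly what is needed to turn this into a circuit statement: we assume $\calX$ admits a polynomial-time (approximate) Euclidean projection oracle and is well bounded, and that $\vx\mapsto \E_{\vz\sim\calD(\vx)}[\nabla_\vx\ell(\vx;\vz)]$ can be evaluated to any prescribed accuracy in polynomial time, so that $F$ is efficiently (approximately) computable.

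Finally I would reduce to Brouwer. For a suitable fixed step size $\eta>0$, consider $\Phi(\vx)\defeq \Pi_{\calX}(\vx-\eta F(\vx))$, which is a Lipschitz, efficiently (approximately) computable self-map of $\calX$. By the variational characterization of the projection, if $\|\Phi(\vx)-\vx\|_2\le\delta$ then $\langle \vz-\vx, F(\vx)\rangle \ge -\bigl(\tfrac{D}{\eta}+\|F(\vx)\|_2\bigr)\delta$ for every $\vz\in\calX$, where $D=\operatorname{diam}(\calX)$ and $\|F(\vx)\|_2$ is bounded by the input via the Lipschitz estimate above; so an approximate fixed point of $\Phi$ of quality $\delta=\poly(\epsilon)$ yields an $\epsilon$-approximate VI solution, hence an $\epsilon$-performatively stable point. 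Since computing an approximate Brouwer fixed point over an efficiently represented convex body, with the map given by a polynomial-size arithmetic circuit, is the canonical $\PPAD$-complete problem~\citep{Papadimitriou94:Complexity}, membership follows. The main obstacle---and the only place I would spend real effort---is error bookkeeping: $F$ and $\Pi_{\calX}$ are computed only approximately and the Brouwer instance searches over a grid of resolution $\poly(1/\epsilon)$, so one must check that the approximation, discretization, and step-size factors together degrade the final VI guarantee by at most a polynomial amount; this is routine once each ingredient is computed to precision $\poly(\epsilon)$, leaning on the Lipschitz bound for $F$ and the well-boundedness of $\calX$.
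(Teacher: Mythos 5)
Your proposal is correct and follows essentially the same route the paper takes: the paper establishes membership simply by observing that an $\epsilon$-performatively stable point (\Cref{def:approx-perfstab}) is an $\epsilon$-approximate VI solution for $F(\vx)=\E_{\vz\sim\calD(\vx)}[\nabla_\vx\ell(\vx;\vz)]$ and appealing to existing \PPAD-membership results for variational inequalities under mild representation assumptions on $\calX$, $\calD$, and $\ell$. Your write-up merely fills in the standard details behind that citation (the $\beta(1+L)$-Lipschitz bound on $F$ via Kantorovich--Rubinstein duality and the reduction to approximate Brouwer fixed points of the projected map $\Pi_{\calX}(\vx-\eta F(\vx))$), all of which is accurate.
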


As we now show, when $\ell(\vx; \vz)$ is convex in $\vx$ for all $\vz$, the notion of first-order performative stability coincides with the notion of performative stability per~\Cref{def:perf-stab}.

    \begin{claim} \label{claim:first-order-stable-equivalence}
        If $\ell(\vx; \vz)$ is convex in $\vx$ for any $\vz$, $\vx^* \in \calX$ is a performatively stable point if and only if $\vx^*$ is a first-order performatively stable point.
    \end{claim}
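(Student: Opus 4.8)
The plan is to recognize \Cref{claim:first-order-stable-equivalence} as an instance of the standard first-order optimality characterization for constrained convex minimization, applied to the \emph{fixed} auxiliary objective $\phi(\vx) \defeq \E_{\vz \sim \calD(\vx^*)}[\ell(\vx; \vz)]$. First I would record two structural facts about $\phi$. (i) Since each $\ell(\cdot; \vz)$ is convex in its first argument and $\calD(\vx^*)$ is a fixed probability measure, $\phi$ is convex on $\calX$, being an expectation (hence a nonnegative mixture) of convex functions. (ii) The joint smoothness hypothesis in \Cref{ass:l2} guarantees that $\vx \mapsto \nabla_{\vx}\ell(\vx; \vz)$ exists and is uniformly bounded over the compact set $\calX \times \calZ$, which licenses differentiation under the integral sign; thus $\phi$ is differentiable with $\nabla \phi(\vx) = \E_{\vz \sim \calD(\vx^*)}[\nabla_{\vx}\ell(\vx; \vz)]$. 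In particular $\nabla\phi(\vx^*)$ is exactly the vector appearing in \Cref{def:local_ps}.

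For the forward direction, suppose $\vx^*$ is performatively stable per \Cref{def:perf-stab}, i.e.\ $\vx^* \in \argmin_{\vx \in \calX} \phi(\vx)$. Fix any $\vx \in \calX$; by convexity of $\calX$ the segment $\vx^* + t(\vx - \vx^*)$ lies in $\calX$ for $t \in [0,1]$, so the scalar map $t \mapsto \phi(\vx^* + t(\vx - \vx^*))$ attains its minimum over $[0,1]$ at $t = 0$. Its one-sided derivative at $0$ is therefore nonnegative, which by the chain rule is precisely $\langle \vx - \vx^*, \nabla\phi(\vx^*) \rangle \ge 0$. Substituting the formula for $\nabla\phi(\vx^*)$ from fact (ii) yields the first-order stability inequality of \Cref{def:local_ps}.

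For the reverse direction, assume $\langle \vx - \vx^*, \nabla\phi(\vx^*)\rangle \ge 0$ for all $\vx \in \calX$. Convexity of $\phi$ (fact (i)) gives the subgradient inequality $\phi(\vx) \ge \phi(\vx^*) + \langle \nabla\phi(\vx^*), \vx - \vx^*\rangle$ for every $\vx \in \calX$, and the assumed inequality makes the last term nonnegative, so $\phi(\vx) \ge \phi(\vx^*)$ for all $\vx \in \calX$. Hence $\vx^* \in \argmin_{\vx \in \calX}\phi(\vx)$, i.e.\ $\vx^*$ is performatively stable.

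There is no real obstacle here — the statement is a textbook convex-analysis equivalence, and the proof is a few lines once $\phi$ is identified. The only point demanding mild care is the differentiation-under-the-expectation step in fact (ii); this is where \Cref{ass:l2} (joint smoothness, together with compactness of $\calX$ and $\calZ$) is used, via dominated convergence, and one could alternatively carry the loss inside the expectation throughout to sidestep even that. The same argument, with an additive $-\epsilon$ slack propagated through both directions, shows that under convexity the approximate notion in \Cref{def:approx-perfstab} coincides with the corresponding $\epsilon$-relaxation of \Cref{def:perf-stab}.
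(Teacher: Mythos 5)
Your proposal is correct and matches the paper's argument in substance: one direction via the convexity (subgradient) inequality applied to the fixed-distribution objective, and the other via the vanishing one-sided directional derivative at a constrained minimizer, which is exactly the paper's Taylor-expansion-plus-contradiction step in different clothing. The only cosmetic difference is that you name the auxiliary objective $\phi$ and explicitly justify swapping gradient and expectation, which the paper leaves implicit.
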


    \begin{proof}
        Let $\vx^* \in \calX$ be a first-order performatively stable point, since $\ell(\vx; \vz)$ is convex in $\vx$, for any $\vx' \in \calX$, we have
        \begin{align}
            \mathbb{E}_{\vz \sim \mathcal{D}(\vx^*)}\left[\ell(\vx'; \vz)\right] - \mathbb{E}_{\vz \sim \mathcal{D}(\vx^*)}\left[\ell(\vx^*; \vz)\right] \notag & \geq \mathbb{E}_{\vz \sim \mathcal{D}(\vx^*)}\left[\left\langle\vx' - \vx^*, \nabla_{\vx}\ell(\vx^*; \vz) \right\rangle\right] \notag \\ & = \left\langle \vx' - \vx^*, \mathbb{E}_{\vz \sim \mathcal{D}(\vx^*)}\left[\nabla_{\vx} \ell(\vx^*; \vz)\right] \right\rangle \\
            & \geq 0, \label{eq:plug_in_local_stable_def}
        \end{align}
        where in \eqref{eq:plug_in_local_stable_def} we use the definition of first-order performative stability. On the other hand, suppose $\vx^* \in \calX$ is a performatively stable point, for any $\vx' \in \calX$ and any $\alpha \in (0, 1]$, it holds that
        \begin{align*}
            \mathbb{E}_{\vz \sim \mathcal{D}(\vx^*)}\left[\ell(\vx^*; \vz)\right] \leq \mathbb{E}_{\vz \sim \mathcal{D}(\vx^*)}\left[\ell(\vx^* + \alpha (\vx' -\vx^*); \vz)\right].
        \end{align*}
        By Taylor's theorem
        \begin{align*}
            \mathbb{E}_{\vz \sim \mathcal{D}(\vx^*)}\left[\ell(\vx^* + \alpha (\vx' -\vx^*); \vz)\right] - \mathbb{E}_{\vz \sim \mathcal{D}(\vx^*)}\left[\ell(\vx^*; \vz)\right] = \alpha \left\langle\vx' - \vx^*, \mathbb{E}_{\vz \sim \mathcal{D}(\vx^*)}\left[\nabla_{\vx} \ell(\vx^*; \vz)\right]\right\rangle + o(\alpha).
        \end{align*}
        If $\left\langle\vx' - \vx, \mathbb{E}_{\vz \sim \mathcal{D}(\vx^*)}\left[\nabla_{\vx} \ell(\vx^*; \vz)\right]\right\rangle < 0$, it would imply that for a sufficiently small $\alpha > 0$, we have $\mathbb{E}_{\vz \sim \mathcal{D}(\vx^*)}\left[\ell(\vx^* + \alpha (\vx' -\vx^*); \vz)\right] - \mathbb{E}_{\vz \sim \mathcal{D}(\vx^*)}\left[\ell(\vx^*; \vz)\right] < 0$, contradicting the fact that $\vx^*$ is a performatively stable point per \cref{def:perf-stab}. The proof is complete.
    \end{proof}

\section{Contraction for general norms}
\label{appendix:generalnorms}

This section generalizes the contraction proof of~\citet{perdomo2021performativeprediction} from the $\|\cdot\|_2$ norm to arbitrary norms. In particular, we adapt~\Cref{ass:l2} as follows.

\begin{assumption}
    \label{ass:norms}
    Let $\ell(\vx; \vz)$ be the loss function and $\calD(\vx)$ the distribution on $\calZ$ induced by $\vx \in \calX$.
    \begin{itemize}
        \item (strong convexity) $\ell(\vx; \vz)$ is $\alpha$-strongly convex with respect to $\|\cdot\|$:
        \begin{equation*}
            \ell(\vx; \vz) \ge \ell(\vx' ; \vz) + \langle \nabla_{\vx} \ell(\vx'; \vz), \vx - \vx' \rangle + \frac{\alpha}{2} \|\vx - \vx' \|^2
        \end{equation*}
        for any $\vx, \vx' \in \calX$ and $\vz \in \calZ$.
        \item (smoothness) $\ell(\vx; \vz)$ is $\beta$(-jointly) smooth if
        \begin{equation*}
            \|\nabla_{\vx} \ell(\vx; \vz) - \nabla_\vx \ell(\vx'; \vz) \|_* \leq \beta \|\vx - \vx' \|
        \end{equation*}
        and
        \begin{equation*}
            \|\nabla_{\vx} \ell(\vx; \vz) - \nabla_\vx \ell(\vx; \vz') \|_* \leq \beta \|\vz - \vz' \|
        \end{equation*}
        for any $\vx, \vx' \in \calX$ and $\vz, \vz' \in \calZ$.
        \item (sensitivity) $\calD$ is $L$-sensitive if
        \begin{equation*}
            W_1(\calD(\vx), \calD(\vx')) \leq L \|\vx - \vx' \|
        \end{equation*}
        for any $\vx, \vx' \in \calX$, where $W_1$ denotes the Wasserstein-1 distance, or earth mover's distance.
    \end{itemize}
\end{assumption}

Above, we denote by $\|\cdot\|_*$ the dual norm of $\|\cdot\|$. We point out that the contraction argument of~\citet{perdomo2021performativeprediction} readily carries over under~\Cref{ass:norms}.

\contractgen*

\begin{proof}
Let $\vx, \vx' \in \calX$, $f(\vy) = \E_{\vz \sim \calD(\vx) } \ell(\vy; \vz)$, and $f'(\vy) = \E_{\vz \sim \calD(\vx') } \ell(\vy; \vz)$. Taking the expectation over $\vz \sim \calD(\vx)$, it follows that $f(\vy)$ is $\alpha$-strongly convex with respect to $\|\cdot\|$. Thus,
\begin{equation}
    \label{eq:strong-convex1}
    f(G(\vx)) \geq f(G(\vx')) + \langle G(\vx) - G(\vx'), \nabla f(G(\vx')) \rangle + \frac{\alpha}{2} \| G(\vx) - G(\vx') \|^2,
\end{equation}
where $G$ is the RRM mapping (\Cref{def:RRM}). Since $G(\vx)$ is, by definition, the unique minimizer of $f$, we also have $\langle G(\vx') - G(\vx), \nabla f(G(\vx)) \rangle \geq 0$ by the first-order optimality condition. In turn, this implies
\begin{equation}
    \label{eq:strong-convex2}
    f(G(\vx')) \geq f(G(\vx)) + \frac{\alpha}{2} \| G(\vx) - G(\vx') \|^2.
\end{equation}
Combining~\eqref{eq:strong-convex1} and~\eqref{eq:strong-convex2}, we have
\begin{equation}
    \label{eq:convexbound}
    \langle G(\vx') - G(\vx), \nabla f(G(\vx')) \rangle \geq \alpha \|G(\vx) - G(\vx') \|^2.
\end{equation}
Furthermore, $\langle G(\vx') - G(\vx), \nabla \ell(G(\vx'); \vz) \rangle$ is $( \| G(\vx') - G(\vx) \| \beta )$-Lipschitz continuous in $\vz$ since
\begin{align*}
    &| \langle G(\vx') - G(\vx), \nabla \ell(G(\vx'); \vz) \rangle - \langle G(\vx') - G(\vx), \nabla \ell(G(\vx'); \vz') \rangle | \\
    &\hspace{18em} \leq \|G(\vx') - G(\vx) \| \|\nabla \ell(G(\vx'); \vz) - \nabla \ell(G(\vx'); \vz') \|_* \\
    &\hspace{18em} \leq \beta \|G(\vx') - G(\vx) \|,
\end{align*}
by $\beta$-joint smoothness. Now, for the distribution map $\calD(\cdot)$, Kantorovich-Rubinstein duality yields
\begin{equation*}
    \left| \E_{\vz \sim \calD(\vx)} g(\vz) -  \E_{Z \sim \calD(\vx')} g(\vz) \right| \leq L \|\vx - \vx' \| \quad \forall g \text{ 1-Lipschitz}.
\end{equation*}
As a result,
\begin{equation*}
    \langle G(\vx) - G(\vx'), \nabla f(G(\vx')) \rangle - \langle G(\vx) - G(\vx'), \nabla f'(G(\vx')) \rangle \geq - L \beta \|G(\vx') - G(\vx) \| \|\vx - \vx' \|.
\end{equation*}
By the first-order optimality condition, it also follows that $\langle G(\vx) - G(\vx'), \nabla f'(G(\vx')) \rangle \geq 0$ since $G(\vx')$ is the minimizer of $f'$. So,
\begin{equation}
    \label{eq:Wass-bound}
    \langle G(\vx) - G(\vx'), \nabla f(G(\vx')) \rangle \geq - L \beta \|G(\vx') - G(\vx) \| \|\vx - \vx' \|.
\end{equation}
Combining~\eqref{eq:convexbound} and~\eqref{eq:Wass-bound}, we conclude that
\begin{equation*}
    L \beta \|G(\vx') - G(\vx) \| \|\vx - \vx' \| \geq \alpha \|G(\vx) - G(\vx') \|^2 \implies \|G(\vx) - G(\vx') \| \leq \frac{L \beta}{\alpha} \|\vx - \vx' \|.
\end{equation*}
In other words, if $L \beta / \alpha < 1 $, $G$ is a contraction with respect to the norm $\|\cdot\|$, as claimed.
\end{proof}

Similar extensions are possible for other algorithms beyond repeated risk minimization, such as repeated gradient descent.
\section{Ellipsoid for Euclidean expansive mappings}
\label{sec:ellipsoid}

A well-known result in optimization is that there is a polynomial-time algorithm for computing fixed points of nonexpansive mappings \emph{with respect to the $\ell_2$ norm}~\citep{Huang99:Approximating,Sikorski93:Ellipsoid}; the complexity of this problem is a major open question for more general norms. In particular, for mappings that are contracting with respect to the $\ell_2$ norm, there is an algorithm whose complexity does not depend on the contraction parameter. In the setting of performative prediction, we begin by observing that this can be used to obtain a significant improvement in the setting where $\nicefrac{\beta L}{\alpha} \approx 1$. The number of iterations needed to reach an approximate fixed point under repeated risk minimization is proportional to $\log(1/\epsilon) \frac{1}{ \frac{\alpha}{\beta L} - 1 }$ in the regime where $\nicefrac{\beta L}{\alpha} \approx 1$, thereby blowing up. 

\begin{theorem}[\citealp{Sikorski93:Ellipsoid}]
    \label{theorem:ellipsoid}
 Consider a continuous mapping $T : \calX \to \calX$, where $\calX$ is a subset of the $d$-dimensional Euclidean space, that is nonexpansive with respect to the $\ell_2$ norm; that is, $\|T(\vx) - T(\vx') \|_2 \leq \|\vx - \vx' \|_2$ for any $\vx, \vx' \in \calX$. Then there is a $\poly(d, \log(1/\epsilon))$-time algorithm that computes an $\epsilon$-fixed point of $T$.
\end{theorem}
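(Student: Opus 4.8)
The plan is to run the ellipsoid method, using the Euclidean geometry to extract a valid---in fact \emph{deep}---cutting plane from each query to $T$. Assume, as is standard for this statement, that $\calX$ is convex and compact, so that a fixed point $\vx^\star = T(\vx^\star)$ exists by Brouwer. The crucial observation is purely about the $\ell_2$ norm: if $\vx_0 \in \calX$ is the current center, then nonexpansiveness gives $\|T(\vx_0) - \vx^\star\|_2 = \|T(\vx_0) - T(\vx^\star)\|_2 \le \|\vx_0 - \vx^\star\|_2$, and expanding both squared norms this is \emph{equivalent} to
\begin{equation*}
    \left\langle \vx_0 - T(\vx_0),\ \vx^\star - \vx_0 \right\rangle \ \le\ -\tfrac12\,\|\vx_0 - T(\vx_0)\|_2^2 .
\end{equation*}
Hence the halfspace with normal $\vx_0 - T(\vx_0)$ through $\vx_0$ contains \emph{every} fixed point of $T$, and moreover contains it past the center by a margin proportional to the residual $\|\vx_0 - T(\vx_0)\|_2$---precisely the quantity we want to drive below $\epsilon$. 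I would stress that this step is exactly where the $\ell_2$ norm is used; it has no analogue for general norms, consistent with the open status of the problem there.

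Given this, the algorithm is the textbook ellipsoid method. Initialize $E_0$ to a ball containing $\calX$ (its radius is available from the bounds on $\calX$; if $\calX$ is not full-dimensional, run everything inside its affine hull). At step $k$ with center $\vx_k$: if $\vx_k \notin \calX$, use a separation oracle for the convex set $\calX$ and cut with the resulting hyperplane; if $\vx_k \in \calX$, query $T(\vx_k)$, halt and output $\vx_k$ if $\|\vx_k - T(\vx_k)\|_2 \le \epsilon$, and otherwise cut with the deep halfspace above; then replace $E_k$ by the minimum-volume ellipsoid containing the truncated piece. Every cut preserves all exact fixed points (the $\calX$-cut because $\calX$ contains them, the residual cut by the displayed inequality), so by induction each $E_k$ contains a fixed point; in particular the residual cut is never ``infeasibly deep,'' and $\mathrm{vol}(E_{k+1}) \le e^{-\Omega(1/d)}\,\mathrm{vol}(E_k)$ as in the standard analysis.

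What remains is termination, and here the $\epsilon$-relaxation is essential, since the set of exact fixed points may be a single point and thus carry no volume. The residual $\vx \mapsto \|T(\vx) - \vx\|_2$ is $2$-Lipschitz, so around $\vx^\star$ the set $\mathcal{B}_\delta(\vx^\star)\cap\calX$ consists of points with residual at most $2\delta$; repeating the expansion above with this slack shows that, provided we only cut when the residual exceeds $\epsilon$ and we take the internal target $\delta$ a polynomial factor smaller than $\epsilon$ (which costs only a polynomial factor in the iteration count), the residual cut preserves the whole set $\mathcal{B}_\delta(\vx^\star)\cap\calX$ as well. Since every $E_k$ then contains this set, whose volume is at least a quantity that is exponentially small in $d$ but whose logarithm is $\poly(d,\log(1/\epsilon))$ (using that $\calX$ contains a ball of known radius, so the cone from $\vx^\star$ over that ball is non-negligible near $\vx^\star$), the number of cuts before the geometric volume decay would violate this lower bound---hence before the algorithm halts with an $\epsilon$-fixed point---is $\poly(d,\log(1/\epsilon))$. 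Each iteration performs one evaluation of $T$, one separation query, and one ellipsoid update, giving the claimed bound. The main obstacle will be the first step: obtaining the deep separating hyperplane toward the fixed-point set from nonexpansiveness alone; once that $\ell_2$-specific identity is in hand, the rest is the standard ellipsoid analysis together with the routine care needed when the solution set is low-dimensional.
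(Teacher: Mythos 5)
Your argument is correct in substance, but it is essentially the ``direct'' route that the paper only sketches in a remark after its proof, rather than the route the paper actually takes. The paper's proof first shows that $F(\vx) = \vx - T(\vx)$ is monotone, via the identity $\langle F(\vx)-F(\vx'), \vx-\vx'\rangle = \tfrac12\left( \|\vx-\vx'\|_2^2 - \|T(\vx)-T(\vx')\|_2^2 + \|F(\vx)-F(\vx')\|_2^2\right) \ge 0$, then invokes as a black box the $\poly(d,\log(1/\epsilon))$-time algorithm for monotone variational inequalities, and converts an approximate VI solution into a fixed point by testing the deviation $\vx' = T(\vx)$ (incurring a square-root loss that is absorbed by solving the VI to accuracy $\epsilon^2$). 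Your proof instead runs the ellipsoid method directly, using the $\ell_2$-specific inequality $\langle \vx_k - T(\vx_k), \vx^\star - \vx_k\rangle \le -\tfrac12\|\vx_k - T(\vx_k)\|_2^2$ as the cut and carrying out the volume/termination bookkeeping yourself; this is exactly the separation-oracle construction with $\vec{g}_k = \vx_k - T(\vx_k)$ that the paper mentions as an alternative, so both arguments rest on the same Euclidean identity, with the paper outsourcing the convergence analysis to the monotone-VI solver and yours making it explicit.

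One technical slip to fix: with the cut taken at the full depth $-\tfrac12\|\vx_k - T(\vx_k)\|_2^2$, the set $\mathcal{B}_\delta(\vx^\star)\cap\calX$ is \emph{not} necessarily preserved in its entirety, no matter how small $\delta>0$ is. For $\vy$ with $\|\vy-\vx^\star\|_2\le\delta$ the expansion only yields $\langle \vx_k - T(\vx_k), \vy - \vx_k\rangle \le -\tfrac12\|\vx_k - T(\vx_k)\|_2^2 + \delta\,\|\vx_k - T(\vx_k)\|_2$, and the inequality for $\vx^\star$ itself can be tight (e.g., when $T$ acts as an isometry), so part of the ball can land strictly on the discarded side of the full-depth cut. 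The remedy is routine and consistent with the ``polynomial factor smaller'' hedge you already make: either cut through the center $\vx_k$ (then $\delta \le \epsilon/2$ suffices, since a residual cut is only made when $\|\vx_k - T(\vx_k)\|_2 > \epsilon$), or use a shallower deep cut such as depth $-\tfrac14\|\vx_k - T(\vx_k)\|_2^2$ with $\delta = O(\epsilon^2/D)$, where $D$ bounds the diameter of $\calX$. With that adjustment every ellipsoid contains $\mathcal{B}_\delta(\vx^\star)\cap\calX$, whose log-volume is $-\poly(d,\log(1/\epsilon))$ by the inner-ball/cone argument you give, and the rest of your analysis goes through unchanged.
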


We provide the simple proof below, as we will use a similar bound in our extension.

\begin{proof}[Proof of~\Cref{theorem:ellipsoid}]
    We will prove that the operator $F : \vx - T(\vx)$ is monotone. That is, $\langle F(\vx) - F(\vx'), \vx - \vx' \rangle \geq 0$ for any $\vx, \vx' \in \calX$. Indeed, we write
    \begin{equation}
        \label{eq:monotone}
        \langle F(\vx) - F(\vx'), \vx - \vx' \rangle = \frac{1}{2} \left( \|\vx - \vx' \|_2^2 - \| T(\vx) - T(\vx') \|_2^2 + \| \vx - T(\vx) - \vx' + T(\vx') \|_2^2 \right) \geq 0
    \end{equation}
    since $\|\vx - \vx' \|_2 \geq \|T(\vx) - T(\vx') \|_2$ and $\|\cdot\|_2 \geq 0$. Now, let $\vx$ be an $\epsilon$-approximate VI solution with respect to $F$, which means that $\langle \vx' - \vx, F(\vx) \rangle \geq - \epsilon$ for any $\vx' \in \calX$. In particular, setting $\vx' = T(\vx)$ yields $\langle T(\vx) - \vx, \vx - T(\vx) \rangle \geq - \epsilon$, or $- \| \vx - T(\vx) \|^2_2 \geq - \epsilon$, which is to say that $\vx$ is a $\sqrt{\epsilon}$-fixed point of $T$. Moreover, an $\epsilon$-approximate VI solution with respect to $F$ can be computed in time $\poly(d, \log(1/\epsilon))$ since $F$ is monotone. This completes the proof.
\end{proof}

There is also a more direct argument that does not go through the monotonicity of the gap function. In particular, it is possible to develop a separation oracle by relying on the fact that $T$ is nonexpansive: for any point $\vx_k \in \calX$, first test whether $T(\vx_k) = \vx_k$. If not, the key observation is that $\vec{g}_k = \vx_k - T(\vx_k)$ serves as a separating hyperplane. This is so because $\langle \vx_k - \vx, \vec{g}_k \rangle \geq 0$ for any $\vx \in \calX$ that is a fixed point of $T$; since $\langle \vx_k - \vx, \vec{g}_k \rangle = \langle \vx_k - \vx, F(\vx_k) \rangle = \langle \vx_k - \vx, F(\vx_k) - F(\vx) \rangle \geq 0$ by~\eqref{eq:monotone}.

\begin{corollary}
    If $\contr = L \beta / \alpha \leq 1$ (per~\Cref{ass:l2}), there is a $\poly(d, \log(1/\epsilon))$-time algorithm for computing an $\epsilon$-performatively stable point.
\end{corollary}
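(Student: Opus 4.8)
The strategy is to reduce the task to the ellipsoid algorithm for $\ell_2$-nonexpansive maps (\Cref{theorem:ellipsoid}) applied to the RRM map $G$ (\Cref{def:RRM}), and then translate an approximate fixed point of $G$ into an $\epsilon$-performatively stable point in the sense of \Cref{def:approx-perfstab}. The first observation is that nonexpansiveness of $G$ comes essentially for free in the boundary regime $\contr = L\beta/\alpha \le 1$: the proof of \Cref{prop:contract-gen}, specialized to $\|\cdot\| = \|\cdot\|_2$, does not actually use strict contraction and yields $\|G(\vx) - G(\vx')\|_2 \le \contr\,\|\vx - \vx'\|_2$ for all $\vx, \vx' \in \calX$; when $\contr \le 1$ this says precisely that $G$ is nonexpansive with respect to the Euclidean norm. (Alternatively one re-runs that argument verbatim, stopping at inequality~\eqref{eq:convexbound}--\eqref{eq:Wass-bound}.)

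With this in hand, I would apply \Cref{theorem:ellipsoid} to $T = G$: under the standard representational assumptions on $\calX$, $\calD$, and $\ell$ used throughout the paper (so that each evaluation $G(\vx)$ amounts to solving the $\alpha$-strongly convex ERM subproblem $\argmin_{\vx'\in\calX}\E_{\vz\sim\calD(\vx)}[\ell(\vx';\vz)]$ in polynomial time), the ellipsoid method computes an $\epsilon'$-fixed point of $G$ in $\poly(d,\log(1/\epsilon'))$ time, with no dependence on $\contr$. Finally I would invoke \Cref{lemma:direc2}, which (being the quantitative direction of the equivalence between the fixed-point gap and first-order performative stability) guarantees that an $\epsilon'$-fixed point of $G$ is an $O_\epsilon(\epsilon')$-performatively stable point, the hidden constant depending only polynomially on problem parameters such as $\beta$, the diameter of $\calX$, and a bound on $\|\E_{\vz\sim\calD(\vx)}\nabla_\vx\ell(\vx;\vz)\|_2$ over $\calX$. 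Choosing $\epsilon'$ to be an appropriate polynomial multiple of $\epsilon$ then yields an $\epsilon$-performatively stable point, and since $\log(1/\epsilon') = O(\log(1/\epsilon)) + \poly(\text{input size})$, the overall running time remains $\poly(d,\log(1/\epsilon))$.

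\textbf{Main obstacle.} There is essentially no deep step here — the corollary is a short consequence of \Cref{theorem:ellipsoid}, \Cref{prop:contract-gen}, and \Cref{lemma:direc2} — so the only points that need care are bookkeeping ones. First, one must note that \emph{nonexpansiveness}, rather than strict contraction, of $G$ in $\ell_2$ is enough to drive the ellipsoid machinery, which is exactly why \Cref{theorem:ellipsoid} is stated for nonexpansive (not contractive) maps. Second, one must verify that the conversion from the fixed-point gap $\|\vxstar - G(\vxstar)\|_2$ to the VI-type bound $\langle \vx - \vxstar,\ \E_{\vz\sim\calD(\vxstar)}[\nabla_\vx\ell(\vxstar;\vz)]\rangle \ge -\epsilon$ costs only a polynomial factor, so that the required precision $\epsilon'$ stays within $\exp(-\poly)$ of $\epsilon$; this is precisely the content of \Cref{lemma:direc2}, so it may simply be cited.
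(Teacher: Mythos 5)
Your proposal is correct and follows essentially the same route the paper intends: the contraction bound $\|G(\vx)-G(\vx')\|_2 \le (L\beta/\alpha)\|\vx-\vx'\|_2$ from the argument behind \Cref{prop:contract-gen} gives nonexpansiveness of the RRM map at $\contr \le 1$, \Cref{theorem:ellipsoid} then yields an approximate fixed point in $\poly(d,\log(1/\epsilon'))$ time, and \Cref{lemma:direc2} converts the fixed-point gap into first-order performative stability at only polynomial cost in the precision. This matches the paper's (implicit) derivation of the corollary, so nothing further is needed.
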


To put this into better context, it is important to point out that repeated risk minimization can fail when $\contr = 1$. For completeness, we include the simple example below.

\begin{example}[Cycling dynamics at the threshold]
    \label{example:cycling}
    Consider a one-dimensional setting where $\calX$ is centrally symmetric and the loss is the simple quadratic objective $\ell(\vx; \vz) = \frac{1}{2}\|\vx - \vz\|^2$. This function is $1$-jointly smooth and $1$-strongly convex ($\beta = \alpha = 1$). Suppose further that the distribution $\calD(\vx)$ is a point mass supported on $\vz = g(\vx) \defeq -\vx$. The sensitivity of this map is $L=1$, resulting in $\contr = L \beta/\alpha = 1$.
    
    The repeated risk minimization (RRM) update at step $t$ minimizes the loss on the distribution induced by the current iterate $\vx_t$. Since the distribution is supported on $\vz = -\vx_t$, the update becomes:
    \begin{align}
        \vx_{t+1} = \argmin_{\vx \in \calX} \frac{1}{2} \|\vx - (-\vx_t)\|^2 = -\vx_t.
    \end{align}
    Starting from any initialization $\vx_0 \neq 0$, the algorithm oscillates indefinitely between $\vx_0$ and $-\vx_0$, failing to converge to the unique performatively stable point $\vxstar = 0$.
\end{example}

\paragraph{Extension to expansive mappings} Interestingly, we observe that~\Cref{theorem:ellipsoid} can be extended when $T$ can be marginally expansive. Let us first present an approach that works for monotone operators, and we shall then relax the monotonicity assumption. We rely on the notion of an \emph{expected variational inequality (EVI)}~\citep{Zhang25:Expected}. In particular, an $\epsilon$-EVI solution $\mu \in \Delta(\calX)$ satisfies
\begin{equation}
    \label{eq:EVI}
    \E_{\vx \sim \mu} [\langle F(\vx), \vx - \vx' \rangle] \leq \epsilon \quad \forall \vx' \in \calX.
\end{equation}
\citet{Zhang25:Expected} gave a $\poly(d, \log(1/\epsilon))$ for computing an $\epsilon$-EVI solution. We will first argue that, for monotone operators, the mean of the distribution $\bar{\vx} = \E_{\vx \sim \mu}[\vx]$ is an $\epsilon$-approximate solution to the \emph{Minty} VI problem:
\begin{equation}
    \label{eq:MVI}
    \langle F(\vx'), \vx' - \bar{\vx} \rangle \geq -\epsilon \quad \forall \vx' \in \calX.
\end{equation}
Indeed, starting from~\eqref{eq:EVI} and using monotonicity, we have that for any $\vx' \in \calX$,
\begin{align*}
    \epsilon &\geq \E_{\vx \sim \mu} [\langle F(\vx), \vx - \vx' \rangle] \\
    &\geq \E_{\vx \sim \mu} [\langle F(\vx'), \vx - \vx' \rangle] \\
    &= \langle F(\vx'), \E_{\vx \sim \mu}[\vx] - \vx' \rangle \\
    &= \langle F(\vx'), \bar{\vx} - \vx' \rangle.
\end{align*}

Rearranging, this establishes~\eqref{eq:MVI}. Finally, to go from an $\epsilon$-MVI solution to an approximate VI solution, we use the following standard lemma.
\begin{lemma}[Relation between $\epsilon$-MVI and SVI]
    \label{lemma:MVI-SVI}
    Let $F: \calX \to \mathbb{R}^d$ be an operator that is $L$-Lipschitz continuous. If $\vxstar \in \calX$ is an $\epsilon$-approximate MVI solution, then $\vxstar$ is an $O_\epsilon(\sqrt{\epsilon})$-approximate (Stampacchia) VI solution. Specifically, if $D$ is the $\ell_2$ diameter of $\calX$,
    \begin{equation}
        \langle F(\vxstar), \vx - \vxstar \rangle \geq -2 D \sqrt{L \epsilon} \quad \forall \vx \in \calX.
    \end{equation}
\end{lemma}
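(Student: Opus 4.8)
The plan is to exploit the convexity of $\calX$ to interpolate between $\vxstar$ and an arbitrary comparator point, converting the (approximate) Minty inequality into a (approximate) Stampacchia inequality by letting the interpolation parameter shrink while carefully tracking the Lipschitz error it introduces. This is the standard ``approximate Minty implies approximate Stampacchia'' argument, adapted to keep track of the additive slack $\epsilon$.

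Concretely, fix an arbitrary $\vx \in \calX$ (the case $\vx = \vxstar$ being trivial), and for $t \in (0,1]$ set $\vx_t \defeq (1-t)\vxstar + t\vx$, which lies in $\calX$ by convexity. Invoking the $\epsilon$-MVI property of $\vxstar$ at the point $\vx' = \vx_t$ and using the identity $\vx_t - \vxstar = t(\vx - \vxstar)$ gives $t\,\langle F(\vx_t), \vx - \vxstar \rangle \geq -\epsilon$, i.e. $\langle F(\vx_t), \vx - \vxstar \rangle \geq -\epsilon/t$.

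Next I would pass from $F(\vx_t)$ to $F(\vxstar)$ by writing $\langle F(\vxstar), \vx - \vxstar \rangle = \langle F(\vx_t), \vx - \vxstar \rangle + \langle F(\vxstar) - F(\vx_t), \vx - \vxstar \rangle$ and bounding the second term using Cauchy--Schwarz together with the $L$-Lipschitz continuity of $F$:
\[
    \langle F(\vxstar) - F(\vx_t), \vx - \vxstar \rangle \;\geq\; -\|F(\vxstar) - F(\vx_t)\|_2\,\|\vx - \vxstar\|_2 \;\geq\; -L\,\|\vx_t - \vxstar\|_2\,\|\vx - \vxstar\|_2 \;=\; -Lt\,\|\vx - \vxstar\|_2^2 \;\geq\; -LtD^2 .
\]
Combining the two estimates yields, for every $t \in (0,1]$, the bound $\langle F(\vxstar), \vx - \vxstar \rangle \geq -\epsilon/t - LtD^2$. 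Optimizing the right-hand side over $t$ — the two terms balance at $t = \sqrt{\epsilon/(LD^2)}$, which is admissible in the regime of interest $\epsilon \leq LD^2$ (outside it the claimed bound is already implied by the trivial $t = 1$ estimate $-\epsilon - LD^2$, since $\epsilon + LD^2 \geq 2D\sqrt{L\epsilon}$) — gives $\langle F(\vxstar), \vx - \vxstar \rangle \geq -2D\sqrt{L\epsilon}$. Since $\vx \in \calX$ was arbitrary, the lemma follows.

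I do not expect a genuine obstacle here; the argument is routine. The only points that need a little care are choosing the interpolated point $\vx_t$ to lie in $\calX$ so that the MVI inequality can legitimately be applied there, and tuning $t$ to equalize the $\epsilon/t$ ``sampling'' error against the $LtD^2$ Lipschitz error; the degenerate case $\epsilon > LD^2$ is immaterial, since the stated bound is then uninformative.
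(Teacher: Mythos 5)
Your argument is the paper's own proof almost verbatim: interpolate $\vx_t = \vxstar + t(\vx - \vxstar)$, apply the $\epsilon$-MVI inequality at $\vx_t$, pass from $F(\vx_t)$ to $F(\vxstar)$ via Cauchy--Schwarz and $L$-Lipschitzness to get $\langle F(\vxstar), \vx - \vxstar\rangle \geq -\epsilon/t - LtD^2$, and balance the two terms at $t = \sqrt{\epsilon/(LD^2)}$; this is exactly what the paper does (with $\delta$ in place of $t$), so in the regime $\epsilon \leq LD^2$ your proof is correct and identical in substance.

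One small caveat about your parenthetical on the degenerate case $\epsilon > LD^2$: the inequality you cite, $\epsilon + LD^2 \geq 2D\sqrt{L\epsilon}$, says that the $t=1$ estimate $-\epsilon - LD^2$ is \emph{weaker} (more negative) than the target $-2D\sqrt{L\epsilon}$, so it does not imply the claimed bound in that regime---indeed, for $\epsilon > LD^2$ the stated constant can fail (e.g.\ a constant map $F \equiv -\epsilon/D$ on $[0,D]$ with $L = 0$ admits $\vxstar = 0$ as an $\epsilon$-MVI point with Stampacchia gap $-\epsilon$, not $0$). The paper sidesteps this by simply ``picking $\delta$ optimally,'' which tacitly assumes the optimal $\delta$ is admissible, i.e.\ $\epsilon \leq LD^2$; since the lemma is invoked only for small $\epsilon$, this is harmless, but your justification for the other regime should be dropped or replaced by the weaker bound $-\epsilon - LD^2$.
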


\begin{proof}
    Let $\vx \in \calX$ be an arbitrary target point. For any $\delta \in (0, 1]$, we define the interpolation point $\vx' = \vxstar + \delta(\vx - \vxstar) \in \calX$. Using the fact that $\vxstar$ is an $\epsilon$-MVI solution,
    \begin{equation*}
        \delta \langle F(\vx'), \vx - \vxstar \rangle \geq -\epsilon \implies \langle F(\vx'), \vx - \vxstar \rangle \geq -\frac{\epsilon}{\delta}.
    \end{equation*}
    We now relate $F(\vx')$ to $F(\vxstar)$ using the Lipschitz continuity of $F$:
    \begin{align*}
        \langle F(\vxstar), \vx - \vxstar \rangle &= \langle F(\vx'), \vx - \vxstar \rangle + \langle F(\vxstar) - F(\vx'), \vx - \vxstar \rangle \\
        &\geq -\frac{\epsilon}{\delta} - \|F(\vxstar) - F(\vx')\|_2 \|\vx - \vxstar\|_2 \\
        &\geq -\frac{\epsilon}{\delta} - L \|\vxstar - \vx'\|_2 \|\vx - \vxstar\|_2 \\
        &= -\frac{\epsilon}{\delta} - L \delta \|\vx - \vxstar\|_2^2 \\
        &\geq -\frac{\epsilon}{\delta} - L \delta D^2.
    \end{align*}
    The claim follows by picking $\delta$ optimally.
\end{proof}

We now extend this approach under \emph{hypomonotonicity}~\citep{Iusem03:Inexact,Alber05:Regularization,Alomar24:Hypomonotone}. In particular, a mapping $F$ satisfies $\sigma$-hypomonotonicity for $\sigma > 0$ if
\begin{equation}
    \langle F(\vx) - F(\vx'), \vx - \vx' \rangle \geq -\sigma \|\vx - \vx'\|^2
\end{equation}
for all $\vx, \vx' \in \calX$. Starting again from~\eqref{eq:EVI}, we have that for any $\vx' \in \calX$,
\begin{align*}
    \epsilon &\geq \E_{\vx \sim \mu} \left[ \langle F(\vx'), \vx - \vx' \rangle - \sigma \|\vx - \vx'\|^2 \right] \\
    &= \langle F(\vx'), \bar{\vx} - \vx' \rangle - \sigma \E_{\vx \sim \mu} [\|\vx - \vx'\|^2].
\end{align*}
As a result,
\begin{equation*}
    \langle F(\vx'), \vx' - \bar{\vx} \rangle \geq - \epsilon - \sigma D^2 \quad \forall \vx' \in \calX.
\end{equation*}
This means that $\bar{\vx}$ is an $(\epsilon + \sigma D^2)$-approximate MVI solution. Combining with~\Cref{lemma:MVI-SVI}, we have shown the following.

\begin{proposition}
    \label{prop:hypomonotone}
    Let $F : \calX \to \R^d$ be a $\rho$-hypomonotone $L$-Lipschitz continuous operator. There is a $\poly(d, \log(1/\epsilon))$-time algorithm for computing an $(\epsilon+ \sigma D^2)$-approximate MVI solution, which is in turn a $2 D \sqrt{L ( \epsilon + \sigma D^2 ) }$-approximate VI solution.
\end{proposition}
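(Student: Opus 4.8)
The plan is to compose three ingredients that are already in hand: the polynomial-time expected-VI (EVI) solver of \citet{Zhang25:Expected}, the $\sigma$-hypomonotonicity of $F$, and \Cref{lemma:MVI-SVI}. First I would invoke \citet{Zhang25:Expected} to compute, in $\poly(d,\log(1/\epsilon))$ time, an $\epsilon$-EVI solution: a distribution $\mu \in \Delta(\calX)$ with $\E_{\vx\sim\mu}[\langle F(\vx), \vx - \vx'\rangle] \leq \epsilon$ for all $\vx' \in \calX$, as in~\eqref{eq:EVI}. What makes this usable is that the solver needs only Lipschitz continuity of $F$ (no monotonicity) and returns $\mu$ explicitly with polynomially bounded support, so its barycenter $\bar{\vx} \defeq \E_{\vx\sim\mu}[\vx] \in \calX$ can be formed in polynomial time.

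Next I would convert the EVI guarantee for $\mu$ into a (weakened) Minty-VI guarantee for $\bar{\vx}$. Fixing $\vx' \in \calX$ and plugging the pointwise inequality $\langle F(\vx), \vx - \vx'\rangle \geq \langle F(\vx'), \vx - \vx'\rangle - \sigma\|\vx - \vx'\|^2$ (from $\sigma$-hypomonotonicity) into the EVI bound gives
\begin{equation*}
    \epsilon \;\geq\; \langle F(\vx'), \bar{\vx} - \vx'\rangle - \sigma\,\E_{\vx\sim\mu}\!\left[\|\vx - \vx'\|^2\right] \;\geq\; \langle F(\vx'), \bar{\vx} - \vx'\rangle - \sigma D^2,
\end{equation*}
using $\|\vx - \vx'\| \leq D$ on $\calX$. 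Rearranging, $\langle F(\vx'), \vx' - \bar{\vx}\rangle \geq -(\epsilon + \sigma D^2)$ for every $\vx'$, i.e., $\bar{\vx}$ is an $(\epsilon + \sigma D^2)$-approximate MVI solution per~\eqref{eq:MVI}. Applying \Cref{lemma:MVI-SVI} to $\bar{\vx}$ with error parameter $\epsilon + \sigma D^2$ then yields that $\bar{\vx}$ is a $2D\sqrt{L(\epsilon + \sigma D^2)}$-approximate (Stampacchia) VI solution, which is exactly the claim.

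I do not expect a genuine obstacle: every step is either a one-line inequality manipulation or an invocation of a stated result. The two points that deserve a sentence of care are (i) verifying that the \citet{Zhang25:Expected} solver indeed runs without a monotonicity assumption and outputs an explicitly representable $\mu$ (so that $\bar{\vx}$ is computable), and (ii) observing that the degradation by $\sigma D^2$ is intrinsic to the argument — it is precisely the slack introduced by relaxing monotonicity to hypomonotonicity, and is what forces the $\sigma$-dependent error in both the intermediate MVI guarantee and the final VI guarantee.
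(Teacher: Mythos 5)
Your proposal is correct and follows essentially the same route as the paper: compute an $\epsilon$-EVI solution via \citet{Zhang25:Expected}, use $\sigma$-hypomonotonicity pointwise inside the expectation to show the barycenter $\bar{\vx}$ is an $(\epsilon + \sigma D^2)$-approximate MVI solution, and then invoke \Cref{lemma:MVI-SVI}. The care points you flag (applicability of the EVI solver without monotonicity and the intrinsic $\sigma D^2$ slack) match the paper's treatment, so nothing further is needed.
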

We now show how to translate this result for finding fixed points of a slightly expansive mapping $T$: $\| T(\vx) - T(\vx') \|_2 \leq (1 + \sigma) \|\vx - \vx' \|_2$. As in~\eqref{eq:monotone}, if $F(\vx) = \vx - T(\vx)$, we have
\begin{align*}
        \langle F(\vx) - F(\vx'), \vx - \vx' \rangle &= \frac{1}{2} \left( \|\vx - \vx' \|_2^2 - \| T(\vx) - T(\vx') \|_2^2 + \| \vx - T(\vx) - \vx' + T(\vx') \|_2^2 \right) \\
        &\geq - \left( \sigma + \frac{\sigma^2}{2} \right) \|\vx - \vx' \|_2^2
\end{align*}
for any $\vx, \vx' \in \calX$. In other words, $F$ is $(\sigma + \frac{\sigma^2}{2})$-hypomonotone. Furthermore, if $\vxstar$ is an $\epsilon'$-VI solution for $F$, it follows that $\|T(\vxstar) - \vxstar \|_2 \leq \sqrt{\epsilon'}$. We arrive at the following conclusion.

\begin{proposition}
    \label{prop:gen-expans}
    Let $T : \calX \to \calX$ be a such that $\|T(\vx) - T(\vx') \|_2 \leq (1 + \sigma) \|\vx - \vx' \|_2$. There is a $\poly(d, \log(1/\epsilon))$-time algorithm for computing an $\epsilon'$-fixed point of $T$, where
    \begin{equation*}
        \epsilon' = \sqrt{ 2 D \sqrt{ (2 + \sigma) \left(\epsilon + \left( \sigma + \frac{\sigma^2}{2} \right) D^2 \right)  } }.
    \end{equation*}
    In particular, if $\sigma \leq \epsilon$, $\epsilon' = \Theta_\epsilon(\epsilon^{1/4})$.
\end{proposition}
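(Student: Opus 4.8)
The plan is to reduce the problem of finding an approximate fixed point of the slightly expansive map $T$ to that of finding an approximate variational inequality solution for the associated gap operator $F(\vx) \defeq \vx - T(\vx)$, and then to invoke \Cref{prop:hypomonotone}. Most of the groundwork has in fact been set up in the discussion preceding the statement; what remains is to assemble the pieces and track the constants.

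First I would verify the two quantitative properties of $F$ that \Cref{prop:hypomonotone} requires. Setting $\vu \defeq \vx - \vx'$ and $\vv \defeq T(\vx) - T(\vx')$, polarization gives
\begin{align*}
    \langle F(\vx) - F(\vx'), \vx - \vx' \rangle &= \tfrac{1}{2}\bigl( \|\vu\|_2^2 - \|\vv\|_2^2 + \|\vu - \vv\|_2^2 \bigr) \\
    &\geq \tfrac{1}{2}\bigl( 1 - (1+\sigma)^2 \bigr)\|\vu\|_2^2 = -\bigl(\sigma + \tfrac{\sigma^2}{2}\bigr)\|\vx - \vx'\|_2^2,
\end{align*}
where I used $\|\vv\|_2 \leq (1+\sigma)\|\vu\|_2$ and $\|\vu - \vv\|_2^2 \geq 0$; hence $F$ is $(\sigma + \sigma^2/2)$-hypomonotone. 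The triangle inequality gives $\|F(\vx) - F(\vx')\|_2 \leq \|\vu\|_2 + \|\vv\|_2 \leq (2+\sigma)\|\vx - \vx'\|_2$, so $F$ is $(2+\sigma)$-Lipschitz.

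Next I would feed these into \Cref{prop:hypomonotone} with hypomonotonicity parameter $\sigma + \sigma^2/2$ and Lipschitz constant $2+\sigma$, obtaining in $\poly(d, \log(1/\epsilon))$ time a point $\vxstar$ that is an $\eta$-approximate (Stampacchia) VI solution for $F$, where $\eta = 2 D \sqrt{(2+\sigma)\bigl(\epsilon + (\sigma + \tfrac{\sigma^2}{2}) D^2\bigr)}$ and $D$ is the $\ell_2$ diameter of $\calX$. I would then convert this to a fixed-point bound exactly as in the proof of \Cref{theorem:ellipsoid}: since $T(\vxstar) \in \calX$, instantiating $\langle \vx - \vxstar, F(\vxstar)\rangle \geq -\eta$ at $\vx = T(\vxstar)$ gives $-\|\vxstar - T(\vxstar)\|_2^2 \geq -\eta$, i.e., $\|\vxstar - T(\vxstar)\|_2 \leq \sqrt{\eta} = \epsilon'$, which is precisely the claimed expression. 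For the last sentence, substituting $\sigma \leq \epsilon$ makes the radicand $\epsilon + (\sigma + \sigma^2/2)D^2$ equal to $O_\epsilon(\epsilon)$ while $2+\sigma = O(1)$, so $\eta = O_\epsilon(\sqrt{\epsilon})$ and thus $\epsilon' = O_\epsilon(\epsilon^{1/4})$; the matching lower bound holds because $\eta \geq 2D\sqrt{2\epsilon}$ for every $\sigma \geq 0$, giving $\epsilon' = \Theta_\epsilon(\epsilon^{1/4})$.

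I do not anticipate a genuine obstacle: the argument is essentially bookkeeping built on top of \Cref{prop:hypomonotone}. The one place demanding care is the overloading of the symbol $\sigma$ — here it denotes the expansiveness modulus of $T$, whereas the quantity actually plugged into \Cref{prop:hypomonotone} as its hypomonotonicity parameter is the slightly larger $\sigma + \sigma^2/2$ (of the same order for small $\sigma$). It is also worth confirming that \Cref{prop:hypomonotone} needs nothing of $\calX$ beyond convexity, compactness, and a bound $D$ on its $\ell_2$ diameter, all of which are in force.
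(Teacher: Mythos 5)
Your proposal is correct and follows essentially the same route as the paper: establish $(\sigma+\sigma^2/2)$-hypomonotonicity of $F(\vx)=\vx-T(\vx)$ via the polarization identity, invoke \Cref{prop:hypomonotone}, and convert the resulting approximate VI solution to a fixed-point bound by testing at $\vx = T(\vxstar)$ exactly as in the proof of \Cref{theorem:ellipsoid}. The only addition is that you explicitly verify the $(2+\sigma)$-Lipschitz bound on $F$ (which the paper leaves implicit), and your constant tracking matches the stated $\epsilon'$.
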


Compared to the recent result of~\citet{Diakonikolas25:Pushing}, the complexity above grows logarithmically in $1/\epsilon$, at the cost of being applicable to a narrower regime of $\rho$. Furthermore, as we highlighted in~\Cref{sec:complexity}, \Cref{prop:hypomonotone} yields a polynomial-time algorithm for computing $\epsilon$-performatively stable points in the following regime.

\expans*

Whether the tradeoff between approximation and expansiveness can be improved to match the result of~\citet{Diakonikolas25:Pushing} is an interesting question. As becomes evident from~\Cref{prop:hypomonotone,prop:gen-expans}, the $\epsilon^{1/4}$ factor is an artifact of how approximation is measured. In terms of the VI problem corresponding to $F(\vx) = \vx - T(\vx)$, our approach yields an $O_\epsilon(\epsilon)$ approximation for a Minty VI solution and an $O_\epsilon(\sqrt{\epsilon})$ approximation for a (Stampacchia) VI solution.

\section{\PPAD-hardness for general convex sets}\label{sec:append_convex_set_PPAD}

    


In this section, we generalize the result of \cref{theorem:PPAD_complete} from the domain $[0, 1]^d$ to general convex sets. Through out the section, we let $n$ denote the bit-length of the input to the Turing machine. We start this section by defining the \TwoDSperner problem. Consider the triangle $\triangle A_1A_2A_3$ on a 2D plane where $A_0 = (0, 0), A_1 = (2^n, 0),$ and $A_2 = (0, 2^n).$ We define the triangulation to be 
\begin{equation*}
    T_n = \{\vp = (p_1, p_2)\in \mathbb{Z}^2 \mid p_1 \geq 0, p_2 \geq 0, p_1 + p_2 \leq 2^n\}.
\end{equation*}
For any $3$-coloring function $g :T_n \to \{1, 2, 3\},$ it is said to be admissible if the following conditions are met:
\begin{itemize}
    \item $g(A_i) = i, \text{ for all $i \in \{1, 2, 3\}$};$
    \item For every $\vp$ on the segment of $A_iA_j$, $g(\vp) \neq 6 - i - j.$
\end{itemize}
    
\begin{definition}[\TwoDSperner; \citealp{PapaGraphTheory}]
    Given a polynomial-time Turing machine $F$ that produces a admissible 3-coloring $g$ on $T_n$ where $g(\vp) = F(\vp) \in \{1, 2, 3\}$ for every $\vp \in T_n$, the output of \TwoDSperner is a trichromatic triangle of coloring $g$.
\end{definition}

The \PPAD-membership of \TwoDSperner was established by~\citet{PapaGraphTheory}, \citet{CHEN20094448} showed that \TwoDSperner is \PPAD-complete.

\begin{theorem}[\citealp{CHEN20094448}] \TwoDSperner is \PPAD-complete. \label{theorem:2dPPAD}
\end{theorem}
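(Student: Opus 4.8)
The statement decomposes into membership of \TwoDSperner in \PPAD and \PPAD-hardness; the first is classical, so I would spend the bulk of the argument on the second. For membership, the plan is to implement the constructive ``doors and rooms'' proof of Sperner's lemma as a reduction to \textsc{End-of-Line}. Call a unit edge of the triangulation a \emph{door} if its two endpoints are colored $1$ and $2$; a short case check over the multiset of vertex colors of a unit triangle shows that each triangle has $0$, $1$, or $2$ doors, and that it has exactly one door precisely when it is trichromatic. Hence the graph on unit triangles --- plus one node for the exterior of $\triangle A_1A_2A_3$, adjacent to the boundary doors, which by admissibility all lie on side $A_1A_2$ --- has maximum degree $2$. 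Orienting every door from its $1$-colored endpoint to its $2$-colored endpoint and using the resulting left/right side to direct the corresponding graph edge turns this into a legal \textsc{End-of-Line} instance whose description circuit makes one call to the coloring machine $F$ per local query; the canonical boundary door on $A_1A_2$ supplies the standard source, and every other endpoint of the line is a one-door --- hence trichromatic --- triangle.

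For hardness, the plan is to reduce from \textsc{End-of-Line} on $\{0,1\}^m$ with standard source $0^m$. In time $\mathrm{poly}(m)$ one builds an admissible coloring $g$ of the triangle of side $2^n$ with $n = \mathrm{poly}(m)$, so that the ``doors and rooms'' path of the constructed \TwoDSperner instance retraces the line of the \textsc{End-of-Line} instance: allocate a small sub-region of the grid to each of the $2^m$ potential line vertices; color every cell outside all sub-regions by a fixed coarse ``background'' rule, chosen so that it alone creates no trichromatic triangle and confines the Sperner path to the sub-regions; and realize each edge of the line graph as a thin ``pipe'' bounded by a $1$-side and a $2$-side that carries the unique door-path from one vertex-region into the next. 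Wire a vertex with both an in- and an out-edge so the path passes straight through, attach $0^m$ to the canonical boundary door, and let a vertex with a single incident edge --- a genuine endpoint of the line --- cap its pipe off into a trichromatic triangle; reading the line endpoint off such a triangle is then immediate, and every \TwoDSperner solution arises this way.

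The main obstacle --- and the reason the two-dimensional case stayed open long after dimension $\ge 3$ was settled --- is that pipes in the plane \emph{cannot cross}, so the edges of the line graph cannot be routed independently as they can when there is a spare dimension. The fix is the snake-embedding technique of Chen--Deng: first massage the \textsc{End-of-Line} instance into an equivalent one whose vertices admit a boustrophedon (snake) layout in the grid under a recursive partition, so that every edge joins vertices that are adjacent at some level of the recursion; then route the pipes level by level, turning corners with fixed local coloring gadgets, so that no two pipes ever meet. The technical heart of the write-up is thus this recursive layout: specifying the grid partition and the junction/turn gadgets, verifying that the side length stays $2^{\mathrm{poly}(m)}$, and checking that the global coloring remains admissible and that its door-path corresponds exactly to the line. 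Everything else --- the case analysis for membership, admissibility of $g$, and the correspondence between trichromatic triangles and line endpoints --- is routine, and I would isolate it in separate lemmas. (A slicker alternative packages the same non-crossing routing as a reduction through a two-dimensional discrete Brouwer fixed-point problem, but the core difficulty is unchanged.)
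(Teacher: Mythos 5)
First, note that the paper does not prove this statement at all: it is imported verbatim from \citet{CHEN20094448}, so there is no in-paper argument to compare against, and what you are attempting is a reproof of that (substantial) external result. Your membership direction is essentially the standard doors-and-rooms argument and is fine up to a small slip: the single ``exterior'' node you add is adjacent to \emph{all} boundary doors on $A_1A_2$, so it need not have degree at most $2$; the usual fix is to pair up consecutive boundary doors along $A_1A_2$ (or subdivide the exterior), which is routine.

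The hardness direction, however, has a genuine gap at exactly the step you identify as the technical heart. You propose to ``massage the \textsc{End-of-Line} instance into an equivalent one whose vertices admit a boustrophedon (snake) layout \ldots so that every edge joins vertices that are adjacent at some level of the recursion,'' and then route pipes so that ``no two pipes ever meet.'' The abstract degree-$\leq 2$ graph is of course planar (a union of paths and cycles); the obstacle is not planarity but that the graph is given implicitly by a circuit on $\{0,1\}^m$, so any layout in which line-edges only join layout-adjacent vertices cannot be computed in polynomial time --- producing it would amount to tracing the lines, i.e.\ solving the instance. Any polynomial-time reduction must place vertices at canonical positions determined by their labels and route each edge canonically from the labels of its two endpoints alone, and then crossings between pipes are unavoidable. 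The actual mechanism in \citet{CHEN20094448} is not crossing avoidance but crossing \emph{neutralization}: whenever two canonically routed edges cross, the crossing is replaced by a local exchange gadget that swaps the two connections (the edge from $a$ continues into the pipe toward $d$ and vice versa). This preserves every vertex's in-degree and out-degree, hence preserves the set of unbalanced vertices (the \textsc{End-of-Line} solutions), while making the embedded grid graph genuinely planar; the Sperner coloring is then built around this planar grid graph. Without this exchange idea (or an equivalent), the claim that the pipes can be routed without ever meeting is unsupported, and the reduction as sketched does not go through.
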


We note that even though the \PPAD-hardness result for \TwoDSperner is established on a right triangle, one can generalize this hardness result to arbitrary triangles.

\begin{lemma} \label{lemma:triangulation}
    For any triangle $\triangle {A}_1 {A}_2 {A}_3$ where ${A}_1 = (0, 0), {A}_2 = (a_1, a_2) = {\va}, {A}_3 = (b_1, b_2) = {\vb}$, define the triangulation to be 
    \begin{equation*}
        \mathcal{T}_n = \left\{\vp = \frac{q}{2^n}\va + \frac{r}{2^n}\vb \mid (q, r) \in \mathbb{Z}^2, q \geq 0, r \geq 0, q + r \leq 2^n \right\}. 
    \end{equation*}
    Given a polynomial-time Turing machine $F'$ that produces an admissible 3-coloring $g'$ for all points $\vp \in \mathcal{T}_n,$ it is \PPAD-complete to output a trichromatic triangle of coloring $g'$. 
\end{lemma}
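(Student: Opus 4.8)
The plan is to reduce the standard \TwoDSperner problem of \Cref{theorem:2dPPAD} (and its inverse) to the generalized problem via an explicit change of coordinates. Since $\triangle {A}_1{A}_2{A}_3$ is non-degenerate, $\{\va,\vb\}$ is a basis of $\mathbb{R}^2$, so the linear map $M$ with $M\ve_1 = \va$, $M\ve_2 = \vb$ is invertible. I would define $\phi(p_1,p_2) \defeq \tfrac{p_1}{2^n}\va + \tfrac{p_2}{2^n}\vb = 2^{-n}M(p_1,p_2)$, and check that $\phi$ is a bijection from the standard grid $T_n = \{(p_1,p_2)\in\mathbb{Z}^2 : p_1,p_2\ge 0,\ p_1+p_2\le 2^n\}$ onto $\mathcal{T}_n$, identifying the integer coordinates $(q,r)$ of a point of $\mathcal{T}_n$ with $(p_1,p_2)$; its inverse sends $2^{-n}(q\va+r\vb)$ to $(q,r)$, which is well-defined (uniqueness of coordinates in the basis $\{\va,\vb\}$) and computable in time $\poly(n)$ since $\{\va,\vb\}$ is a fixed basis.

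Next I would verify that $\phi$ is a simplicial isomorphism between the standard triangulation of $T_n$ — whose elementary triangles are $\{\vp,\vp+\ve_1,\vp+\ve_2\}$ and $\{\vp+\ve_1,\vp+\ve_2,\vp+\ve_1+\ve_2\}$ — and the induced triangulation of $\mathcal{T}_n$. This is immediate because $\phi(\vp+\ve_i) = \phi(\vp) + 2^{-n}M\ve_i$, so $\phi$ carries each elementary triangle of $T_n$ onto one of $\mathcal{T}_n$ and conversely, with no cell gained or lost. Moreover $\phi$ is affine, maps the three corners $(0,0),(2^n,0),(0,2^n)$ of $T_n$ respectively to ${A}_1,{A}_2,{A}_3$, and maps the lattice points on the edge $\overline{A_iA_j}$ of $T_n$ onto those on the corresponding edge of $\triangle {A}_1{A}_2{A}_3$. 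Consequently a $3$-coloring $g$ of $T_n$ is admissible if and only if $g' \defeq g\circ\phi^{-1}$ is admissible on $\mathcal{T}_n$, and the trichromatic elementary triangles of $g'$ are exactly the $\phi$-images of those of $g$.

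With these pieces in hand, the reduction is: given a polynomial-time Turing machine $F$ computing an admissible coloring $g$ on $T_n$ (an instance of \TwoDSperner), let $F'$ be the machine that on a point $\vp'\in\mathcal{T}_n$ outputs $F(\phi^{-1}(\vp'))$; then $F'$ runs in polynomial time and computes an admissible coloring $g'$ on $\mathcal{T}_n$. A trichromatic triangle of $g'$ maps, in polynomial time, under $\phi^{-1}$ to a trichromatic triangle of $g$, so this is a valid reduction and the generalized problem is \PPAD-hard by \Cref{theorem:2dPPAD}. Running the same argument with $\phi$ in place of $\phi^{-1}$ reduces the generalized problem back to \TwoDSperner, which lies in \PPAD; hence the generalized problem is \PPAD-complete.

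I expect no substantive obstacle here: the only points needing care are (i) that the invertible linear map $M$ carries the standard triangulation exactly onto the triangulation of $\mathcal{T}_n$, which holds since $M\ve_1=\va$, $M\ve_2=\vb$ and $M$ is a bijection; and (ii) that $\phi$ and $\phi^{-1}$ stay within the polynomial-time budget, which is trivial for a fixed triangle. In the application of this lemma inside \Cref{theorem:PPAD_hardness_convex_set} one must additionally ensure that the triangle $\triangle {A}_1{A}_2{A}_3$ located inside the well-bounded domain $\mathcal{X}$ has vertices of bit-length polynomial in $n$, so that the composition with $F'$ there remains efficient; this can be arranged from the well-boundedness parameters $R_1, R_2$.
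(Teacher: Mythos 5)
Your proposal is correct and follows essentially the same route as the paper: both reduce from \TwoDSperner{} by the change of coordinates $\vp = \frac{q}{2^n}\va + \frac{r}{2^n}\vb$, defining $g'(\vp) = F((q,r))$, noting that the coefficients $(q,r)$ are computable in polynomial time by basis decomposition, that admissibility transfers, and that trichromatic triangles correspond under the map. The only cosmetic difference is that the paper cites Sperner's lemma directly for \PPAD-membership, whereas you reduce back to \TwoDSperner{}; your extra observations about the simplicial isomorphism and vertex bit-lengths are consistent with, but not needed beyond, the paper's argument.
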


\begin{proof}
     First, observe that given $\vp, \va$ and $\vb,$ we can compute coefficients $q$ and $r$ in polynomial time through standard basis decomposition. The \PPAD-membership follows from Sperner's lemma. To prove the hardness, given a \TwoDSperner instance $(F, 0^n)$, we construct the coloring $g'$ of triangle $\triangle A_1 A_2 A_3$ such that for any point $\vp = (\frac{q}{2^n}\va, \frac{r}{2^n}\vb) \in \mathcal{T}_n$
    \begin{equation*}
        g'(\vp) = F((q, r)).
    \end{equation*}
    Since $F$ produces an admissible 3-coloring, it holds that 
    \begin{itemize}
        \item $g'(A_i) = i, \text{ for all $i \in \{1, 2, 3\}$};$
        \item For every $\vp = (\frac{q}{2^n} \va, \frac{r}{2^n} \vb)$ on the segment of $A_i A_j, g'(\vp) = g((q, r)) \neq 6 - i - j$.
    \end{itemize}
    Thus we show that $g'$ is an admissible 3-coloring over the triangulation $\mathcal{T}_n$. Furthermore, from any trichromatic triangle of coloring $g'$ over $\mathcal{T}_n$, we can recover a trichromatic triangle of coloring $g$ in $T_n$ in polynomial time. From \cref{theorem:2dPPAD}, we conclude the problem is \PPAD-complete.
\end{proof}

We now introduce the problem of \epsilonThickBrou \cite{deligkas_et_al:LIPIcs.ICALP.2020.38}, which is a extension of \TwoDSperner on an arbitrary triangle $\triangle A_1 A_2 A_3$ such that the coloring $g(\cdot)$ satisfies the following boundary conditions:

For a given $\epsilon$ and any $\vp = \frac{q}{2^n} \va + \frac{r}{2^n} \vb \in \mathcal{T}_n$, it holds that

\begin{equation} \label{eq:thick_brouwer_rule}
    g(\vp) = \begin{cases}
        1 & \text{for all $q \leq 2^n \epsilon$, and for all $ 2^n \epsilon < r < (1 - \epsilon)2^n - q;$}\\
        2 & \text{for all $r \leq 2^n \epsilon $, and for all $q < (1 - \epsilon)2^n - r$;}\\
        3 & \text{for all $q$ and $r$ such that $(1 - \epsilon) 2^n \leq q + r \leq 2^n;$} \\
        \text{any color in $\{1, 2, 3\}$} & \text{otherwise}. 
    \end{cases}
\end{equation}


Given a \TwoDSperner instance, one can reduce it to \epsilonThickBrou in polynomial time by increasing the number of points in the triangle and embedding the original instance in the center of the new construction. A detailed proof can be found in the paper of~\citet{deligkas_et_al:LIPIcs.ICALP.2020.38}.

To map the coloring defined on the grid $\mathcal{T}_n$ to the triangle $\triangle A_1 A_2 A_3$, we adopt the bit-extraction technique, which is commonly used in \PPAD-reductions. Specifically, consider a triangle $\triangle A_1 A_2 A_3$ with vertices ${A}_1 = \mathbf{0},$ ${A}_2 = \va$, and ${A}_3 = \vb.$ For any point $\vp$ inside the triangle $\triangle A_1 A_2 A_3$, we can compute coefficients $q$ and $r$ through standard basis decomposition such that
\begin{equation*}
    \vp = \frac{q}{2^n} \va + \frac{r}{2^n} \vb.
\end{equation*}

\begin{algorithm}
    \caption{ExtractBit $(x, b)$ \label{algo:extract_bits}}
    \begin{algorithmic}
        \STATE $b \gets 0.5$
        \STATE $b \gets x -^b b$
        \STATE $b \gets b *^b L$
    \end{algorithmic}
\end{algorithm}

We then apply the bit-extraction scheme \cref{algo:extract_bits} of \citet{deligkas_et_al:LIPIcs.ICALP.2020.38} to recover the first $n$ bits of $\frac{q}{2^n}$ and $\frac{r}{2^n}$. Operators $+^b, -^b, *^b$ denote bounded operations that ensure the outcomes remain in $[0, 1],$ which can be efficiently implemented through a algorithmic circuit with standard $\min$ and $\max$ operations. Notice that when $x \leq 0.5$, we have $b = 0$, and when $x \geq 0.5 + \frac{1}{L},$ we have $b = 1.$ For $0.5 < x < 0.5 + \frac{1}{L},$ the value of $b$ may lie anywhere strictly between $0$ and $1$ due to the continuity of the output of the algorithmic circuit. We refer to the first two cases as well-positioned and the last case as poorly-positioned. To account for the effect of poorly-positioned points, for any $\vx \in [0, 1]^2$, we sample $k$ points $\vx_1, \vx_2, \dots, \vx_k$ where
\begin{equation*}
    \vx_i = \vx + (i - 1) \left[\frac{1}{(k + 1)2^{n+1}}, \frac{1}{(k + 1)2^{n+1}}\right].
\end{equation*}
The following lemma holds for the sample points $\vx_1, \dots, \vx_k$.

\begin{lemma}[\citealp{deligkas_et_al:LIPIcs.ICALP.2020.38}] \label{lemma:sample_points}
    Setting $L = (k + 2) 2^{n+1}$, then among points $\vx_1, \dots, \vx_k$, at most two points will be poorly-positioned.
\end{lemma}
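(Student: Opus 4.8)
The plan is to reduce the statement to a one-dimensional counting fact. I would first determine exactly which inputs make a single invocation of \Cref{algo:extract_bits} poorly positioned: the call on input $x$ outputs $\min\{1,\,L\max\{0,\,x-\tfrac12\}\}$, hence is poorly positioned precisely when $x\in(\tfrac12,\tfrac12+\tfrac1L)$. When the circuit recovers the first $n$ bits of a coordinate $y\in[0,1)$, the input to the $j$-th call is $y$ shifted left by $j-1$ binary places with the already-extracted integer part removed; provided every earlier call was well positioned, this input equals the fractional part $\{2^{j-1}y\}$ exactly. Inspecting the \emph{first} poorly positioned call along the extraction of $y$ (so earlier ambiguities play no role), I conclude that $y$ triggers a poorly positioned call only if $\{2^{j-1}y\}\in(\tfrac12,\tfrac12+\tfrac1L)$ for some $j\le n$, i.e. only if $y$ lies in the ``bad set''
\[
    B \;\defeq\; (0,1)\cap\bigcup_{j=1}^{n}\ \bigcup_{m\ge 0}\ \Bigl(\tfrac{2m+1}{2^{j}},\ \tfrac{2m+1}{2^{j}}+\tfrac{1}{2^{j-1}L}\Bigr).
\]
The left endpoints here are exactly the dyadic rationals $m/2^{n}$ with $1\le m\le 2^{n}-1$ --- each such point has reduced denominator $2^{j}$ for a unique $j\le n$, which also gives the width $2^{-(j-1)}/L\le 1/L$ of the associated interval --- so the intervals of $B$ are pairwise disjoint, have length at most $1/L$, and their defining points are spaced exactly $2^{-n}$ apart.

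With this in hand I would carry out the count. Write $\delta\defeq\tfrac{1}{(k+1)2^{n+1}}$ for the coordinate-wise spacing, so $(\vx_i)_c=(\vx)_c+(i-1)\delta$ for each $c\in\{1,2\}$, and the sampled values of coordinate $c$ span an interval of length $(k-1)\delta$. Because $L=(k+2)2^{n+1}$ we have $\tfrac1L<\delta$, so any single interval of $B$ (length $\le 1/L<\delta$) contains at most one of the equally spaced values $(\vx_i)_c$. Moreover $(k-1)\delta=\tfrac{k-1}{k+1}\,2^{-(n+1)}<2^{-(n+1)}<2^{-n}-\tfrac1L$, while two points lying in two \emph{distinct} intervals of $B$ are always more than $2^{-n}-1/L$ apart (their defining points differ by at least $2^{-n}$ and each interval has length at most $1/L$); hence the sampled values of coordinate $c$ meet at most one interval of $B$. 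Combining the two observations, for each fixed $c$ at most one index $i$ satisfies $(\vx_i)_c\in B$. Since, by the first paragraph, $\vx_i$ is poorly positioned only if $(\vx_i)_1\in B$ or $(\vx_i)_2\in B$, at most two of $\vx_1,\dots,\vx_k$ are poorly positioned, which is the claim (with $L$ chosen as above).

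The one step I expect to require care is the reduction in the first paragraph: identifying the input of the $j$-th bit-extraction with $\{2^{j-1}y\}$ and justifying the ``first ambiguous call'' argument, so that the mismatch between the continuous arithmetic circuit and the Boolean coloring is entirely captured by membership in $B$. After that the proof is just bookkeeping with the two margins $1/L<\delta$ and $(k-1)\delta<2^{-n}-1/L$, both of which hold with room to spare for every $k\ge 1$.
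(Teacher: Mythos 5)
Your proof is correct. The paper itself does not prove this lemma (it is imported from \citet{deligkas_et_al:LIPIcs.ICALP.2020.38}), and your argument is essentially the standard one from that source: each poorly-positioned extraction forces the corresponding coordinate into an ambiguity window of width at most $1/L$ anchored at a dyadic multiple of $2^{-n}$, and since the per-coordinate sample spacing $\frac{1}{(k+1)2^{n+1}}$ exceeds $1/L$ while the total span $(k-1)\cdot\frac{1}{(k+1)2^{n+1}}$ is below $2^{-n}-1/L$, at most one sample per coordinate can be bad, giving at most two overall; your reconstruction of the iterated extraction (input of the $j$-th call equals $\{2^{j-1}y\}$ up to the first ambiguous call) is the right way to fill in the detail the paper leaves to the citation.
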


We proceed to restate the main results of this section.

\convexsethardness*

\begin{proof}
    Let $\mathcal{X} \subset \mathbb{R}^2$ be a two-dimensional well-bounded domain. From \cref{def:well-bounded}, there exist a 2D ball $\mathcal{B}_{R_1}$ inside $\mathcal{X}.$ Consider an arbitrary triangle equilateral triangle $\triangle A_1 A_2 A_3$ that lies on the boundary of $\mathcal{B}_{R_1}$. Without loss of generality, we assume $R_1 = 1$ and further assume the position of $A_1$ is at $(0, 0),$ $A_2 = (\sqrt{3}, 0) = \va$, and $A _3 = (\frac{\sqrt{3}}{2}, \frac{3}{2}) = \vb.$ Let the discretized grid over $\triangle A_1 A_2 A_3$ be as defined in \cref{lemma:triangulation}. We set $\epsilon = \frac{1}{8}$ and assign colors to the grid points according to~\eqref{eq:thick_brouwer_rule} such that the coloring $g(\cdot)$ for points on the grid $\mathcal{T}_n$ is admissible for the \epsilonThickBrou problem. For point $\vx \in \cal{X}$ outside $\triangle A_1 A_2 A_3$, the coloring $g(\vx)$ is defined as
    \begin{equation} \label{eq:outside_coloring}
        g(\vx) = \begin{cases}
        1 \text{ } \text{if } \min \{ \text{dist}(\vx ,{A}_1 {A}_2), \text{dist}(\vx ,{A}_1 {A}_3),  \text{dist}(\vx ,{A}_2 {A}_3) \} = \text{dist}(\vx ,{A}_1 {A}_3); \\
            2 \text{ } \text{if } \min \{ \text{dist}(\vx ,{A}_1 {A}_2), \text{dist}(\vx ,{A}_1 {A}_3),  \text{dist}(\vx ,{A}_2 {A}_3) \} = \text{dist}(\vx ,{A}_1 {A}_2); \\
            3 \text{ } \text{if } \min \{ \text{dist}(\vx ,{A}_1 {A}_2), \text{dist}(\vx ,{A}_1 {A}_3),  \text{dist}(\vx ,{A}_2 {A}_3) \} = \text{dist}(\vx ,{A}_2 {A}_3); \\
            \text{In terms of ties:\hspace{-1cm}} \\
        1 \text{ } \text{if } \min \{ \text{dist}(\vx ,{A}_1 {A}_2), \text{dist}(\vx ,{A}_1 {A}_3),  \text{dist}(\vx ,{A}_2 {A}_3) \} = \text{dist}(\vx ,{A}_1 {A}_2) =  \text{dist}(\vx ,{A}_1 {A}_3); \\
        2 \text{ } \text{if } \min \{ \text{dist}(\vx ,{A}_1 {A}_2), \text{dist}(\vx ,{A}_1 {A}_3),  \text{dist}(\vx ,{A}_2 {A}_3) \} = \text{dist}(\vx ,{A}_1 {A}_2) =  \text{dist}(\vx ,{A}_2 {A}_3); \\
        3 \text{ } \text{if } \min \{ \text{dist}(\vx ,{A}_1 {A}_2), \text{dist}(\vx ,{A}_1 {A}_3),  \text{dist}(\vx ,{A}_2 {A}_3) \} = \text{dist}(\vx ,{A}_1 {A}_3) =  \text{dist}(\vx ,{A}_2 {A}_3),
        \end{cases}
    \end{equation}
    where dist$(\vx, A_i A_j)$ denotes the distance from point $\vx$ to line $A_i {A}_j$. Notice that by construction, there is no trichromatic triangle outside $\triangle {A}_1 {A}_2 {A}_3.$

    We move on to map the coloring of $\vx$ to the operator value $F(\vx).$ Recall that segment $A_1 A_2 = \va$ and segment $A_1 {A}_3 = \vb,$ we define $\va_{\perp} = (0, 1)$ to be the unit vector orthogonal to $\va$ and pointing inside the triangle. Similarly, we define $\vb_{\perp} = (\frac{\sqrt{3}}{2}, -\frac{1}{2})$ and $\vc_{\perp} = (-\frac{\sqrt{3}}{2}, -\frac{1}{2})$ as the unit vectors orthogonal to segment ${A}_1 {A}_3$ and ${A}_2 {A}_3$ respectively, each pointing inward, as shown in \cref{fig:PPAD_color}. Since $\triangle A_1 A_2 A_3$ is a equilateral triangle, it follows that $\va_{\perp} + \vb_{\perp} + \vc_{\perp} = 0.$ We then map each color to a different vector such that color $1$ is mapped to $\vb_\perp$, color $2$ is mapped to $\va_\perp$, and color $3$ is mapped to $\vc_\perp$. For any point $\vx \in \cal{X}$, we first sample $k$ points\footnote{The coloring for any sample point $\vx_i$ outside $\cal{X}$ is also determined by \eqref{eq:outside_coloring}.} such that
    \begin{equation*}
        \vx_i = \vx + (i - 1) \left[\frac{1}{(k + 1)2^{n+1}} \va, \frac{1}{(k + 1)2^{n+1}} \vb\right].
    \end{equation*}
    We then extract the first $n$ bits of $\vx_i$ as $\bar{\vx}_i$ and pass to the boolean circuit to get the corresponding color. The operator $F(\vx)$ is then computed as the average of the vectors corresponding to the colors of sampled points,
    \begin{equation*}
        F(\vx) = \frac{1}{k} \left(\sum_{i = 1}^k \bbm{1}(g(\bar{\vx}_i) = 2)\va_\perp + \bbm{1}(g(\bar{\vx}_i) = 1)\vb_\perp + \bbm{1}(g(\bar{\vx}_i) = 3)\vc_\perp\right),
    \end{equation*}
    where $\bbm{1}(g(\bar{\vx}_i) = j)$ denotes the indicator that the coloring of $\bar{\vx}_i$ is $j$. 
    

    What remains now is to show that from a solution $\vx^*$ of \eqref{eq:target_VI}, one can recover a trichromatic triangle. We begin by showing that any point within $\epsilon$ distance from the boundary of $\cal{X}$ cannot be a solution of \eqref{eq:target_VI}. 
    
    If $\vx$ lies within $\frac{\epsilon}{2}$ distance of the boundary of $\cal{X},$ it either lies outside of $\triangle {A}_1 {A}_2 {A}_3$ or is within $\frac{\epsilon}{2}$ distance from one of the segment ${A}_i {A}_j.$
    By the coloring rule in \eqref{eq:thick_brouwer_rule} and \eqref{eq:outside_coloring}, among the sampled points $\vx_1, \dots \vx_k$, any well-positioned point can only take two of the three colors. Without loss of generality, we assume that color $2$ is missing from all well-positioned sample points of $\vx$, the cases where color $1$ or color $3$ is missing follow similarly.

    From~\cref{lemma:sample_points},  at least $k - 2$ points out of the $k$ sample points are well-conditioned and are assigned either color $1$ or $3$. We consider two cases,
    \begin{itemize}
        \item \textbf{Color $1$ is also missing among the well-conditioned sample points}. By the coloring rule in \eqref{eq:thick_brouwer_rule} and the the choice of $\epsilon$, it follows that $\vx$ must lie within distance $\frac{\epsilon}{2}$ of segment $A_2 A_3$. Recall that $\vc_\perp = (-\frac{\sqrt{3}}{2}, -\frac{1}{2}),$ we consider the $x$-coordinate of $F(\vx),$
        \begin{align*}
            F(\vx)_x \leq - (\frac{\sqrt{3}(k-2)}{2k} - \frac{2}{k}),
        \end{align*}
        where the first term comes from the contribution of the $k-2$ well-conditioned sample points, while the second term accounts for the error introduced by the remaining two points. Let $\vx' = A_1 = (0, 0),$ we have
        \begin{align*}
            \langle\vx' - \vx, F(\vx)\rangle & \geq (\vx'_x - \vx_x) F(\vx)_x \\
            & \geq (\frac{\sqrt{3}}{2}  - \frac{\epsilon}{2}) \cdot (\frac{\sqrt{3}(k-2)}{2k} - \frac{2}{k}) \\
            & \geq \epsilon',
        \end{align*}
        where the second inequality holds because $\vx$ is within $\epsilon$ distance from $A_2 A_3$ segment, so its $x$-coordinate, $\vx_x \geq \frac{\sqrt{3}}{2} - \epsilon.$ The third inequality holds by setting $k \geq 16$ and $\epsilon' \leq \frac{\epsilon}{8} = \frac{1}{32}.$
        \item \textbf{The well-conditioned sample points contain both color $1$ and color $3$}. From the coloring rule in \eqref{eq:thick_brouwer_rule}, $\vx$ cannot lie within $\frac{\epsilon}{2}$ distance with segment $A_1 A_2$. Recall that $\vb_{\perp} = (\frac{\sqrt{3}}{2}, -\frac{1}{2})$ and $\vc_{\perp} = (-\frac{\sqrt{3}}{2}, -\frac{1}{2})$. Consequently, the averaged direction $F(\vx)$ also has negative $y$-component. Specifically, let $F(\vx)_y$ denote the $y$-coordinate of $F(\vx)$, it holds that
    \begin{align*}
        F(\vx)_y \leq - \left(\frac{k - 2}{2k}  - \frac{2}{k}\right).
    \end{align*}
    Let $\vx'$ be a point on the segment $A_1 A_2$. Note that since color $1$ is missing, by the construction of the \epsilonThickBrou problem, we also have $\vx_y - \vx'_y \geq \frac{\epsilon}{2}.$ Therefore
    \begin{align*}
        \left\langle \vx' - \vx, F(\vx)\right\rangle
        & \geq (\vx'_y - \vx_y) F(\vx)_y\\
        & \geq \frac{\epsilon}{2} \cdot \left(\frac{k - 2}{2k}  - \frac{2}{k}\right) \\
        & > \epsilon',
    \end{align*}
    where the last step follows by setting $k \geq 16$ and $\epsilon' \leq \frac{\epsilon}{8} = \frac{1}{32}.$
    \end{itemize}

    For any point $\vx$ that lies more than $\frac{\epsilon}{2}$ distance away from the boundary, we argue that if $\vx$ is a solution for $\eqref{eq:target_VI},$ then one can recover a trichromatic triangle for the \epsilonThickBrou problem. We show this by contradiction, first assume that if color $1$ is missing from the well-positioned sampled points among $\vx_1 \cdots \vx_k,$ like the previous case, we have
    \begin{align*}
        F(\vx)_y \leq - \left(\frac{k - 2}{2k} - \frac{2}{k}\right).
    \end{align*}
    Note that since $\vx$ is not within $\frac{\epsilon}{2}$ distance from the boundary, along the negative $y$ direction, we can find a point $\vx' \in \cal{X}$ that is at least $\epsilon$ away from $\vx$ (i.e., $\vx'$ = $\vx - \epsilon \cdot (0, 1)$). It then holds that
    \begin{align*}
        \left\langle \vx' - \vx, F(\vx)\right\rangle
        & > \frac{\epsilon}{2} \cdot \left(\frac{k - 2}{2k} - \frac{2}{k}\right) > \epsilon'.
    \end{align*}
    The cases where color $2$ or coloring $3$ is missing follow similarly. Therefore we conclude that if $\vx$ is a solution of \eqref{eq:target_VI}, the well-positioned sample points among $\vx_1, \dots, \vx_n$ must have all three colors. Observe that $\norm{\vx_k - \vx}_\infty < \frac{1}{2^n} \min(\norm{\va}_2, \norm{\vb}_2)$, which implies that if the well-positioned points among $\vx_1 \dots \vx_k$ contain all three colors, then $\vx$ must resides within a trichromatic square with sides oriented along directions $\va$ and $\vb.$ Such a square can only occur within $\triangle A_1 A_2 A_3.$ Finally, the reduction from trichromatic triangles to trichromatic squares of \TwoDSperner is established in \citet{CHEN20094448}. 
    
    Note that since the grid $\mathcal{T}_n$ has side length $O(\frac{1}{2^n}),$ the Lipschitz constant of the operator $F(\cdot)$ is $O(2^n).$ Define the rescaled operator $F'(\vx) = \frac{F(\vx)}{2^n}$, and let $\epsilon '' = \frac{\epsilon'}{2^n} = O(\frac{1}{2^n}).$ Then computing a point $\vx^* \in \mathcal{X}$ such that for any $\vx \in \cal{X},$ 
    \begin{align*}
        \langle\vx - \vx^*, F'(\vx^*)\rangle \leq \epsilon''        
    \end{align*}
    is \PPAD-hard. Moreover, the Lipschitz constant for $F'(\cdot)$ is $O(1)$ and $\epsilon'' = O(\frac{1}{2^n})$. This completes the proof.
\end{proof}

\begin{remark}
    In our proof the operator $F$ is given by a (well-behaved) arithmetic circuit with $n$ rational inputs and size that depends polynomially on the description of the $\TwoDSperner$ problem, which can effectively approximate any Lipschitz continuous function.  We refer the reader to~\citet{Fearnley23:Complexity} for further background on complexity theory.
\end{remark}

\begin{remark}
    We remark that our construction uses irrational coordinates for the positions of $A_2$ $A_3$ and for the directional vectors $\va_{\perp}, \vb_{\perp},\ \vc_{\perp}$, which cannot be represented exactly by a Turing machine. Nevertheless, our reduction continues to work given a suitably good approximation of these quantities. A similar technical issue is discussed in \citet{deligkas_et_al:LIPIcs.ICALP.2020.38}.
\end{remark}
\section{\PLS-hardness of finding local optima in strategic classification} \label{sec:append_PLS_proof}

In this section, we establish that finding a local optima in strategic classification per \cref{def:strat-class} is \PLS-hard. We first restate the main result we want to prove.

\PLSHardness*

\begin{proof}
     The proof proceeds via a polynomial-time reduction from the \localmaxcut problem. Let $G = (V, E, w)$ be a weighted undirected graph with edge weights $w_{(u, v)} \geq 0$ for any edge $(u, v) \in E.$ We construct a strategic classification instance with a finite population $X$ and a non-uniform distribution $\cal{D}$ over $X$. For convenience, we define the \emph{weight} of a point $w_{\calD}(x)$ so that the probability of sampling $x \in X$ from $\cal{D}$ is proportional to $w_{\calD}(x)$. The population is defined as follows.
     \begin{itemize}
         \item For each vertex $v \in V,$ we introduce a point $x_{v^-}$ with label $h(x_{v^-}) = 0.$ The weight of $x_{v^-}$ under distribution $\cal{D}$ is given by the total weight of edges incident to vertex $v$, i.e., $w_{\cal{D}}(x_{v^-}) = \sum_{u \in \mathcal{N}(v)} w_{(u, v)};$
         \item For every edge $(u, v) \in E$, we introduce a point $x_{(u, v)^+}$ with label $h(x_{(u, v)^+}) = 1$, and weight $w_{\cal{D}}(x_{(u, v)^+}) = 2 w_{(u, v)};$
         \item For every edge $(u, v) \in E$, we introduce a point $x_{(u, v)^-}$ with label $h(x_{(u, v)^-}) = 0$, and weight $w_{\cal{D}}(x_{(u, v)^-}) = 2 w_{(u, v)} + 1.$
     \end{itemize}
    We now define a metric $c: X \times X \to \mathbb{R}_{\geq 0}.$ We choose the value such that $c(x, x) = 0$ and $c(x, y) = c(y, x).$ Moreover, the metric $c$ takes only two nonzero values, $0.8$ and $1.2$. These values are chosen to ensure that the triangle inequality holds, other than that, we can set them to arbitrary value in the range from $(0, 1)$ and $(1, \infty)$ respectively. The metric is defined as follows.
    \begin{itemize}
        \item For each vertex point $x_{v^-}$, and for every positive edge point $x_{(u, v)^+}$ such that edge $(u, v)$ is incident to vertex $v$, we set $c(x_{v^-}, x_{(u, v)^+}) = 0.8$;
        \item For each edge positive point $x_{(u, v)^+}$ and corresponding edge negative point $x_{(u, v)^-}$ we set $c(x_{(u, v)^+}, x_{(u, v)^-}) = 0.8;$
        \item For all other pairs $(x, y),$ we define $c(x, y) = 1.2.$
    \end{itemize} 
    If $c(x, y) = 0.8,$ we call them close to each other. Notice that under this metric $c$, the Contestant will only deviate a point $x$ to point $y$ if $f(x) = 0, f(y) = 1,$ and $x$ is close to $y.$

    The first claim is that if $f^*$ is at a strategic local optimum, then $f^*(x_{(u, v)^-}) = 0$ and $f^*(x_{(u, v)^+}) = 0$ for all edge $(u, v) \in E$. To see this, assume that $f^*(x_{(u, v)^-}) = 1$ for some edge $(u, v) \in E,$ then the Jury can simply deviate to another classifier $f'$ that differs with $f^*$ with only the prediction of $x_{(u, v)^-}$. If $M$ is the sum of all the weights of points in $X$, it holds that
    \begin{align}
        \Pr_{\vx \sim \calD} [ h(\vx) = f'(\Delta(\vx))] - \Pr_{\vx \sim \calD} [ h(\vx) = f^*(\Delta(\vx))] & \geq \frac{1}{M} \left(2 w_{(u, v)} + 1 - 2 w_{(u, v)}\right) \label{eq:local_M_improve_ineq}\\
        & = \frac{1}{M} \label{eq:local_M_improve},
    \end{align}
    where~\eqref{eq:local_M_improve_ineq} holds because changing the prediction of $x_{(u, v)^-}$ from $1$ to $0$ may cause the misclassification of the positive edge point $x_{(u, v)^+}$, but it ensures that the negative edge point $x_{(u, v)^-}$ is classified correctly. As a result, \eqref{eq:local_M_improve} implies that the Jury can strictly improve their utility by deviating to the classifier $f'$, which contradicts the assumption that $f^*$ is a strategic local optimum.

    Now suppose $f^*(x_{(u, v)^+}) = 1$ for some $(u, v) \in E.$ Since $f^*(x_{(u, v)^-}) = 0$ and $x_{(u, v)^+}$ and $x_{(u, v)^-}$ are close to each other, the Contestant will deviate $x_{(u, v)^-}$ to $x_{(u, v)^+}.$ Consider an alternative classifier $f'$ that differs from $f^*$ only in the prediction of $x_{(u, v)^+},$ we have
    \begin{align*}
        \Pr_{\vx \sim \calD} [ h(\vx) = f'(\Delta(\vx))] - \Pr_{\vx \sim \calD} [ h(\vx) = f^*(\Delta(\vx))] & \geq \frac{1}{M} \left(- 2 w_{(u, v)} - ( - 2 w_{(u, v)} - 1)\right) \\
        & = \frac{1}{M},
    \end{align*}
    where the first inequality holds because changing the prediction of $x_{(u, v)^+}$ from $1$ to $0$ may cause the positive edge point $x_{(u, v)^+}$ to be misclassified, but it ensures that the negative edge point $x_{(u, v)^-}$ is classified correctly. Moreover, any deviations of vertex points $x_{v^-}$ can only increase this gap. Thus, $f^*$ cannot be a strategic local optimum, yielding a contradiction.

    We conclude that, in order to reach a strategic local optimum, the only points that can be labeled positively are the vertex points $x_{v^-}$. In this case, the Jury can improve their utility through the deviation of positive edge points $x_{(u, v)^+}$ to the corresponding vertex points $x_{v^-}.$ 
    
    We now proceed to analyze the utility of the Jury when the label of a single vertex point $x_{v^-}$ is changed from $0$ to $1$. Let $\mathcal{N} (v)^+$ denote the set of neighbors of vertex $v$ in the original graph whose corresponding vertex points are labeled $1$ by the classifier, and let $\mathcal{N} (v)^-$ denote the set of neighbors that are labeled $0$. Observe that before changing the label of $x_{v^-}$, for all vertices $u \in \mathcal{N}(v)^+,$ the Contestant already deviates the corresponding positive edge points $x_{(u, v)^+}$ to $x_{u^-}$. Hence, those points will be correctly labeled regardless of the change. In contrast, for every $u \in \mathcal{N}(v)^-$, the corresponding positive edge points $x_{(u, v)^+}$ do not deviate before the change, but will deviate to $x_{v^-}$ after the change. 
    
    Let $f$ denote the Jury's classifier before the change and $f'$ the classifier after the change. The resulting difference in the Jury's utility is
    \begin{align}
         \Pr_{\vx \sim \calD} [ h(\vx) = f'(\Delta(\vx))] - \Pr_{\vx \sim \calD} [ h(\vx) = f(\Delta(\vx))] & = \frac{1}{M} \left(\sum_{u \in \mathcal{N}(v)^-} 2 w_{(u, v)} - \sum_{u' \in \mathcal{N}(v)}  w_{(u', v)}\right) \label{eq:PLS_first_case}\\
         & =  \frac{1}{M} \left(\sum_{u \in \mathcal{N}(v)^-} w_{(u, v)} - \sum_{u' \in \mathcal{N}(v)^+} w_{(u', v)}\right) \label{eq:PLS_first_gain}.
    \end{align}
    The first term in \eqref{eq:PLS_first_case} corresponds to the gain from the correctly labeling positive edge points after the change, while the second term accounts for the loss introduced by misclassifying the vertex point $x_{v^-}.$ 

    Similarly, consider the case where the Jury change the label of a vertex point $x_{v^-}$ from $1$ to $0$. For each vertex $u \in \mathcal{N}(v)^+,$ the corresponding edge points $x_{(u, v)^+}$ will still be classified positive since the Contestant will deviate to $x_{u^-}.$ However, for every vertex $u \in \mathcal{N}(v)^-$, edge points $x_{(u, v)^+}$ will be misclassified, since after the change they no longer have any positively labeled neighbors. Thus, the resulting change in the Jury's utility is
    
    \begin{align}
         \Pr_{\vx \sim \calD} [ h(\vx) = f'(\Delta(\vx))] - \Pr_{\vx \sim \calD} [ h(\vx) = f(\Delta(\vx))] & = \frac{1}{M} \left( - \sum_{u \in \mathcal{N}(v)^-} 2 w_{(u, v)} + \sum_{u' \in \mathcal{N}(v)}  w_{(u', v)}\right) \label{eq:PLS_second_case}\\
         & =  \frac{1}{M} \left(\sum_{u \in \mathcal{N}(v)^+} w_{(u, v)} - \sum_{u' \in \mathcal{N}(v)^-} w_{(u', v)}\right) \label{eq:PLS_second_gain}.
    \end{align}

    Here, the first term in \eqref{eq:PLS_second_case} captures the loss from misclassifying edge points $x_{(u, v)^+},$ while the second term is due to correctly labeling the vertex point $x_{v^-}.$

    Suppose we have a classifier $f^*$ which is at a strategic local optimum. By \cref{def:strat_local_opt}, \eqref{eq:PLS_first_gain} is nonpositive for every vertex $u$ such that $x_{u^-}$ is labeled $0$, and \eqref{eq:PLS_second_gain} is nonpositive for every vertex $v$ such that $x_{v^-}$ is labeled $1$. Now consider a cut of the original graph defined as follows: each vertex $v \in V$ with $f^*(x_{v^-}) = 0$ is on one side of the cut (negative side), and each vertex with $f^*(x_{v^-}) = 1$ is placed on the other side (positive side). Since \eqref{eq:PLS_first_gain} is nonpositive, moving any vertex from the negative side of the cut to the positive side cannot increase the total weight of the cut. Similarly, since \eqref{eq:PLS_second_gain} is nonpositive, moving any vertex from the positive side to the negative side also cannot improve the cut weight. Thus, we conclude that any locally strategic optimal classifier $f^*$ induces a local max cut on the original graph $G$. This completes the proof.
 \end{proof}
\section{Further omitted proofs}
\label{sec:proofs}

This section contains additional omitted proofs. We begin with~\Cref{theorem:PPAD_complete}.

\PPADaffine*

\begin{proof}
    Let $\calX = [0, 1]^d$ and let $\vxstar \in \calX$ be an $\epsilon$-performatively stable point of \eqref{eq:strongly-convex-objective}-\eqref{eq:g-def}. By~\cref{def:approx-perfstab}, we have that for all $\vx \in \calX$,
    \begin{align}
        \left\langle\vx - \vxstar, \vxstar - g(\vxstar)\right\rangle \geq -\epsilon. \label{eq:PPAD_VI}
    \end{align}
    Now, let $g(\vx) : (\mat{I} - \bar{\mat{A}})\vx - \bar{\vb}$, where $\bar{\mat{A}} = \frac{\epsilon}{\epsilon'} \mat{A}$ and $\bar{\vb} = \frac{\epsilon}{\epsilon'} \vb$ for $\mat{A}$ and $\vb$ as in \cref{lemma:inapproximation}. Finding a solution $\vxstar$ satisfying \eqref{eq:PPAD_VI} would imply that for all $\vx \in [0, 1]^d,$
    \begin{align}
        \left\langle\vx - \vxstar, \mat{A}\vxstar + \vb\right\rangle \geq - \epsilon'.
    \end{align}
    From \cref{lemma:inapproximation}, we conclude that it is \PPAD-complete to find a point $\vxstar$ satisfying~\eqref{eq:PPAD_VI}. Furthermore,
    \begin{align*}
        \norm{g(\vx) - g(\vx')} &= \norm{(\mat{I} - \bar{\mat{A}})\vx - (\mat{I} - \bar{\mat{A}}) \vx'} \\
        & \leq (\norm{\mat{I}} + \norm{\mat{\bar{A}}}_2) \norm{\vx - \vx'} \\
        & \leq (1 + \sqrt{\norm{\bar{\mat{A}}}_1\norm{\bar{\mat{A}}}_\infty})\norm{\vx - \vx'} \\
        & \leq (1 + \frac{\epsilon}{\epsilon'}) \norm{\vx - \vx'}.
    \end{align*}
    We conclude that even when $L \beta / \alpha \leq 1 + \frac{\epsilon}{\epsilon'}$, it is \PPAD-complete to find an $\epsilon$-performatively stable point.
\end{proof}

We next point out the polynomial equivalence between the two natural ways of measuring approximation for performatively stable points.

\begin{lemma}
    \label{lemma:direc1}
If $\ell(\vx; \vz)$ is $\alpha$-strongly convex in $\vx$ (with respect to the $\|\cdot\|_2$ norm) for any $\vz \in \calZ$ and $\vxstar$ is an $\epsilon$-performatively stable point (\Cref{def:approx-perfstab}), then
\begin{equation*}
    \|\vxstar - G(\vxstar)\|_2 \leq \sqrt{\frac{\epsilon}{\alpha}},
\end{equation*}
where $G$ is the RRM map (\Cref{def:RRM}).
\end{lemma}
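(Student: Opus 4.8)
The plan is to phrase everything in terms of the single convex function $f(\vx) \defeq \E_{\vz \sim \calD(\vxstar)}[\ell(\vx; \vz)]$. Since $\ell(\cdot; \vz)$ is $\alpha$-strongly convex for every $\vz \in \calZ$, so is $f$ (strong convexity is preserved under expectation); in particular its minimizer over the compact convex set $\calX$ is unique and, by definition of the RRM map, equal to $G(\vxstar)$. With this notation, the hypothesis that $\vxstar$ is $\epsilon$-performatively stable (\Cref{def:approx-perfstab}) is exactly the statement that $\langle \vx - \vxstar, \nabla f(\vxstar)\rangle \geq -\epsilon$ for all $\vx \in \calX$, because $\nabla f(\vxstar) = \E_{\vz \sim \calD(\vxstar)}[\nabla_\vx \ell(\vxstar; \vz)]$.

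First I would produce a \emph{lower} bound on the optimality gap $f(\vxstar) - f(G(\vxstar))$. Expanding $\alpha$-strong convexity of $f$ around $G(\vxstar)$ and evaluating at $\vxstar$, then invoking the first-order optimality condition $\langle \nabla f(G(\vxstar)), \vxstar - G(\vxstar)\rangle \geq 0$ (which holds since $G(\vxstar)$ minimizes $f$ over $\calX$ and $\vxstar \in \calX$), I obtain $f(\vxstar) - f(G(\vxstar)) \geq \tfrac{\alpha}{2}\|\vxstar - G(\vxstar)\|_2^2$.

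Next I would produce an \emph{upper} bound on the same gap. Expanding $\alpha$-strong convexity of $f$ around $\vxstar$ and evaluating at $G(\vxstar)$ gives $f(G(\vxstar)) \geq f(\vxstar) + \langle \nabla f(\vxstar), G(\vxstar) - \vxstar\rangle + \tfrac{\alpha}{2}\|G(\vxstar) - \vxstar\|_2^2$, and substituting the approximate-stability inequality with the test point $\vx = G(\vxstar) \in \calX$, namely $\langle \nabla f(\vxstar), G(\vxstar) - \vxstar\rangle \geq -\epsilon$, yields $f(\vxstar) - f(G(\vxstar)) \leq \epsilon - \tfrac{\alpha}{2}\|\vxstar - G(\vxstar)\|_2^2$.

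Chaining the two bounds gives $\tfrac{\alpha}{2}\|\vxstar - G(\vxstar)\|_2^2 \leq \epsilon - \tfrac{\alpha}{2}\|\vxstar - G(\vxstar)\|_2^2$, hence $\alpha\|\vxstar - G(\vxstar)\|_2^2 \leq \epsilon$, and taking square roots finishes the proof. There is no serious obstacle: the only points meriting a line of care are that the expectation preserves strong convexity and that gradient and expectation may be interchanged so that $\nabla f(\vxstar)$ coincides with the quantity appearing in \Cref{def:approx-perfstab} (both standard under the assumptions on $\ell$), and that the cruder one-sided estimate $f(G(\vxstar)) \le f(\vxstar)$ would yield only the weaker bound $\sqrt{2\epsilon/\alpha}$, so the two-sided argument above is precisely what delivers the stated constant.
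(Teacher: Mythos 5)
Your proof is correct and yields the stated constant $\sqrt{\epsilon/\alpha}$, and it relies on exactly the same two ingredients as the paper's proof: the $\epsilon$-stability inequality tested at $\vx = G(\vxstar)$ and the first-order optimality of $G(\vxstar)$ tested at $\vxstar$, combined with $\alpha$-strong convexity of $f(\vx) = \E_{\vz \sim \calD(\vxstar)}[\ell(\vx;\vz)]$. The only difference is presentational: the paper combines these bounds at the gradient level via strong monotonicity, $\langle \nabla f(\vxstar) - \nabla f(G(\vxstar)), \vxstar - G(\vxstar)\rangle \geq \alpha \|\vxstar - G(\vxstar)\|_2^2$, whereas you chain the two function-value strong-convexity inequalities through the optimality gap $f(\vxstar) - f(G(\vxstar))$, which is the same computation in unfolded form.
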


\begin{proof}
Since $\ell(\vx; \vz)$ is $\alpha$-strongly convex in $\vx$ for any $\vz$, the expected loss $f(\vx) = \mathbb{E}_{\vz \sim \calD(\vxstar)}[\ell(\vx; \vz)]$ is also $\alpha$-strongly convex. By strong convexity, we have for any $\vx, \vx' \in \calX$,
\begin{equation}
    \label{eq:another-strong-conv}
    \langle \nabla f(\vx) - \nabla f(\vx'), \vx - \vx' \rangle \geq \alpha \|\vx - \vx'\|_2^2.
\end{equation}
We now write
\begin{equation*}
    \langle \nabla f(\vxstar) - \nabla f(G(\vxstar)), \vxstar - G(\vxstar) \rangle = \langle \nabla f(\vxstar), \vxstar - G(\vxstar) \rangle - \langle \nabla f(G(\vxstar)), \vxstar - G(\vxstar) \rangle.
\end{equation*}
We bound each term separately. First, since $\vxstar$ is an $\epsilon$-performatively stable point, we have
\begin{equation*}
    \langle \vx - \vxstar, \nabla f(\vxstar) \rangle \geq -\epsilon \quad \forall \vx \in \calX.
\end{equation*}
Taking $\vx = G(\vxstar)$, we get $\langle \nabla f(\vxstar), \vxstar - G(\vxstar) \rangle \leq \epsilon$. Second, since $G(\vxstar) = \argmin_{\vx \in \calX} f(\vx)$, the first-order optimality condition yields $\langle \vx - G(\vxstar), \nabla f(G(\vxstar)) \rangle \geq 0$ for all $\vx \in \calX$, which in turn implies $-\langle \nabla f(G(\vxstar)), \vxstar - G(\vxstar) \rangle \leq 0$. Combining these bounds with~\eqref{eq:another-strong-conv},
\begin{align*}
    \alpha \|\vxstar - G(\vxstar)\|_2^2 &\leq \langle \nabla f(\vxstar) - \nabla f(G(\vxstar)), \vxstar - G(\vxstar) \rangle \leq \epsilon,
\end{align*}
and the proof follows.
\end{proof}

\begin{lemma}
    \label{lemma:direc2}
If $\ell(\vx; \vz)$ satisfies $\|\nabla_{\vx} \ell(\vx; \vz) - \nabla_{\vx} \ell(\vx'; \vz)\|_2 \leq \beta \|\vx - \vx'\|_2$, any point $\vxstar \in \calX$ such that $\|\vxstar - G(\vxstar)\|_2 \leq \epsilon$, where $G$ is the RRM map (\Cref{def:RRM}) is $\epsilon'$-performatively stable (\Cref{def:approx-perfstab}) with $\epsilon' = \epsilon \left( D \beta + \|\nabla f(G(\vxstar)) \|_2\right)$, where $D$ is the $\ell_2$ diameter of $\calX$.
\end{lemma}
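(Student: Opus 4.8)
The plan is to work with the smoothed objective $f(\vx) \defeq \E_{\vz \sim \calD(\vxstar)}[\ell(\vx; \vz)]$, exactly as in the proof of~\Cref{lemma:direc1}. By definition of the RRM map, $G(\vxstar) = \argmin_{\vx \in \calX} f(\vx)$, and by the hypothesis on $\ell$ (taking expectations over $\vz \sim \calD(\vxstar)$ and interchanging gradient and expectation under the standard regularity assumptions) the gradient $\nabla f$ is $\beta$-Lipschitz; in particular $\|\nabla f(\vxstar) - \nabla f(G(\vxstar))\|_2 \leq \beta \|\vxstar - G(\vxstar)\|_2 \leq \beta \epsilon$. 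The goal is to lower bound $\langle \vx - \vxstar, \nabla f(\vxstar) \rangle$ for an arbitrary $\vx \in \calX$, since $\nabla f(\vxstar) = \E_{\vz \sim \calD(\vxstar)}[\nabla_\vx \ell(\vxstar; \vz)]$ is precisely the quantity appearing in~\Cref{def:approx-perfstab}.

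First I would split the inner product by inserting $\nabla f(G(\vxstar))$:
\begin{equation*}
    \langle \vx - \vxstar, \nabla f(\vxstar) \rangle = \langle \vx - \vxstar, \nabla f(G(\vxstar)) \rangle + \langle \vx - \vxstar, \nabla f(\vxstar) - \nabla f(G(\vxstar)) \rangle.
\end{equation*}
For the second term, Cauchy--Schwarz together with the Lipschitz bound and $\|\vx - \vxstar\|_2 \leq D$ gives $|\langle \vx - \vxstar, \nabla f(\vxstar) - \nabla f(G(\vxstar)) \rangle| \leq D\beta\epsilon$.

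For the first term, I would further write $\vx - \vxstar = (\vx - G(\vxstar)) + (G(\vxstar) - \vxstar)$. The first-order optimality condition at the constrained minimizer $G(\vxstar)$ of $f$ over $\calX$ yields $\langle \vx - G(\vxstar), \nabla f(G(\vxstar)) \rangle \geq 0$, while $\langle G(\vxstar) - \vxstar, \nabla f(G(\vxstar)) \rangle \geq -\|G(\vxstar) - \vxstar\|_2 \|\nabla f(G(\vxstar))\|_2 \geq -\epsilon\|\nabla f(G(\vxstar))\|_2$ by Cauchy--Schwarz and the hypothesis $\|\vxstar - G(\vxstar)\|_2 \leq \epsilon$. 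Adding the two pieces gives $\langle \vx - \vxstar, \nabla f(G(\vxstar)) \rangle \geq -\epsilon\|\nabla f(G(\vxstar))\|_2$. Combining with the bound on the second term, $\langle \vx - \vxstar, \nabla f(\vxstar) \rangle \geq -\epsilon(D\beta + \|\nabla f(G(\vxstar))\|_2) = -\epsilon'$ for every $\vx \in \calX$, which is exactly the claimed $\epsilon'$-performative stability.

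There is no serious obstacle here; the two points that need a line of care are the interchange of $\nabla_\vx$ and $\E_{\vz \sim \calD(\vxstar)}$ (valid under the regularity conditions already implicit in the setup) and the use of the variational characterization of the projected/constrained minimizer $G(\vxstar)$. The argument mirrors that of~\Cref{lemma:direc1}, with the roles of the two endpoints swapped; the appearance of the extra $D\beta$ factor (absent in~\Cref{lemma:direc1}) is the price of converting a fixed-point-gap guarantee into a first-order VI-type guarantee, which requires passing the gradient from $G(\vxstar)$ back to $\vxstar$ via smoothness.
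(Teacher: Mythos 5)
Your proof is correct and follows essentially the same route as the paper's: the identical decomposition inserting $\nabla f(G(\vxstar))$, first-order optimality of $G(\vxstar)$ for the cross term, and Cauchy--Schwarz plus $\beta$-smoothness for the remainder. The extra remark on interchanging $\nabla_\vx$ and $\E_{\vz \sim \calD(\vxstar)}$ is a harmless refinement the paper leaves implicit.
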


\begin{proof}
Let $f(\vx) = \mathbb{E}_{\vz \sim \calD(\vxstar)}[\ell(\vx; \vz)]$. We have
\begin{align}
    \label{align:goal}
    \langle \vx - \vxstar, \nabla f(\vxstar) \rangle &= \langle \vx - \vxstar, \nabla f(G(\vxstar)) \rangle + \langle \vx - \vxstar, \nabla f(\vxstar) - \nabla f(G(\vxstar)) \rangle.
\end{align}
For the first term, we write
\begin{align*}
    \langle \vx - \vxstar, \nabla f(G(\vxstar)) \rangle &= \langle \vx - G(\vxstar), \nabla f(G(\vxstar)) \rangle + \langle G(\vxstar) - \vxstar, \nabla f(G(\vxstar)) \rangle \\
    &\geq \langle G(\vxstar) - \vxstar, \nabla f(G(\vxstar)) \rangle,
\end{align*}
by the first-order optimality condition of $G(\vxstar)$. Thus,
\begin{equation*}
    \langle \vx - \vxstar, \nabla f(G(\vxstar)) \rangle \geq - \|\vxstar - G(\vxstar)\|_2 \|\nabla f(G(\vxstar))\|_2 \geq -\epsilon \|\nabla f(G(\vxstar))\|_2.
\end{equation*}
For the second term in the right-hand side of~\eqref{align:goal}, we use $\beta$-Lipschitz continuity of $\nabla f$ to get
\begin{equation*}
    \langle \vx - \vxstar, \nabla f(\vxstar) - \nabla f(G(\vxstar)) \rangle \geq - \|\vx - \vxstar\|_2 \|\nabla f(\vxstar) - \nabla f(G(\vxstar))\|_2 \geq - \epsilon D \beta,
\end{equation*}
and the proof follows.
\end{proof}

To conclude, we provide the proof of~\Cref{prop:endogenous}. We begin by stating a variation of~\Cref{def:strat-class} that incorporates classifier-dependent costs. It also forces the Jury to select a classifier from a specified set, and similarly for the Contestant.

\begin{definition}[Strategic classification with endogenous costs]
    \label{def:strat-class-end}
    Strategic classification is a game played between the \emph{Jury} and the \emph{Contestant}. Let $\calD$ be a distribution over a population $X$, $c : X \times X \to \R_{\geq 0}$ a cost function, and $h$ a target classifier.
    \begin{enumerate}
        \item The Jury first publishes a classifier $f_j : X \to \{0, 1\}$ selected from a set of classifiers $\{f_1, \dots, f_m \}$.
        \item The Contestant selects a deviation $\Delta_i : X \to X$ selected from a set of deviations $\{\Delta_1, \dots, \Delta_n \}$.
    \end{enumerate}
    The payoff to the Jury is $\Pr_{\vx \sim \calD} [ h(\vx) = f_j(\Delta_i(\vx))]$ and the payoff to the Contestant is $\E_{\vx \sim \calD} [ f_j(\Delta_i(\vx)) - c_j(\vx, \Delta_i(\vx))]$. 
\end{definition}

While the Jury has now to decide among a small set of possible classifiers (as opposed to $2^{|X|}$), we show that computing a performatively stable point is \PPAD-hard.

\endogcosts*

Our reduction makes use of the hardness of Nash equilibria in two-player games.

\begin{definition}[Nash equilibrium]
    For a two-player game $(\mat{A}, \mat{B})$, with $\mat{A}, \mat{B} \in \R^{n \times m}$, an \emph{$\epsilon$-Nash equilibrium} is a point $(\vx, \vy) \in \Delta^n \times \Delta^m$ such that
    \[
        \langle \vx, \mat{A} \vy \rangle \geq \langle \hvx, \mat{A} \vy \rangle - \epsilon \text{ and } \langle \vx, \mat{B} \vy \rangle \geq \langle \vx, \mat{B} \hvy \rangle - \epsilon \quad \forall (\hvx, \hvy) \in \Delta^n \times \Delta^m.
    \]
\end{definition}

\begin{proof}[Proof of~\Cref{prop:endogenous}]
We reduce from the \PPAD-hard problem of computing a Nash equilibrium of a win-loss game~\citep{Abbott05:Complexity}. Let $(\mat{A}, \mat{B}) \in \{0, 1\}^{n \times m}$ be the payoff matrices for the row player and column player, respectively. We construct an instance of strategic classification with endogenous costs as follows. The population domain $X = \{\vx_1, \dots, \vx_n\}$ comprises $n$ distinct points. The underlying distribution $\calD$ is assumed to be uniform over $X$. The target classifier is $h(\vx) = 0$ for all $\vx \in X$.

The Jury chooses a classifier from the set $\{f_1, \dots, f_m\}$. We associate each classifier $f_j$ with the $j$th column of the game matrices. We define the classifier's outputs to have the opposite label from the column player's utility matrix: $f_j(\vx_i) = 1 - \mat{B}_{ij}$ for all $\vx_i \in X$. Moreover, because costs are endogenous, the Jury's choice of strategy $j$ also induces a specific cost function $c_j$.

The Contestant chooses a deviation from the set $\{\Delta_1, \dots, \Delta_n\}$. We restrict these to be \emph{constant deviations}, where $\Delta_i$ maps every input point to the specific point $\vx_i$ corresponding to the $i$th row. Now, the payoff to the Jury is $\Pr_{\vx \sim \calD} [ h(\vx) = f(\Delta(\vx))]$. As a result, under a classifier $f_j$ and a deviation $\Delta_i$, the utility of the Jury reads 
\[
    \Pr_{\vx \sim \calD} [ h(\vx) = f_j(\Delta_i(\vx))] = \Pr_{\vx \sim \calD} [ f_j(\Delta_i(\vx)) = 0] = \bbm{1} [ f_j(\vx_i) = 0] = \mat{B}_{i j},
\]
so this matches the utility of the column player in the original game. To ensure the Contestant (approximately) maximizes $\mat{A}$, we consider the following star metric for each $j$. Each point $\vx_i$ is connected to a point $\vxstar$. The cost to go from $\vx_i$ to $\vxstar$ is defined as $2 M - M \mat{A}_{i j}$ for a large parameter $M \gg 1$. Thus, $c_j( \vx_{i}, \vx_{i'} ) = c_{j}(\vx_i, \vxstar) + c_{j}(\vx_{i'}, \vxstar)$.

Under a classifier $f_j$ and a deviation $\Delta_i$, the payoff to the Contestant is 
\begin{equation*}
    \E_{\vx \sim \calD} [f_j(\Delta_i(\vx)) - c_j(\vx, \Delta_i(\vx))] = f_j(\vx_i) - \frac{1}{n} \sum_{i'=1}^n c_j(\vx_i, \vx_{i'}) = f_j(\vx_i) - \frac{1}{n} \sum_{i'=1}^n c_j(\vx_{i'}, \vxstar) - c_j(\vx_{i}, \vxstar).
\end{equation*}
The second term above does not depend on the deviation $\Delta_i$, so it is strategically irrelevant. Specifically, we end up with the two-player game with utilities $\langle \vx, \mat{A}' \vy \rangle + \langle \vec{c}, \vy \rangle$ and $\langle \vx, \mat{B} \vy \rangle$, where $\vec{c} = ( - \frac{1}{n} \sum_{i'=1}^n c_j(\vx_{i'}, \vxstar) )_{j=1}^m $, and $\mat{A}' = \mat{F} + M \mat{A}_{i j} - 2 M \mat{1}$ for $\mat{F}_{i j} = f_j(\vx_i)$; $\mat{1}$ denotes the all-ones matrix. Let $(\vx, \vy) \in \Delta^n \times \Delta^m$ be Nash equilibrium of this game, which corresponds to a perfomatively stable point of the strategic classification instance. We have $\langle \vx, \mat{A}' \vy \rangle \geq \langle \hvx, \mat{A}' \vy \rangle$ for any $\hvx \in \Delta^n$, which implies $\langle \vx, \mat{F} \vy \rangle + M \langle \vx, \mat{A} \vy \rangle \geq \langle \hvx, \mat{F} \vy \rangle + M \langle \hvx, \mat{A} \vy \rangle $. Since $\mat{F}_{i j} \in \{0, 1\}$, it follows that $\langle \vx, \mat{A} \vy \rangle \geq \langle \hvx, \mat{A} \vy \rangle - \frac{1}{M}$ for any $\hvx \in \Delta^n$. As a result, $(\vx, \vy)$ is a $1/M$-Nash equilibrium of the original two-player game.
\end{proof}

\end{document}